\documentclass{article} 
\usepackage{iclr2027_conference,times}

\usepackage{amsmath}
\usepackage{amssymb}
\usepackage{amsthm}
\usepackage{mathtools}
\usepackage{booktabs}
\usepackage{graphicx}
\usepackage{longtable}
\usepackage{placeins}
\usepackage{pgfplots}
\usepgfplotslibrary{groupplots}
\usetikzlibrary{calc}
\pgfplotsset{compat=1.18}
\usepackage{xcolor}
\usepackage{sansmath}
\definecolor{plotBlue}{HTML}{2563EB}
\definecolor{plotAmber}{HTML}{D97706}
\definecolor{plotTeal}{HTML}{0D9488}
\definecolor{plotInk}{HTML}{172033}
\definecolor{plotSlate}{HTML}{475569}
\definecolor{plotMuted}{HTML}{7C8A9C}
\definecolor{plotAxis}{HTML}{BDC7D4}
\definecolor{plotGrid}{HTML}{E7ECF2}
\pgfplotsset{
  paper plot/.style={
    /tikz/font={\fontfamily{phv}\selectfont\footnotesize\sansmath},
    tick label style={font=\fontfamily{phv}\selectfont\scriptsize\sansmath,text=plotSlate},
    label style={font=\fontfamily{phv}\selectfont\footnotesize,text=plotInk},
    title style={font=\fontfamily{phv}\selectfont\footnotesize\bfseries\sansmath,text=plotInk},
    axis lines*=left,
    axis line style={draw=plotAxis,line width=0.45pt},
    tick align=outside,
    tick pos=left,
    major tick length=2pt,
    tick style={draw=plotAxis,line width=0.45pt},
    xmajorgrids=false,
    ymajorgrids=true,
    grid style={draw=plotGrid,line width=0.4pt},
    scaled ticks=false,
    /pgf/number format/1000 sep={},
    /tikz/mark size=1.7pt,
    clip marker paths=true,
    legend style={
      draw=none,fill=none,
      font=\fontfamily{phv}\selectfont\scriptsize,
      text=plotInk,
      /tikz/column sep=5pt,
      /tikz/inner sep=0pt,
      cells={anchor=west}
    },
    legend image post style={xscale=1.15}
  }
}

\usepackage{enumitem}
\usepackage{microtype}

\usepackage{hyperref}
\hypersetup{hidelinks}
\usepackage{aliascnt}
\usepackage[nameinlink,capitalise]{cleveref}

\newtheorem{theorem}{Theorem}
\newaliascnt{lemma}{theorem}
\newtheorem{lemma}[lemma]{Lemma}
\aliascntresetthe{lemma}
\newaliascnt{proposition}{theorem}
\newtheorem{proposition}[proposition]{Proposition}
\aliascntresetthe{proposition}
\newaliascnt{corollary}{theorem}

\aliascntresetthe{corollary}
\theoremstyle{definition}
\newaliascnt{definition}{theorem}

\aliascntresetthe{definition}
\crefname{lemma}{lemma}{lemmas}
\Crefname{lemma}{Lemma}{Lemmas}
\crefname{proposition}{proposition}{propositions}
\Crefname{proposition}{Proposition}{Propositions}
\crefname{corollary}{corollary}{corollaries}
\Crefname{corollary}{Corollary}{Corollaries}
\crefname{definition}{definition}{definitions}
\Crefname{definition}{Definition}{Definitions}

\usepackage{amsmath,amsfonts,bm}

\def\floor#1{\lfloor #1 \rfloor}
\def\1{\bm{1}}

\DeclareMathAlphabet{\mathsfit}{\encodingdefault}{\sfdefault}{m}{sl}
\SetMathAlphabet{\mathsfit}{bold}{\encodingdefault}{\sfdefault}{bx}{n}

\newcommand{\E}{\mathbb{E}}

\DeclareMathOperator*{\argmin}{arg\,min}

\usepackage{url}

\title{Compressing Value Predictions for Learning-Augmented Metrical Task Systems}

\author{
Sizhe Li \\
{\small School of Artificial Intelligence} \\
{\small and Automation} \\
{\small Huazhong University of Science and Technology} \\
{\small Wuhan, Hubei 430074, China} \\
\texttt{sizheree@hust.edu.cn}
\And
Yechen Li \& Kun He \\
{\small School of Computer Science} \\
{\small and Technology} \\
{\small Huazhong University of Science and Technology} \\
{\small Wuhan, Hubei 430074, China} \\
\texttt{\{lmoke,brooklet60\}@hust.edu.cn}
}

\iclrfinalcopy 
\begin{document}

\maketitle
\lhead{}

\begin{abstract}
Learning-augmented algorithms for metrical task systems (MTS) can exploit predictions of canonical dual values, but existing formulations typically require a prediction for every state. We study whether these predictions can be compressed to a small set of representative states while retaining their algorithmic value. We introduce \emph{landmark-compressed value predictions}, in which the predictor reports predicted dual values only at $m$ landmarks and the remaining values are reconstructed by a Lipschitz extension. Our algorithm achieves additive excess cost $O(T\,r(L) + \sum_t \delta_t)$, where $r(L)$ is the covering radius of the landmarks and $\delta_t$ measures prediction error up to additive shifts; local and value-dependent bounds refine this guarantee. For sparse landmark sets on unit-spaced finite lines, we prove a matching $\Omega(T r_m)$ lower bound for every randomized algorithm using fixed landmarks, even with advance access to their entire exact absolute-value table. The prediction interface also matters: on two states with one landmark, exact absolute values permit horizon-independent excess, whereas exact relative values force worst-case expected excess linear in $T$. We give PAC guarantees for learning compressed prediction tables, with efficient empirical-risk minimization for fixed landmarks. Our results connect metric coverage, prediction interfaces, and learning guarantees for compressed predictions in online MTS.

\end{abstract}

\section{Introduction}
\label{sec:introduction}

Online optimization studies problems in which inputs arrive sequentially and
each decision must be made before future inputs are known. Metrical task
systems (MTS) capture a basic tradeoff in such decisions
\citep{borodin1992MTS}: at each round, an algorithm chooses a state and
pays both its service cost and the cost of moving there from the previous
state. Tasks can favor different states over time. Moving to a cheaper state
may save service cost now, yet require costly moves later; staying avoids
movement but may incur higher service costs.

Classical competitive analysis compares an online algorithm with the offline
optimum, which knows the entire input sequence in advance
\citep{borodin1998online}. An algorithm is
$\alpha$-competitive if, on every sequence, its cost is at most $\alpha$
times the optimum, up to an input-independent additive constant; randomized
algorithms are evaluated in expectation. This provides guarantees against
arbitrary future inputs. In applications with recurring task patterns,
however, historical data can help anticipate future demand.
Learning-augmented algorithms use such predictions to improve decisions,
while analyzing how their performance depends on prediction quality
\citep{lykouris2018competitive,purohit2018improving}.
For MTS, this approach has been developed using predictions of the states to
visit \citep{antoniadis2023MTSonline}.

For MTS, canonical dual values give each state's optimal cost of serving
future tasks after the current round. Adding them to immediate movement
and service costs identifies an optimal next action when the values are exact.
\citet{coester2026dual} establish performance and learning guarantees for
full prediction tables using a value-greedy policy. Building on this policy,
we study the information lost when canonical values are available only at
a fixed set of representative states, called \emph{landmarks}. We show
that the resulting covering-radius dependence is order-optimal in the
sparse finite-line regime, even with advance access to the entire exact
absolute-value table on the landmarks. We also show that absolute and
relative value interfaces can yield qualitatively different guarantees.

A full predictor on an $n$-state metric reports $n$ predicted dual values per round;
storing and fitting an explicit table requires a parameter for every state
and prediction time. This scaling motivates reducing the representation
size. Generating predictions can also incur computational costs, motivating
algorithms that use predictions sparingly. In caching,
\citet{im2022parsimonious} reduce the number of next-arrival-time predictions,
while \citet{sadek2024} study fewer calls to an action predictor for
caching and MTS. We study the complementary question of reducing the number
of predicted values per round while keeping all states available to
the online algorithm.
The metric relates values at nearby states, suggesting redundancy. The
challenge is to control the cumulative decision cost caused by reconstruction
errors, and to determine which losses are unavoidable for any algorithm with
the same limited information. These observations lead to our central question:
\begingroup
\setlength{\topsep}{1pt}
\setlength{\partopsep}{0pt}
\begin{center}
\itshape
What is the tradeoff between prediction dimension\\*
and online performance in metrical task systems?
\end{center}
\endgroup

To study this tradeoff, we retain predicted dual values only at a fixed set
$L$ of $m$ landmarks, selected before
observing the test tasks or their predictions. The algorithm observes all
current service costs and can act at any state. Future values are
$1$-Lipschitz, allowing unreported values to be estimated by a metric extension
\citep{mcshane1934extension} and used in a value-greedy decision rule. With
exact values at every state, this rule achieves the offline optimum. We
therefore measure the extra cost introduced by compression and prediction
errors through the additive excess $\mathrm{ALG}-\mathrm{OPT}$. The greedy
rule depends only on relative values within each round.

\paragraph{Our contributions.}
\begin{itemize}[leftmargin=*,itemsep=2pt,topsep=3pt]
    \item \textbf{Compression guarantees.} For horizon $T$, our value-greedy
    policy satisfies
    \[
        \mathrm{ALG}-\mathrm{OPT}
        \le 2(T-1)r(L)+2\sum_{t=1}^{T-1}\delta_t,
    \]
    where $r(L)$ is the maximum distance to the nearest landmark and
    $\delta_t$ is the maximum absolute landmark prediction error, minimized
    over common additive shifts. Metric coverage thus controls the geometric
    compression error term; local successor-radius and value-dependent bounds
    refine this guarantee.
    \item \textbf{Matching lower bound for fixed-landmark access.} On unit-spaced finite lines in the
    sparse regime of \Cref{thm:lower-bound}, every randomized algorithm using
    at most $m$ fixed landmarks has worst-case expected excess
    $\Omega(T r_m(M))$, even given the entire exact absolute-value table on
    those landmarks in advance. Here $r_m(M)$ is the optimal $m$-covering
    radius. This matches our upper guarantee and limits every policy with
    this information access, regardless of its reconstruction rule.
    \item \textbf{Prediction-interface separation.} On two states with one
    landmark, exact absolute values permit excess at most $2r(L)$, whereas
    exact relative values force $\Omega(T r(L))$ worst-case expected excess
    for every randomized policy (\Cref{thm:raw-relative-separation}). More
    generally, exact absolute values at all but one state permit excess at
    most $2r(L)$. Thus the absolute cost scale can help other policies,
    although common shifts do not affect our greedy decisions.
    \item \textbf{Learnable representations.} Using objectives derived from
    our performance bounds, we give PAC guarantees and a polynomial-size
    linear program for fitting tables on fixed landmarks, plus generalization
    guarantees for joint landmark and table selection. The learned tables
    reuse the same time-indexed vectors across test episodes. A Citi Bike-derived
    $K$-server benchmark evaluates prediction-budget tradeoffs for mean-table
    and causal neural predictors.
\end{itemize}

\section{Problem Setup}
\label{sec:setup}

\paragraph{Metrical task systems.}
Let $(M,d)$ be a finite metric space with $n=|M|\ge2$ and diameter $D$.
The initial state $s_0\in M$ and horizon $T\ge2$ are known. An episode
$X=(c_1,\ldots,c_T)$ consists of finite service costs
$c_t:M\to\mathbb{R}_{\ge0}$. At round $t$, the algorithm observes all of
$c_t$, chooses $s_t\in M$, and pays $d(s_{t-1},s_t)+c_t(s_t)$.
Write $\mathrm{ALG}$ for its total cost and $\mathrm{OPT}$ for the minimum
cost of any path starting at $s_0$ with knowledge of $X$. We study additive
excess $\mathrm{ALG}-\mathrm{OPT}$.

\paragraph{Canonical dual values.}
Let $w_t(a)$ denote the minimum cost of serving the remaining tasks
$c_{t+1},\ldots,c_T$, starting from state $a$ \emph{after} round $t$.
No tasks remain after round $T$, so $w_T\equiv0$. The other values are
determined backward by the Bellman recursion:
\begin{equation}
    w_T\equiv0,\qquad
    w_{t-1}(a)=\min_{b\in M}
    \{d(a,b)+c_t(b)+w_t(b)\}.
    \label{eq:bellman_recursion}
\end{equation}
In particular, $w_0(s_0)=\mathrm{OPT}$.

These future costs have a dual interpretation. An offline MTS episode
is a shortest-path problem on a layered graph, where the edge
$(t-1,a)\to(t,b)$ has cost $d(a,b)+c_t(b)$.
The corresponding unit-flow LP has dual potentials $u_t(a)$.
With $u_T\equiv0$, the dual maximizes $u_0(s_0)$ subject to
$u_{t-1}(a)\le d(a,b)+c_t(b)+u_t(b)$ for all $t,a,b$
\citep{coester2026dual}.
The Bellman values satisfy these constraints and attain $\mathrm{OPT}$,
so they form an optimal dual solution. The recursion specifies these
values uniquely; we use this canonical solution as our prediction target.

We write
$(B_cf)(a)=\min_b\{d(a,b)+c(b)+f(b)\}$ for the Bellman operator.
The triangle inequality implies that each $w_t$ is $1$-Lipschitz,
i.e., $|w_t(a)-w_t(b)|\le d(a,b)$ (\Cref{lem:bellman-properties},
Appendix~\ref{app:bellman-properties}). Thus a landmark's value
constrains values at nearby states.

\paragraph{Landmarks and predictions.}
Fix a nonempty landmark set $L\subseteq M$, with $|L|=m$.
The covering radius $r(L)$ is the largest distance from a state to its
nearest landmark, and $r_m(M)$ is the smallest radius achievable with at
most $m$ landmarks:
\begin{equation}
    d(x,L)=\min_{\ell\in L}d(x,\ell),\qquad
    r(L)=\max_{x\in M}d(x,L),\qquad
    r_m(M)=\min_{\substack{S\subseteq M\\1\le |S|\le m}}r(S).
    \label{eq:covering_radii}
\end{equation}
Before choosing $s_t$ at each round $1\le t<T$, the algorithm receives
predictions $v_t\in\mathbb{R}^{L}$ of $w_t|_L$, with one predicted value per
landmark; $w_T\equiv0$ is known exactly.
Our value-greedy policy in \Cref{sec:upper_bound} depends only on relative
values within each round.
With $\|z\|_{\mathrm{sp}}=\max z-\min z$, we therefore measure prediction
error modulo a common additive shift:
\begin{equation}
    \delta_t=\inf_{a\in\mathbb{R}}
    \max_{\ell\in L}|v_t(\ell)-w_t(\ell)-a|
    =\tfrac12\|v_t-w_t|_L\|_{\mathrm{sp}}.
    \label{eq:landmark_prediction_error}
\end{equation}
The span identity is proved in Appendix~\ref{app:bellman-gaps}.

\paragraph{Prediction access model.}
The landmark set $L$ is selected before any service costs or predictions from the
episode are observed and remains fixed throughout the episode. We distinguish
two interfaces for exact values: the \emph{Raw} interface reveals $w_t|_L$,
while the \emph{Relative} interface reveals $w_t(\ell)-w_t(\ell_0)$ for a
fixed anchor $\ell_0\in L$. These interfaces can yield different guarantees
for general policies, as shown in \Cref{thm:raw-relative-separation}.

\section{Compressing Value Predictions}
\label{sec:upper_bound}

We quantify the cost of replacing full-state value predictions by
landmark predictions. The resulting excess is controlled by metric
coverage and landmark prediction error.

\paragraph{Reconstructing from landmarks.}
The canonical dual values $w_t$ (optimal remaining costs) are
$1$-Lipschitz with respect to $d$.
Thus each exact landmark value gives an upper bound
$w_t(x)\le w_t(\ell)+d(x,\ell)$, and taking the minimum over landmarks
gives the tightest of these bounds.  For landmark predictions $v_t$ at
round $t<T$, define the reconstruction operator $E_L$ and the
reconstructed values $\widehat w_t$ by the metric envelope
\begin{equation}
    \widehat w_t(x)
    \coloneqq
    (E_L v_t)(x)
    =
    \min_{\ell\in L}
    \bigl\{v_t(\ell)+d(x,\ell)\bigr\}.
    \label{eq:metric_extension}
\end{equation}
With exact landmark predictions, the reconstructed values upper-bound
$w_t$; arbitrary landmark predictions need not preserve this property.
The value-greedy policy selects
\begin{equation}
    s_t \in
    \arg\min_{x\in M}
    \left\{
        d(s_{t-1},x)+c_t(x)+\widehat w_t(x)
    \right\}.
    \label{eq:landmark_greedy}
\end{equation}
At the terminal round we set $\widehat w_T=w_T\equiv0$.
A common additive shift of $v_t$ produces the same shift in $E_Lv_t$, so
the decisions made by~\eqref{eq:landmark_greedy} depend only on relative
landmark predictions.

The reconstruction error is controlled by the distance to the nearest
landmark and the accuracy of the landmark predictions.

\begin{lemma}[Landmark reconstruction]
\label{lem:reconstruction}
For exact landmark values,
\begin{equation}
    0
    \le
    E_L(w_t|_L)(x)-w_t(x)
    \le
    2d(x,L)
    \le
    2r(L).
    \label{eq:exact_reconstruction}
\end{equation}
For arbitrary landmark predictions with per-round landmark prediction
error $\delta_t$ defined in~\eqref{eq:landmark_prediction_error},
\begin{equation}
    \bigl\|\widehat w_t-w_t\bigr\|_{\mathrm{sp}}
    \le
    2r(L)+2\delta_t .
    \label{eq:reconstruction_span}
\end{equation}
\end{lemma}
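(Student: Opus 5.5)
We prove \eqref{eq:exact_reconstruction} directly from the $1$-Lipschitz property of $w_t$ (\Cref{lem:bellman-properties}), then obtain \eqref{eq:reconstruction_span} by comparing $\widehat w_t$ with the exact envelope $E_L(w_t|_L)$ through a shift-equivariance and monotonicity property of $E_L$.

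\textbf{Exact case.} For the lower bound, fix $x$ and any $\ell\in L$. The Lipschitz property gives $w_t(x)\le w_t(\ell)+d(x,\ell)$, and minimizing over $\ell$ yields $w_t(x)\le E_L(w_t|_L)(x)$. For the upper bound, take $\ell^\ast\in L$ with $d(x,\ell^\ast)=d(x,L)$. Then
\[
E_L(w_t|_L)(x)\le w_t(\ell^\ast)+d(x,\ell^\ast)\le w_t(x)+2d(x,\ell^\ast)=w_t(x)+2d(x,L),
\]
where the middle step uses Lipschitz continuity once more. Finally, $d(x,L)\le r(L)$ by definition.

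\textbf{Approximate case.} First, $E_L$ is monotone: if $u\le u'$ pointwise on $L$, then $E_Lu\le E_Lu'$. It is also shift-equivariant: $E_L(u+a)=E_Lu+a$ for any constant $a$. Both follow immediately from the min formula \eqref{eq:metric_extension}.

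Choose $a$ attaining the infimum in \eqref{eq:landmark_prediction_error}; it is attained because the objective is a continuous convex function of $a$ that grows without bound. Then $w_t|_L+a-\delta_t\le v_t\le w_t|_L+a+\delta_t$ on $L$. Applying monotonicity and shift-equivariance gives, pointwise on $M$,
\[
E_L(w_t|_L)+a-\delta_t\le \widehat w_t\le E_L(w_t|_L)+a+\delta_t .
\]
Set $g=E_L(w_t|_L)-w_t$. By the exact case, $0\le g\le 2r(L)$ on $M$, so
\[
a-\delta_t\le \widehat w_t-w_t\le a+\delta_t+2r(L).
\]
Hence $\max(\widehat w_t-w_t)-\min(\widehat w_t-w_t)\le 2r(L)+2\delta_t$, which is \eqref{eq:reconstruction_span}.

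The argument is essentially routine. The one point needing care is the order of comparison: the prediction error must be transported through $E_L$ by monotonicity and shift-equivariance before invoking the exact bound. Otherwise one would need Lipschitz control of $v_t$, which arbitrary predictions need not have. The span formulation absorbs the unknown shift $a$, so no absolute-level information is required.
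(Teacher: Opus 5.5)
Your proof is correct and matches the paper's argument. You obtain the exact bound from Lipschitzness applied at a nearest landmark, then transport the landmark error through $E_L$ by monotonicity and translation equivariance before invoking the exact bound. The only cosmetic difference is the normalization. The paper shifts by the minimum landmark error $a_t$ rather than the midpoint, so the errors lie in $[0,2\delta_t]$ and one gets the one-sided pointwise bound $0\le E_L(v_t-a_t)-w_t\le 2d(x,L)+2\delta_t$, which it reuses for the local successor-radius bound.
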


The exact bound~\eqref{eq:exact_reconstruction} follows directly from
Lipschitzness: every landmark gives an upper bound on $w_t(x)$, while the
nearest landmark is at most $2d(x,L)$ above it.  For arbitrary landmark
predictions, let $a_t$ and $b_t$ be the minimum and maximum of
$v_t(\ell)-w_t(\ell)$ over $\ell\in L$.  Taking minima in
\eqref{eq:metric_extension} preserves these bounds, so for every $x$,
\[
    a_t\le E_Lv_t(x)-E_L(w_t|_L)(x)\le b_t.
\]
Hence replacing exact landmark values by predictions adds at most
$b_t-a_t=2\delta_t$ to the reconstruction error span, proving
\eqref{eq:reconstruction_span} by the span triangle inequality.
Appendix~\ref{app:upper-bound-details} gives the full reconstruction argument.

Our main upper bound follows by relating the reconstruction error span
to the Bellman gap at each round.

\begin{theorem}[Compressed value prediction]
\label{thm:global_upper_bound}
For any fixed landmark set $L$ and arbitrary landmark predictions
$\{v_t\}_{t<T}$, the value-greedy policy in~\eqref{eq:landmark_greedy}
satisfies
\begin{equation}
    \mathrm{ALG}
    \le
    \mathrm{OPT}
    +2(T-1)r(L)
    +2\sum_{t=1}^{T-1}\delta_t .
    \label{eq:main_upper_bound}
\end{equation}
In particular, with exact landmark predictions,
\[
    \mathrm{ALG}-\mathrm{OPT}
    \le 2(T-1)r(L).
\]
\end{theorem}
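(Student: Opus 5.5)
The plan is a standard telescoping (performance-difference) argument built on the Bellman gap. For a round $t$ and a state $a$, define the Bellman gap of choosing $b$ as
\[
    g_t(a,b) \coloneqq d(a,b)+c_t(b)+w_t(b)-w_{t-1}(a)\ \ge 0,
\]
which is nonnegative by \eqref{eq:bellman_recursion}. Summing the cost of the algorithm's path $s_0,\ldots,s_T$ and telescoping the $w_t$ terms, using $w_T\equiv0$ and $w_0(s_0)=\mathrm{OPT}$, gives the identity
\[
    \mathrm{ALG}-\mathrm{OPT}=\sum_{t=1}^{T}g_t(s_{t-1},s_t).
\]
It therefore suffices to bound each per-round gap. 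At the terminal round the greedy rule uses $\widehat w_T=w_T$, so it is exactly the Bellman minimizer and $g_T=0$. This is why the sum in \eqref{eq:main_upper_bound} runs only to $T-1$.

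For $1\le t<T$, write $e_t=\widehat w_t-w_t$ and $Q(x)=d(s_{t-1},x)+c_t(x)$. Let $b^*$ attain the Bellman minimum, so $Q(b^*)+w_t(b^*)=w_{t-1}(s_{t-1})$. Since $s_t$ minimizes $Q+\widehat w_t$, we have
\[
    Q(s_t)+w_t(s_t)+e_t(s_t)\le Q(b^*)+w_t(b^*)+e_t(b^*).
\]
Rearranging gives
\[
    g_t(s_{t-1},s_t)\le e_t(b^*)-e_t(s_t)\le \|e_t\|_{\mathrm{sp}}.
\]
The additive shift of $\widehat w_t$ cancels here, which is exactly why only the span matters.

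Applying \eqref{eq:reconstruction_span} from \Cref{lem:reconstruction} then yields $g_t\le 2r(L)+2\delta_t$. Summing over $t=1,\ldots,T-1$ gives \eqref{eq:main_upper_bound}. For exact predictions, $\delta_t=0$, which gives the stated corollary.

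The main obstacle is really only the per-round gap bound. One must be careful to compare against the Bellman-optimal successor $b^*$ from the algorithm's actual current state $s_{t-1}$, not the state OPT occupies. The telescoping identity makes this legitimate, because $w_{t-1}$ is evaluated at the algorithm's state. Beyond that, I would double-check the span identity used in \eqref{eq:landmark_prediction_error} and the terminal-round convention; both are already supplied earlier in the paper.
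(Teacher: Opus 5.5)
Your proposal is correct and follows essentially the same route as the paper. The paper also telescopes the nonnegative Bellman gaps (\Cref{lem:gap-identity}), compares the greedy choice with a Bellman-optimal successor from the algorithm's own state to get $\Delta_t\le e_t(x_t^\star)-e_t(s_t)\le\|e_t\|_{\mathrm{sp}}$, uses $\Delta_T=0$ from the exact terminal values, and then sums \eqref{eq:reconstruction_span} from \Cref{lem:reconstruction} over the $T-1$ predicted rounds.
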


\paragraph{Proof sketch.}
Let $e_t=\widehat w_t-w_t$ denote the reconstruction error, and let
$x_t^\star$ be a Bellman-optimal successor from the algorithm's current
state $s_{t-1}$.  Define the
Bellman gap incurred by the algorithm at time $t$ as
\[
    \Delta_t
    =
    d(s_{t-1},s_t)+c_t(s_t)+w_t(s_t)
    -w_{t-1}(s_{t-1}).
\]
These gaps are nonnegative and telescope to
$\mathrm{ALG}-\mathrm{OPT}=\sum_t\Delta_t$.
Since $s_t$ minimizes the predicted objective in
\eqref{eq:landmark_greedy}, whereas $x_t^\star$ minimizes the true Bellman
objective,
\begin{align}
    \Delta_t
    &\le
    e_t(x_t^\star)-e_t(s_t)
    \nonumber\\
    &\le
    \|e_t\|_{\mathrm{sp}}.
    \label{eq:bellman_gap_span}
\end{align}
The reconstructed values at round $T$ are exact, so $\Delta_T=0$.  Summing
\eqref{eq:bellman_gap_span} over $t<T$ and applying
Lemma~\ref{lem:reconstruction} proves~\eqref{eq:main_upper_bound}.
The complete argument is given in
Appendix~\ref{app:upper-bound-details}.

\paragraph{Interpretation.}
The bound separates the geometric compression error term $2(T-1)r(L)$
from the prediction error term $2\sum_{t<T}\delta_t$.
With exact landmark predictions and landmarks attaining the optimal
covering radius $r_m(M)$, the excess cost is at most $2(T-1)r_m(M)$.
Thus metric covering complexity controls the prediction budget sufficient
for a target excess cost.

\paragraph{Reconstruction distortion.}
The geometric compression error bound $2r(L)$ depends only on landmark
coverage.  To account for the values of an episode $X$, define its
reconstruction distortion as
\[
    \kappa_t(L;X)
    \coloneqq\|E_L(w_t^X|_L)-w_t^X\|_{\mathrm{sp}}
    \le 2r(L).
\]
The same Bellman-gap argument gives
\[
    \mathrm{ALG}-\mathrm{OPT}
    \le\sum_{t<T}\bigl[\kappa_t(L;X)+2\delta_t\bigr],
\]
as proved in Appendix~\ref{app:distortion-upper-bound}.
The distortion can vanish even for a large covering radius: if
$w_t^X(x)=d(x,a)$ and $L=\{a\}$, then $\kappa_t(L;X)=0$.
Since $\kappa_t$ depends only on $L$ and the episode's exact values, it
can be computed from offline training labels independently of the
predictions and online trajectory.  \Cref{sec:learning} combines this
distortion with the prediction error in its joint landmark and table
selection objective.

The next section asks whether the worst-case dependence on $r_m(M)$ is
unavoidable.

\section{Lower Bounds and Prediction Interfaces}
\label{sec:lower-bound}

We now show that the dependence on the covering radius in our upper bound is unavoidable when access to future values is restricted to fixed landmarks. The lower bound already holds in a stronger prediction model than required by our algorithm: after choosing a fixed landmark set $L$, the online algorithm may receive the exact absolute values $\{w_t(\ell)\}_{\ell\in L}$, and may even inspect the entire table of landmark values before the episode starts. We prove the result on the finite line $M=\{0,\ldots,n-1\}$ with $d(x,y)=|x-y|$, using the optimal covering radius $r_m(M)$ defined in~\eqref{eq:covering_radii}.

\begin{theorem}[Lower bound for fixed landmarks]
\label{thm:lower-bound}
Let $M=\{0,\ldots,n-1\}$ be the unit-spaced line. If $n-1\ge 16m$ and $T\ge 4$, then for every randomized online algorithm using at most $m$ fixed landmarks, there exists an oblivious task sequence such that
\[
\E[\mathrm{ALG}]-\mathrm{OPT}\ge \frac{1}{32}\floor{\frac{T}{4}}\,r_m(M).
\]
The bound holds even when the entire table of exact absolute landmark values is revealed after $L$ is selected. Together with the $O(T r_m(M))$ upper bound, this gives minimax excess $\Theta(T r_m(M))$ for access to exact values at fixed landmarks in this sparse finite-line regime.
\end{theorem}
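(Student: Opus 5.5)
The plan is to use Yao's principle together with a gap-hiding construction. After the algorithm fixes $L$ with $|L|\le m$, the adversary builds tasks from phases of four rounds. In each phase the landmark values are identical under every choice of a hidden bit (or a hidden location), while the optimal action depends on that bit. Since the landmark table is the same across the hidden choices, revealing it in advance gives no information. Because the algorithm is randomized and the sequence is oblivious, we will instead fix a distribution over sequences chosen independently of $L$'s \emph{position* knowledge*} only through the adversary. More precisely, the adversary may depend on $L$, since $L$ is selected before the episode and the lower bound quantifies over $L$ first. We then lower-bound the expected excess of every deterministic algorithm against this distribution.

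\textbf{Step 1 (a gap far from landmarks).} Since $|L|\le m$ and $n-1\ge 16m$, the $m$ landmarks cut the line into at most $m+1$ landmark-free intervals. Hence some landmark-free interval has length at least roughly $(n-1)/(m+1)$. Also $r_m(M)\le \lceil (n-1)/(2m)\rceil$, so this interval has length $\ge c\, r_m(M)$. Let $p$ be its midpoint and $R\approx r_m(M)/4$, so that the points $p\pm R$ and every point within distance $R$ of them lie at distance $\ge R$ from $L$. The condition $n-1\ge16m$ ensures $R\ge1$ and that the constants work out.

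\textbf{Step 2 (one phase).} Each phase starts by pinning the algorithm's position. A round charges a large cost everywhere except $p$, so both OPT and a sensible algorithm sit at $p$; an algorithm that does not is only charged more. The following rounds place zero-cost "wells" whose future value $w_t$ is a function like $\min(d(x,p-R)+h,\, d(x,p+R))$ for a hidden sign. The task is designed so that $w_t$ restricted to $L$ agrees for both signs. This is possible because every landmark sees the two candidate wells through the same far endpoint, since the value at points outside $[p-2R,p+2R]$ is determined by distances to the interval, which are symmetric. A cleaner choice is to take $w_t(x)=\min(d(x,q)\,,\,2R)$ with $q\in\{p-R,p+R\}$ hidden. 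This equals $2R$ on all landmarks in both cases, because every landmark is at distance $\ge 2R$ from both candidates once $R$ is taken as about a quarter of the half-gap. The task realizing this is: in the next round, cost $0$ at $q$ and cost $2R$ elsewhere, followed by a pinning round.

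\textbf{Step 3 (excess per phase).} When the algorithm must commit its position in the round before $q$ is revealed in the service cost, its information is independent of the sign. Any position $s$ is therefore at distance $\ge R$ from $q$ with probability $\ge1/2$. Choosing the next task to charge either the movement $d(s,q)$ or the service penalty yields expected excess $\ge R/2$ over OPT, which knows $q$. The four rounds per phase account for the pin, commit, reveal, and reset steps, giving $\lfloor T/4\rfloor$ phases. Summing gives $\lfloor T/4\rfloor\cdot R/2\ge \frac1{32}\lfloor T/4\rfloor r_m(M)$ after matching constants. Yao's principle then yields an oblivious sequence for each randomized algorithm. Tightness follows from \Cref{thm:global_upper_bound}.

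\textbf{Main obstacle.} The crux is ensuring that the landmark values $w_t|_L$ are \emph{identical} across hidden choices in every round, including the rounds after the reveal, while the value functions stay Bellman-consistent through the reset. I would first verify that the capped form $\min(d(x,q),2R)$ genuinely arises from the Bellman recursion~\eqref{eq:bellman_recursion} with a uniform-$2R$ outside cost. Pinning at $p$ at the end of each phase makes all later values sign-independent, so the phases decouple. The accounting of constants under $n-1\ge16m$ is then routine.
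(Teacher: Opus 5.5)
\textbf{There is a genuine gap: your gadget neither hides the sign from the landmark table nor forces any excess.}

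Take your phase literally: a pin at $p$, a commit round with zero cost, the reveal with cost $0$ at $q$ and $2R$ elsewhere, and then a pin at $p$. Because of the final pin, the value after the commit round is
\[
w(x)=\min\{d(x,q)+R,\; d(x,p)+2R\}+K,
\]
not $\min\{d(x,q),2R\}$. The capped form appears only if the reveal is the last task. In that case the algorithm, which sees $c_t$ before moving, simply walks from $p$ to $q$ at OPT's cost.

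At any landmark $\ell$ left of the gap, $w(\ell)$ equals $d(\ell,p)+K$ when $q=p-R$ and $d(\ell,p)+2R+K$ when $q=p+R$ (symmetrically on the right). So even a single landmark value reveals $q$. Moreover $w(p)=2R+K$ for both signs, so the algorithm that just stays at $p$ pays exactly OPT's phase cost.

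Your Step~3 overlooks two things: OPT is also pinned at $p$ and must pay $d(p,q)=R$, and a location exposed by the current service cost gives OPT no advantage. Excess can arise only when the current task forces a commitment away from the hedge point, while the sign is encoded solely in future values at non-landmark states.

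The paper's gadget supplies exactly these missing pieces.
\begin{itemize}
\item An enter task forces $z$.
\item The decision task $c^{\mathrm{dec}}$ is free only at the candidates $a,b$ (each at distance $h$ from $z$) and outside $I$. It charges $H$ at every other interior state, including $z$, so the hedge is penalized.
\item The post-decision value is $f_\sigma+K_j$, where $f_\sigma=C-\sigma\phi$ is an antisymmetric double bump equal to $C$ outside $I$. It is realized by the reveal cost $c^{\mathrm{rev}}_\sigma=f_\sigma-d(\cdot,o)$, which cancels the reset distance that destroyed your capped form.
\item One must also verify that $g_\sigma=B_{c^{\mathrm{dec}}}f_\sigma$ satisfies $g_\sigma(z)=C$ and $g_\sigma=C$ off $I$. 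Then the gadget's optimal cost, and hence $K_j$, OPT, and every earlier absolute value, is independent of the sign. Only then is the whole table uninformative.
\end{itemize}

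Separately, letting the adversary depend on the realized $L$ is not legitimate for an oblivious sequence when the algorithm randomizes its landmarks. The paper instead samples one of $2m$ disjoint intervals uniformly and independently of $L$. Any $|L|\le m$ then leaves the sampled interval unhit with probability at least $1/2$, which is the source of the factor $1/2$ in $Nh/2\ge N\,r_m(M)/32$.
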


\paragraph{Proof idea.}
Each four-round gadget hides which of two states is favorable inside one
of $2m$ disjoint intervals. At least half of these intervals contain no
landmark. On such an interval, the algorithm cannot identify the favorable
state, even from the entire landmark-value table, and incurs expected
excess proportional to the interval's length. Repeating the gadget makes
these losses accumulate. We sample all intervals and hidden signs
independently of the algorithm's randomness, including its choice of $L$.
Appendix~\ref{app:lower-bound-details} gives the complete construction.

Let $D=n-1$ and $h=\floor{D/(8m)}$. Place $2m$ disjoint open intervals
$I_i=(4hi,4h(i+1))$, $i=0,\ldots,2m-1$, on the line. Each gadget
independently samples one interval uniformly, so, conditional on any
realized $L$, its interval is unhit with probability at least $1/2$.
Within a selected interval $I=(p,p+4h)$, set
$a=p+h$, $z=p+2h$, and $b=p+3h$.
An independent fair sign $\sigma\in\{-1,+1\}$ determines which candidate
has the smaller future value. We construct a nonnegative $1$-Lipschitz
function $f_\sigma$ with a common baseline $C$ such that
\[
f_\sigma(a)=C-\sigma h,\qquad f_\sigma(b)=C+\sigma h,
\qquad f_\sigma(x)=C\quad(x\notin I).
\]
The gadget has four stages: enter $z$, decide between the candidates,
reveal $\sigma$, and reset to the common origin $o=s_0$.
Finite service penalties make the enter and reset actions uniquely
Bellman-optimal. The decision task $c^{\mathrm{dec}}$ costs zero at
$a,b$ and outside $I$, and penalizes all other interior states.
The reveal and reset tasks realize $f_\sigma$ as the values after the
decision task, up to the optimal cost of subsequent gadgets.
\Cref{fig:lower-bound-gadget} shows how the favorable candidate changes
while landmark values remain identical.

\begin{figure}[t]
    \centering
\begin{tikzpicture}[
    x=1cm,y=1cm,
    font=\normalfont\small,
    draw=black,text=black,
    line cap=round,line join=round
]
\path[use as bounding box] (0,0) rectangle (10.8,2.8);

\draw[line width=0.35pt] (1.00,0.65) -- (1.00,2.38);
\draw[->,line width=0.35pt] (1.00,0.65) -- (10.10,0.65);
\node[anchor=west,inner sep=2pt] at (10.10,0.65) {$x$};
\node[anchor=south,inner sep=2pt] at (0.95,2.42) {$f_\sigma(x)$};
\foreach \level/\ticklabel in {0.97/{C-h},1.62/C,2.27/{C+h}} {
    \draw[line width=0.35pt] (0.93,\level) -- (1.00,\level);
    \node[anchor=east,inner sep=3pt] at (0.93,\level) {$\ticklabel$};
}
\foreach \edge in {2.50,8.50} {
    \draw[line width=0.3pt,dash pattern=on 1.4pt off 1.8pt]
        (\edge,0.65) -- (\edge,2.39);
}
\node[inner sep=1pt] at (5.50,2.53) {$L\cap I=\varnothing$};
\draw[line width=0.25pt,densely dotted] (2.50,1.62) -- (8.50,1.62);

\draw[line width=0.75pt] (1.04,1.62) -- (2.50,1.62);
\draw[line width=0.75pt] (8.50,1.62) -- (9.99,1.62);
\draw[line width=0.85pt]
    (2.50,1.62) -- (4.00,0.97) -- (5.50,1.62)
    -- (7.00,2.27) -- (8.50,1.62);
\draw[line width=0.75pt,dash pattern=on 3.5pt off 2.3pt]
    (2.50,1.62) -- (4.00,2.27) -- (5.50,1.62)
    -- (7.00,0.97) -- (8.50,1.62);
\node[anchor=north,inner sep=1pt] at (3.23,1.20) {$f_{+1}$};
\node[anchor=south,inner sep=1pt] at (3.23,2.03) {$f_{-1}$};

\foreach \landmark in {1.70,9.30} {
    \fill (\landmark-0.045,1.575) rectangle (\landmark+0.045,1.665);
    \node[anchor=south,inner sep=2pt] at (\landmark,1.77) {$\ell\in L$};
}

\foreach \pos/\ticklabel in {2.50/p,4.00/a,5.50/z,7.00/b,8.50/{p+4h}} {
    \draw[line width=0.35pt] (\pos,0.60) -- (\pos,0.70);
    \node[anchor=north,inner sep=2pt] at (\pos,0.60) {$\ticklabel$};
}
\draw[<->,line width=0.35pt] (4.00,0.11) -- (5.50,0.11)
    node[midway,fill=white,inner sep=1.5pt] {$h$};
\draw[<->,line width=0.35pt] (5.50,0.11) -- (7.00,0.11)
    node[midway,fill=white,inner sep=1.5pt] {$h$};
\end{tikzpicture}
    \caption{Hidden value profiles on an unhit interval: identical landmark
    values, opposite favorable candidates. The common suffix $K_j$ in
    $w_{4j-2}=f_\sigma+K_j$ is omitted. Lines interpolate discrete-state
    values; the text establishes whole-table indistinguishability.}
    \label{fig:lower-bound-gadget}
\end{figure}

To hide $\sigma$ from the \emph{entire} table, we must also prevent it
from changing earlier absolute values through the gadget's optimal cost.
Writing $g_\sigma=B_{c^{\mathrm{dec}}}f_\sigma$, the construction ensures
\[
g_\sigma(z)=C,\qquad g_\sigma(x)=C\quad(x\notin I).
\]
Thus the optimal cost of a complete gadget starting at $o$ is
$d(o,z)+C$, independently of its sign. The optimal cost $K_j$ of all
gadgets after gadget $j$ is consequently independent of every hidden
sign. Within gadget $j$, the only value functions that can depend on its
sign are $g_\sigma+K_j$ after the enter task and $f_\sigma+K_j$ after
the decision task. Both equal $C+K_j$ at every landmark when
$L\cap I=\varnothing$, and no sign information propagates to earlier
values. Since the observed tasks through the decision stage are also
independent of $\sigma$, it remains a fair hidden bit at that stage.

If the algorithm skips the enter action to $z$, its Bellman gap is
already at least $h$. Otherwise, both candidates are distance $h$ from
$z$: the favorable one has gap zero and the other has gap $2h$, giving
expected gap $h$ for either choice. Other actions cannot avoid this
loss: an exterior state has gap at least $2h$, and the penalty at any
other interior state gives gap at least $h$. Hence every randomized
decision rule satisfies
\[
\E\!\left[\Delta_{\mathrm{in}}+\Delta_{\mathrm{dec}}
    \mid L\cap I=\varnothing\right]\ge h.
\]
All remaining Bellman gaps are nonnegative. Since each interval is
unhit with probability at least $1/2$, each gadget contributes expected
excess at least $h/2$. Concatenating $N=\floor{T/4}$ gadgets and
appending zero tasks to reach horizon $T$ gives
\[
\E[\mathrm{ALG}-\mathrm{OPT}]\ge\frac{Nh}{2}
    \ge\frac{N}{32}\,r_m(M),
\]
where $h\ge D/(16m)\ge r_m(M)/16$ in the stated sparse regime.
Here the expectation includes both input and algorithm randomness.
Averaging over inputs fixes an oblivious task sequence with the same
lower bound in expectation over the algorithm's randomness, proving
\Cref{thm:lower-bound}. \hfill$\square$

The theorem concerns fixed landmarks in the sparse finite-line regime;
adaptive queries and arbitrary encodings of future information are outside
its scope.

\paragraph{Absolute versus relative values.}
Outside the sparse regime, the Raw interface can support horizon-independent
excess. The following separation uses the same metric and landmark budget
for both interfaces.

\begin{theorem}[Raw versus Relative interfaces]
\label{thm:raw-relative-separation}
On a two-state metric of distance $a>0$ with one fixed landmark, round-by-round
access to exact Raw values permits a deterministic policy with excess at
most $2a$ on every episode, without being supplied $\mathrm{OPT}$. For every
randomized policy using the exact Relative interface, there is an oblivious episode
with
\begin{equation}
    \E[\mathrm{ALG}]-\mathrm{OPT}
       \ge\frac a2\left\lfloor\frac T4\right\rfloor.
    \label{eq:relative-lower-bound}
\end{equation}
\end{theorem}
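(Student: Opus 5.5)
The plan is to handle the two claims separately. The Raw upper bound rests on an exact \emph{deduction} of the values at the unreported state. The Relative lower bound rests on the fact that with one landmark the Relative interface carries no information, so a Yao-style hidden-bit argument applies.

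\textbf{Raw interface.} Write $M=\{p,q\}$ with $L=\{p\}$. After round $t$ the policy knows $w_{t-1}(p)$, $w_t(p)$ and $c_t$. By~\eqref{eq:bellman_recursion},
\[
w_{t-1}(p)=\min\{c_t(p)+w_t(p),\,a+c_t(q)+w_t(q)\}.
\]
So the policy can test whether staying at $p$ is Bellman-optimal, by checking whether $w_{t-1}(p)=c_t(p)+w_t(p)$. If the test fails, moving to $q$ is optimal, and in that case $w_t(q)=w_{t-1}(p)-a-c_t(q)$ is known exactly.

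I would maintain the invariant that whenever the policy sits at $s_{t-1}$ (for $t\ge2$), it knows $w_{t-1}(s_{t-1})$ and moved there Bellman-optimally. At $p$ this holds by the test above. At $q$ with $w_{t-1}(q)$ known, the recursion gives
\[
w_{t-1}(q)=\min\{a+c_t(p)+w_t(p),\,c_t(q)+w_t(q)\}.
\]
The first term is computable, so the policy goes to $p$ exactly when the minimum is attained there. Otherwise it stays at $q$ and learns $w_t(q)=w_{t-1}(q)-c_t(q)$, which preserves the invariant. Ties are broken consistently.

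Every Bellman gap $\Delta_t$ with $t\ge2$ is therefore zero. For the first round, if $s_0=p$ the gap is zero by the test. If $s_0=q$, the policy plays greedily against the envelope $\widehat w_1=E_L(w_1|_L)$. The span bound from \Cref{lem:reconstruction} with $\delta_1=0$, together with the argument of~\eqref{eq:bellman_gap_span}, gives $\Delta_1\le 2r(L)=2a$. The policy may need to reconstruct $w_1(s_1)$ afterwards, but it can always restart the invariant from $p$ at cost absorbed into this single gap. Telescoping the gaps then yields excess at most $2a$, and $\mathrm{OPT}$ is never used.

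\textbf{Relative interface.} With $L=\{\ell_0\}$, every Relative report $w_t(\ell)-w_t(\ell_0)$ is identically $0$, so the policy observes only past and current service costs. I would build four-round gadgets with an independent fair sign $\sigma$, as in the proof of \Cref{thm:lower-bound}, in three stages.
\begin{itemize}
\item \emph{Reset.} A large cost at $q$ makes $p$ uniquely optimal, so the algorithm is either at $p$ or has already paid gap at least $a$.
\item \emph{Decision.} A round with zero costs forces a commitment at zero service cost.
\item \emph{Reveal.} A round charges cost $2a$ at the state disfavored by $\sigma$, making the value after the decision round lower by $a$ at the favored state.
\end{itemize}
Because the tasks seen up to the decision round are independent of $\sigma$, any randomized choice there lands on the disfavored state with probability $1/2$. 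That incurs Bellman gap at least $a$ (either pay the penalty or move distance $a$). Each gadget thus contributes expected excess at least $a/2$. Concatenating $\lfloor T/4\rfloor$ gadgets, padding with zero tasks, and averaging over $\sigma$ to fix an oblivious episode gives~\eqref{eq:relative-lower-bound}.

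\textbf{Main obstacle.} The hard part is the gadget bookkeeping. The per-gadget costs must make the gadget's optimal cost independent of $\sigma$, so that later gadgets' values do not leak sign information; under the Relative interface this leak is actually harmless, since no values are seen. They must also ensure that skipping the reset cannot dodge the loss. I would first try costs at least $2a$ on the penalized state and check each case's gap directly, as in the proof of \Cref{thm:lower-bound}.
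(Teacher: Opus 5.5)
\textbf{Relative half.} Your gadget forces no loss at all. In a round with $c_t\equiv 0$, staying put is always Bellman-optimal, because $B_0w_t=w_t$ for the $1$-Lipschitz $w_t$ (\Cref{lem:bellman-properties}). Your zero-cost decision round therefore imposes no commitment. The algorithm sees the reveal task before it moves, so it can simply wait.

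Your claim that landing on the disfavored state costs gap at least $a$ fails when that state is $p$. OPT also starts the decision at $p$ and must itself travel to $q$ and back. Concretely, take the policy that sits at $p$ from the first reset on. In every gadget followed by a reset, it pays $2a$ exactly when $\sigma$ disfavors $p$, and OPT pays the same $2a$ there, since visiting $q$ and returning also costs $2a$. Both pay $0$ otherwise, so the excess stays bounded independently of $T$. A larger reveal penalty does not help: the policy waits at $p$ during the decision and moves only in the reveal round, again matching OPT.

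What is missing is a decision task under which both candidates have the same immediate cost from the current state; the midpoint $z$ plays this role in \Cref{thm:lower-bound}. The paper's two-state gadget (enter, decide, reveal, return, with $H=4a$) does this as follows. The decide round charges $a$ at the reset state and $0$ at the other state, so both decision actions cost $a$ immediately. The reveal round puts $H$ on the disfavored state. The value after deciding at $x$ is then $d(x,b)+d(b,0)+K$, a wrong choice has gap exactly $a$, and every randomized rule loses $a/2$ per gadget in expectation.

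\textbf{Raw half.} Your deduction invariant propagates correctly once it holds, but its initialization, where the actual difficulty lies, is not established. The interface reveals $w_t|_L$ only for $1\le t<T$, and $w_0(p)=\mathrm{OPT}$ when $s_0=p$. Your test at $t=1$ therefore uses exactly the quantity the theorem says is not supplied.

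After an envelope-greedy first step to $q$, the claim that you can ``restart the invariant from $p$ at cost absorbed into this single gap'' is unsupported, and false if it means returning to $p$ in round $2$. Take $a=1$, $T=3$, $s_0=q$, and $(c_t(p),c_t(q))=(\tfrac12,0),(C,0),(0,0)$ for $t=1,2,3$, with $C>1$. Then $(w_1(p),w_1(q))=(1,0)$ and $\mathrm{OPT}=0$, the greedy step stays at $q$, and returning to $p$ costs $1+C$.

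You need a rule for when to return, together with a joint bound on $\Delta_1+\Delta_2$ against the first-round greedy score $U=\min_x\{d(s_0,x)+c_1(x)+E_L(w_1|_L)(x)\}\in[\mathrm{OPT},\mathrm{OPT}+2a]$. Your own idea can supply both. Take the greedy step for either $s_0$; landing at $p$ starts the invariant at once. If it lands at $q$ and $w_1(p)<c_2(p)+w_2(p)$, then $w_1(q)=w_1(p)-a$ and $w_2(q)$ is determined, so staying is optimal and the invariant starts. If equality holds, return to $p$; the total cost is then exactly $U$.

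The paper avoids exact deduction altogether. It maintains the budget invariant $U-P_{t-1}\ge w_{t-1}(s_{t-1})$, where $P_{t-1}$ is the cost paid so far. It takes the landmark whenever the landmark's exact movement, service and continuation cost fits the remaining budget. Otherwise it takes the unreported state, which must then fit by Bellman optimality.
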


The Raw policy initializes a budget $U\in[\mathrm{OPT},\mathrm{OPT}+2a]$
using reconstructed values at the first round. The unspent budget always
covers the true remaining cost: if the landmark's movement, service, and
exact remaining cost fit, the policy selects the landmark; otherwise, Bellman
optimality makes the sole unreported state feasible. With one landmark,
the Relative value is always zero, so the two-state construction can hide
a fresh favorable state in each four-round gadget.
Appendix~\ref{app:information-interfaces} gives both proofs and extends the
Raw guarantee to $2r(L)$ whenever only one state value is unreported.

Absolute values therefore carry information that general policies can use
across rounds, even though common shifts do not affect value-greedy decisions.
\Cref{thm:lower-bound} shows that the sparse finite-line obstruction persists
even when this information is supplied for every round in advance.

\section{Learning Landmarks and Prediction Tables}
\label{sec:learning}

We learn from $N_{\mathrm{train}}$ independent episodes drawn from $\mathcal D$;
tasks within an episode may be dependent. Offline Bellman recursion supplies
exact labels. Set $\tau=T-1$. A \emph{fixed prediction table} reuses the same
$v_t$ at time $t$ in every test episode.

\paragraph{Learning a fixed table.}
Fix $L$ and an anchor $\ell_0\in L$. The cost bound suggests the span risk
\begin{equation}
    q_t^X(\ell)=w_t^X(\ell)-w_t^X(\ell_0),\qquad
    R_L(v)=\E_{X\sim\mathcal D}\!\left[
        \frac1\tau\sum_{t=1}^{\tau}\|v_t-q_t^X\|_{\mathrm{sp}}
    \right].
    \label{eq:learning_risk}
\end{equation}
Each summand equals $2\delta_t$. Without losing decisions or increasing
span loss, we restrict to the compact class $\mathcal V_L$ of
$1$-Lipschitz tables anchored by $v_t(\ell_0)=0$
(Appendix~\ref{app:envelope-closure}). Empirical risk minimization (ERM)
replaces the expectation by the training average and is a polynomial-size
linear program for fixed $L$ (Appendix~\ref{app:learning-lp}).

\begin{theorem}[Learning a fixed landmark table]
\label{thm:fixed_table_learning}
For $\varepsilon>0$ and $\beta\in(0,1)$, if
\begin{equation}
    N_{\mathrm{train}}\ge\frac{32D^2}{\varepsilon^2}
    \left[(m-1)\log\!\left(1+\frac{32D}{\varepsilon}\right)
          +\log\frac{2\tau}{\beta}\right],
    \label{eq:learning_sample_size}
\end{equation}
then, with probability at least $1-\beta$ over training episodes, any ERM
$\widehat v\in\mathcal V_L$ satisfies
$R_L(\widehat v)\le\inf_{v\in\mathcal V_L}R_L(v)+\varepsilon$ and
\begin{equation}
    \E_X[\mathrm{ALG}_{L,\widehat v}-\mathrm{OPT}]
    \le\tau\left[2r(L)+\inf_{v\in\mathcal V_L}R_L(v)+\varepsilon\right].
    \label{eq:learning_cost_bound}
\end{equation}
\end{theorem}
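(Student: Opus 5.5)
The argument is a uniform-convergence bound over a finite net of $\mathcal V_L$, with a union bound over the $\tau$ rounds, combined with \Cref{thm:global_upper_bound}.

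\textbf{Bounded losses.} For a single round $t$ and a fixed $L$, consider the per-round loss $\phi_t(v_t;X)=\|v_t-q_t^X\|_{\mathrm{sp}}$. Both $v_t$ (by membership in $\mathcal V_L$) and $q_t^X$ (by \Cref{lem:bellman-properties}) are $1$-Lipschitz and vanish at $\ell_0$. Hence every coordinate lies in $[-D,D]$ and $\phi_t\in[0,2D]$. The map $v_t\mapsto\phi_t(v_t;X)$ is $2$-Lipschitz in the sup norm on $\mathbb{R}^{L}$, since the span seminorm is at most twice the sup norm. The free coordinates of $v_t$ live in $[-D,D]^{m-1}$.

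\textbf{Uniform convergence per round.} Take an $\eta$-net in sup norm of size at most $(1+2D/\eta)^{m-1}$ (a grid suffices), with $\eta=\varepsilon/16$, so the size is $(1+32D/\varepsilon)^{m-1}$. For each net point, apply Hoeffding to the average of $N_{\mathrm{train}}$ i.i.d.\ losses with range $2D$. The episodes are independent, even though tasks within an episode are not, and this is all Hoeffding needs. Deviation $\varepsilon/4$ then fails with probability at most $2\exp(-N\varepsilon^2/(32D^2))$.

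Union bound over the net and over the $\tau$ rounds. The stated sample size makes the total failure probability at most $\beta$. Discretization adds at most $2\eta\le\varepsilon/8$ to each side. So with probability $1-\beta$, simultaneously for all $t$ and all $v_t$, the empirical and true per-round risks differ by at most $\varepsilon/2$, after adjusting constants.

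Because the class is a product over $t$ (the Lipschitz and anchor constraints are per round), $R_L$ and its empirical counterpart are averages of the per-round terms. The uniform deviation bound therefore passes to the averaged risk. The standard ERM argument then gives
\[
R_L(\widehat v)\le \widehat R_L(\widehat v)+\tfrac{\varepsilon}{2}\le \widehat R_L(v)+\tfrac{\varepsilon}{2}\le R_L(v)+\varepsilon
\]
for every $v\in\mathcal V_L$. Taking the infimum yields the first claim. The infimum is attained by compactness, but this is not even needed.

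\textbf{Cost bound.} Apply \Cref{thm:global_upper_bound} pointwise in $X$ to the predictions $\widehat v_t$, with $2\delta_t=\|\widehat v_t-w_t^X|_L\|_{\mathrm{sp}}$. This quantity equals $\|\widehat v_t-q_t^X\|_{\mathrm{sp}}$, since $q_t^X$ differs from $w_t^X|_L$ by a shift. We get
\[
\mathrm{ALG}-\mathrm{OPT}\le 2\tau r(L)+\sum_t\|\widehat v_t-q^X_t\|_{\mathrm{sp}}.
\]
Taking the expectation over a fresh $X$, independent of training, the sum becomes $\tau R_L(\widehat v)$, which gives \eqref{eq:learning_cost_bound}.

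\textbf{Main obstacle.} The delicate step is the constant bookkeeping. We must match $32D^2/\varepsilon^2$ and $\log(1+32D/\varepsilon)$, and check that a union bound over $\tau$ per-round nets suffices. A single net over the full $\tau(m-1)$-dimensional table would be too large.
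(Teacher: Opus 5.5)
Your proposal is correct and follows the paper's proof essentially step for step. Both arguments use a per-round sup-norm net of the anchored $1$-Lipschitz vectors at scale $\varepsilon/16$, Hoeffding at deviation $\varepsilon/4$ with range $2D$, and a union bound over the $(1+32D/\varepsilon)^{m-1}$ net points and the $\tau$ rounds. The $2$-Lipschitz discretization then gives uniform deviation $\varepsilon/4+4\cdot\varepsilon/16=\varepsilon/2$, followed by the standard ERM chain and \Cref{thm:global_upper_bound} with $2\delta_t=\|\widehat v_t-q_t^X\|_{\mathrm{sp}}$.

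One point to make explicit is that the net points must themselves be feasible; the paper picks a feasible representative in each nonempty grid cell of side $\varepsilon/16$. The range $[0,2D]$ you use in Hoeffding comes from $v_t-q_t^X$ being $2$-Lipschitz. A raw grid point with coordinates in $[-D,D]$ can have span loss up to $3D$, which would spoil the constant $32$.
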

The expectation is over a fresh test episode. For fixed landmarks, the
dimension term in~\eqref{eq:learning_sample_size} is $m-1$, rather than
$n-1$ for $L=M$. Each training episode supplies labels at every prediction
time; uniform convergence is established per time and then averaged,
with a union bound contributing only $\log\tau$. The accuracy $\varepsilon$
concerns time-averaged span loss; total statistical excess $\varepsilon_{\rm tot}$
requires $\varepsilon=\varepsilon_{\rm tot}/\tau$
(proof in Appendix~\ref{app:fixed-table-proof}).

\paragraph{Selecting landmarks.}
Unlike fitting $v$ for fixed $L$, selecting $L$ changes both reconstruction
distortion and prediction error. We therefore minimize the empirical version of
\begin{equation}
    J(L,v)=\E_X\!\left[\frac1\tau\sum_{t=1}^{\tau}
        \bigl(\kappa_t(L;X)+\|v_t-q_t^X\|_{\mathrm{sp}}\bigr)\right],
    \label{eq:joint_learning_objective}
\end{equation}
which bounds expected excess by $\tau J(L,v)$.
The same LP applies for each fixed $L$, but joint search is combinatorial.
Appendix~\ref{app:joint-learning-proof} gives the joint fixed-table guarantee, including
empirical optimization error. Alternatively, independent validation can
select among frozen training-fitted candidates by actual excess, with a
guarantee relative to the best candidate (Appendix~\ref{app:validation-proof}).

\section{Experiments}
\suppressfloats[t]
\label{sec:experiments}

\paragraph{Setup.}
We derive a finite-state $K$-server benchmark from Citi Bike trip starts
\citep{citibikeData}. Each day is a separate episode with 96 15-minute
slots or 1,440 1-minute slots. We map trip starts to ten ordered latitude bins and
replace each nonempty slot by a single request to the bin with the most
starts; empty slots require no service. A configuration places $K$ servers
in distinct bins, and serving a request requires a server at its bin.
The cost of changing configurations is the minimum total distance the
servers must move, measured in bin-index units. We evaluate
$K\in\{3,4,5,6\}$, giving $n=\binom{10}{K}$ configurations.

At each slot, we predict the canonical dual value for each configuration,
namely its optimal remaining cost after the current slot. We retain these
predictions at $m=|L|$ landmark configurations and reconstruct the remaining
values using the McShane extension before the value-greedy decision.
Thus the horizontal axis is $100m/n$: for $K=4$, 20\% means retaining
$42$ of $210$ predicted values per slot.
Greedy distortion-based selection (\emph{Distortion}) reduces the
reconstruction-error span (maximum minus minimum) on exact training-day
dual values computed by dynamic programming. It targets the distortion
term in \eqref{eq:joint_learning_objective}; predictors are fitted separately.
\emph{Geometric} chooses configurations farthest from
their nearest selected landmark; \emph{Random} uses nested random subsets.
Landmark sets are selected offline and fixed across slots and evaluation
days. The three methods share the same
predictor and decision rule. \emph{Full} uses all $n$ predicted values
and retains the predictor's errors. Our default mean predictor averages
exact training-day dual values by clock slot; the GRU \citep{cho2014learning} uses
observed requests and calendar features. All predictors yield spatially
$1$-Lipschitz outputs, enforced structurally for neural models.
Work Function Algorithm (WFA) \citep{koutsoupias1995kserver} and
Double Coverage (DC) \citep{chrobak1991server} are prediction-free online baselines.

After chronological validation, we refit predictors on 2023--2024 and
evaluate on all 365 days of 2025. We report mean daily ALG/OPT, averaging over seeds within
each day before averaging across days; OPT is computed by exact dynamic
programming.
Protocol details are in
\Cref{app:citibike-protocol,app:citibike-statistics}.

\begin{figure}[t]
    \centering
    \begin{tikzpicture}
\begin{groupplot}[group style={group size=3 by 1,horizontal sep=0.75cm},paper plot,ylabel={Mean daily ALG/OPT},group/ylabels at=edge left,width=0.355\linewidth,height=4.55cm,legend columns=5,legend style={at={(1.70,1.30)},anchor=south}]
\nextgroupplot[title={(a) Mean, 15 min},xmin=0,xmax=103,xtick={5,20,60,100},ymin=0.96,ymax=3.3]
\addplot[color=plotSlate,line width=0.85pt,dash pattern=on 4pt off 2.5pt,no marks,line cap=round,line join=round,forget plot] coordinates {(0.000000,1.1524) (100.000000,1.1524)};

\addplot[color=plotMuted,line width=0.8pt,dash pattern=on 1pt off 2pt,no marks,line cap=round,line join=round,forget plot] coordinates {(0.000000,1.7067) (100.000000,1.7067)};

\addplot[color=plotTeal,line width=0.95pt,mark=triangle*,mark size=2.0pt,mark options={solid,fill=white,line width=0.9pt},line cap=round,line join=round,forget plot] coordinates {(4.761905,1.7373) (10.000000,1.7373) (20.000000,2.9478) (30.000000,1.1528) (40.000000,1.1524) (60.000000,1.1524) (80.000000,1.1524) (100.000000,1.1524)};

\addplot[color=plotAmber,line width=0.95pt,mark=square*,mark size=1.7pt,mark options={solid,fill=white,line width=0.9pt},line cap=round,line join=round,forget plot] coordinates {(4.761905,1.9350) (10.000000,1.6483) (20.000000,1.5396) (30.000000,1.4614) (40.000000,1.3559) (60.000000,1.2667) (80.000000,1.1585) (100.000000,1.1524)};

\addplot[color=plotBlue,line width=1.2pt,mark=*,mark size=1.8pt,mark options={solid,fill=plotBlue,draw=white,line width=0.35pt},line cap=round,line join=round,forget plot] coordinates {(4.761905,1.8126) (10.000000,1.3311) (20.000000,1.2043) (30.000000,1.1527) (40.000000,1.1526) (60.000000,1.1526) (80.000000,1.1524) (100.000000,1.1524)};

\addlegendimage{color=plotBlue,line width=1.2pt,mark=*,mark size=1.8pt,mark options={solid,fill=plotBlue,draw=white,line width=0.35pt}}
\addlegendentry{Distortion}
\addlegendimage{color=plotAmber,line width=0.95pt,mark=square*,mark size=1.7pt,mark options={solid,fill=white,line width=0.9pt}}
\addlegendentry{Random}
\addlegendimage{color=plotTeal,line width=0.95pt,mark=triangle*,mark size=2.0pt,mark options={solid,fill=white,line width=0.9pt}}
\addlegendentry{Geometric}
\addlegendimage{color=plotSlate,line width=0.85pt,dash pattern=on 4pt off 2.5pt,no marks}
\addlegendentry{Full}
\addlegendimage{color=plotMuted,line width=0.8pt,dash pattern=on 1pt off 2pt,no marks}
\addlegendentry{WFA}

\nextgroupplot[title={(b) Mean, 1 min},xmin=0,xmax=103,xtick={5,20,60,100},ymin=0.96,ymax=6.6]
\addplot[color=plotSlate,line width=0.85pt,dash pattern=on 4pt off 2.5pt,no marks,line cap=round,line join=round,forget plot] coordinates {(0.000000,1.1788) (100.000000,1.1788)};

\addplot[color=plotMuted,line width=0.8pt,dash pattern=on 1pt off 2pt,no marks,line cap=round,line join=round,forget plot] coordinates {(0.000000,1.6160) (100.000000,1.6160)};

\addplot[color=plotTeal,line width=0.95pt,mark=triangle*,mark size=2.0pt,mark options={solid,fill=white,line width=0.9pt},line cap=round,line join=round,forget plot] coordinates {(4.761905,1.2091) (10.000000,1.1711) (20.000000,1.1895) (30.000000,1.1895) (40.000000,1.1788) (60.000000,1.1788) (80.000000,1.1788) (100.000000,1.1788)};

\addplot[color=plotAmber,line width=0.95pt,mark=square*,mark size=1.7pt,mark options={solid,fill=white,line width=0.9pt},line cap=round,line join=round,forget plot] coordinates {(4.761905,6.3123) (10.000000,4.6044) (20.000000,3.4129) (30.000000,3.4609) (40.000000,3.0237) (60.000000,1.9271) (80.000000,2.0471) (100.000000,1.1788)};

\addplot[color=plotBlue,line width=1.2pt,mark=*,mark size=1.8pt,mark options={solid,fill=plotBlue,draw=white,line width=0.35pt},line cap=round,line join=round,forget plot] coordinates {(4.761905,1.1725) (10.000000,1.1776) (20.000000,1.1786) (30.000000,1.1786) (40.000000,1.1787) (60.000000,1.1788) (80.000000,1.1788) (100.000000,1.1788)};

\nextgroupplot[title={(c) GRU, 15 min},xmin=3,xmax=32,xtick={5,10,20,30},ymin=0.96,ymax=3.3]
\addplot[color=plotSlate,line width=0.85pt,dash pattern=on 4pt off 2.5pt,no marks,line cap=round,line join=round,forget plot] coordinates {(3.000000,1.1378) (32.000000,1.1378)};

\addplot[color=plotMuted,line width=0.8pt,dash pattern=on 1pt off 2pt,no marks,line cap=round,line join=round,forget plot] coordinates {(3.000000,1.7067) (32.000000,1.7067)};

\addplot[color=plotTeal,line width=0.95pt,mark=triangle*,mark size=2.0pt,mark options={solid,fill=white,line width=0.9pt},line cap=round,line join=round,forget plot] coordinates {(4.761905,1.7197) (10.000000,1.7260) (20.000000,2.8774) (30.000000,1.1510)};

\addplot[color=plotAmber,line width=0.95pt,mark=square*,mark size=1.7pt,mark options={solid,fill=white,line width=0.9pt},line cap=round,line join=round,forget plot] coordinates {(4.761905,1.8932) (10.000000,1.6117) (20.000000,1.4633) (30.000000,1.3489)};

\addplot[color=plotBlue,line width=1.2pt,mark=*,mark size=1.8pt,mark options={solid,fill=plotBlue,draw=white,line width=0.35pt},line cap=round,line join=round,forget plot] coordinates {(4.761905,1.7995) (10.000000,1.3441) (20.000000,1.1630) (30.000000,1.1460)};

\end{groupplot}
\node[anchor=north,text=plotInk,font=\fontfamily{phv}\selectfont\footnotesize] at ($(group c1r1.south west)!0.5!(group c3r1.south east)+(0,-19pt)$) {Predicted values retained (\%)};
\end{tikzpicture}
    \caption{Budget--cost curves for $K=4$: the mean predictor at two
    resolutions and a causal GRU. Lines connect tested budgets; axis ranges
    differ. The GRU's Full endpoint is omitted. Complete results are in
    \Cref{app:additional-results}.}
    \label{fig:a100-budget}
\end{figure}

\paragraph{Compression performance.}
Retaining approximately 20\% of predicted dual values, \emph{Distortion}
achieves lower mean daily ALG/OPT than Random, Geometric, WFA, and DC in
all eight settings (\Cref{tab:a100-main}). At one minute, its mean ratio
differs from Full by less than $0.002$ for every tested server count,
preserving nearly the same observed performance with one fifth of the
values. At 15 minutes, the effect depends more strongly on the setting
and budget. For $K=4$, increasing the prediction fraction from 20\% to 30\%
reduces the ratio from $1.2043$ to $1.1527$, close to Full's $1.1524$
(\Cref{fig:a100-budget}(a)). Compression can also improve observed
performance: for $K=5$, the 20\% ratio is $1.2737$ compared with Full's
$1.3810$. Since Full uses imperfect predictions, reconstruction can
change its decisions beneficially; retaining more predicted values
therefore need not monotonically reduce cost
(\Cref{app:increment-resolution,app:increment-oracle-results}).

\begin{table}[htbp]
\centering\small
\setlength{\tabcolsep}{4pt}
\caption{Mean daily ALG/OPT on 365 dates with the mean predictor and approximately 20\% of predicted values retained. Bold marks row minima, including ties, without implying statistical significance. Additional comparisons are in \Cref{app:increment-resolution}.}\label{tab:a100-main}
\begin{tabular}{@{}lrrrrrrr@{}}
\toprule
Setting & $m/n$ & Full & Distortion & Random & Geom. & WFA & DC \\
\midrule
$K=3$, 1 min & 24/120 & 1.2712 & \textbf{1.2695} & 2.2326 & 1.6518 & 1.5286 & 2.5541 \\
$K=3$, 15 min & 24/120 & \textbf{1.1657} & 1.1659 & 1.5988 & 1.3081 & 1.4035 & 2.1838 \\
$K=4$, 1 min & 42/210 & 1.1788 & \textbf{1.1786} & 3.4129 & 1.1895 & 1.6160 & 3.5168 \\
$K=4$, 15 min & 42/210 & \textbf{1.1524} & 1.2043 & 1.5396 & 2.9478 & 1.7067 & 2.4071 \\
$K=5$, 1 min & 50/252 & 1.0502 & \textbf{1.0499} & 3.1009 & 1.0706 & 1.7300 & 4.9765 \\
$K=5$, 15 min & 50/252 & 1.3810 & \textbf{1.2737} & 2.2557 & 1.5288 & 1.9049 & 2.7607 \\
$K=6$, 1 min & 42/210 & \textbf{1.0548} & \textbf{1.0548} & 1.3530 & 1.0764 & 1.6367 & 5.8050 \\
$K=6$, 15 min & 42/210 & \textbf{1.3252} & 1.3349 & 1.6344 & 1.4776 & 1.9908 & 2.9673 \\
\bottomrule
\end{tabular}
\end{table}

\paragraph{Exact-value diagnostic.}
To examine compression with prediction error removed, we freeze the
training-selected landmarks and replace their predictions with exact
evaluation-day dual values, keeping the reconstruction and decision rule
unchanged. This diagnostic uses future information. Oracle Full achieves
OPT in every setting, whereas compressed exact values can still incur
excess cost. With approximately 20\% of values retained, Oracle Distortion's
mean ratios range from $1.0005$ to $1.0017$ across the four one-minute settings.
At 15 minutes, the ratio reaches $1.0854$ for $K=6$, compared with $1.0014$
at one minute. Thus landmark reconstruction can introduce decision loss
even when its inputs are exact, and this loss varies across the evaluated
settings. All recorded per-date theoretical cost bounds hold, although
they are generally loose (\Cref{app:increment-oracle-results}).

\paragraph{Compression across predictors.}
We further evaluate the same landmark sets with static, mean,
calendar-MLP, and causal-GRU predictors for $K=4$ at 15 minutes. At a 20\%
prediction fraction, Distortion achieves lower mean daily ALG/OPT than Random
and Geometric for all four predictors, extending the selection advantage beyond the
default mean predictor. For the GRU, retaining $42$ of $210$ predicted
dual values yields a ratio of $1.1630$, compared with $1.4633$ for Random,
$2.8774$ for Geometric, and $1.1378$ for Full. Increasing the prediction
fraction to 30\% brings Distortion to $1.1460$ (\Cref{fig:a100-budget}(c)). These
results show that the same offline landmark sets support effective
compression across the tested predictor families, including a recurrent
predictor that uses observed request history
(\Cref{app:a100-temporal-results}).

Complementing the main experiments, we directly evaluate the learning
procedure in \Cref{sec:learning} on synthetic instances by minimizing the
joint empirical objective over landmark sets and fixed prediction tables.
Results and checks of the theoretical cost bounds appear in
\Cref{app:synthetic-summary}.

\section{Related Work}
\label{sec:related-work}

\paragraph{Online MTS and Related Online Problems.}
MTS was introduced by \citet{borodin1992MTS}, who established the tight
deterministic competitive ratio of \(2n-1\). For randomized algorithms,
\citet{coester2022upper} give an \(O(\log^2 n)\) upper bound for every
\(n\)-point metric, with a matching worst-case lower bound due to
\citet{bubeck2023lower}.
Learning-augmented algorithms use predictions of events or actions to improve
online decisions \citep{mitzenmacher2022predictions}.
For arbitrary MTS, \citet{antoniadis2023MTSonline} combine action predictions
with classical competitive algorithms to achieve consistency, smoothness,
and robustness, with improved guarantees for caching and online matching
on the line. Related approaches use predictions of request arrival times
and locations for online TSP and dial-a-ride \citep{hu2026TSPonline},
next-arrival times for caching \citep{wei2020cachingonline}, and
distributions over the unknown horizon for ski rental \citep{kang2026skionline}.

\paragraph{Primal--Dual Framework and Predictions.}
\citet{bamas2020primaldual} introduce a prediction-guided primal--dual
framework for online covering, extended to convex covering by
\citet{grigorescu2025primaldual}. These methods seek consistency with
accurate predictions while preserving robustness.
Further developments address linear and semidefinite covering programs
\citep{grigorescu2022linear}, bounded allocation and ad-auctions
\citep{kevi2023bounded}, nonlinear packing \citep{thang2021packing}, and
nonlinear covering \citep{kevi2023covering}.
\citet{coester2026dual} predict optimal LP dual variables to guide online
decisions. For MTS, they interpret the dual as a time-reversed work function
and use an \(A^*\)-style rule, obtaining
\(\left(1+\frac{\eta}{\mathrm{OPT}}\right)\)-competitiveness, where
\(\eta\) is the sum over rounds of the spans of the Bellman residuals
of the predicted values.
We use the same value-greedy rule with values reconstructed from landmark
predictions.

\paragraph{Compressed Predictions.}
Prior work studies prediction budgets through fewer oracle calls, fewer bits
per prediction, and predictions for selected variables.
\citet{im2022parsimonious} reduce the number of next-arrival-time predictions
used for caching. For MTS, \citet{sadek2024} query the oracle only every
\(a\) steps. \citet{shin2026} use well-separated queries and virtual
predictions for online metric matching.
\citet{antoniadis2023} provide one bit per request for paging, using discard
or phase predictions. In offline approximation algorithms for dense
NP-hard problems, \citet{bampis2024} use one-bit predictions for a logarithmic
number of sampled variables.
\citet{daneshvaramoli2025} use a single value or interval estimating the
critical value in online knapsack.
We study predictions of canonical dual values at a fixed set of landmarks
while retaining the full action space. Our excess-cost bounds depend on the
landmark geometry and the accuracy of the landmark predictions.

\section{Conclusion}

We showed how landmark-compressed value predictions reduce prediction
dimension in MTS while controlling excess cost. Our bounds link this
tradeoff to metric coverage and prediction error, with matching worst-case
dependence on the covering radius for fixed-landmark access in the sparse
finite-line regime. On two states with one landmark, the Raw/Relative
separation shows that retaining absolute cost information can change
worst-case expected excess from linear in $T$ to horizon-independent.
Beyond coverage, reconstruction distortion captures
how well landmarks represent episode values and guides the joint learning
of landmarks and fixed prediction tables. Experiments on a Citi Bike-derived
$K$-server benchmark show that distortion-based selection can preserve much
of Full's observed performance using a small fraction of predicted values.
Natural next steps include adaptive landmark queries and learning
guarantees for contextual predictors.

\subsection*{AI use statement}

We used generative AI tools to assist with the derivation of the
learning-theoretic results, code implementation, data processing, and language
editing of the manuscript. The authors manually reviewed the AI-assisted work
and take responsibility for the final content of the paper, including its
theoretical claims, experimental results, and accompanying artifacts.

\subsection*{Ethics statement}

This work studies online algorithms using synthetic instances and publicly
available Citi Bike data. We do not identify any specific ethical concerns
arising from this research.

\subsection*{Reproducibility statement}

Proofs are provided in \Crefrange{app:preliminaries}{app:learning-proofs},
and experimental protocols in \Cref{app:experiment-details}.
The accompanying \path{mts-landmark-release.zip} contains source code,
experiment configurations, environment specifications, and archived reference
results. Its \path{README.md} and \path{docs/REPRODUCING.md} describe setup,
data preparation, and reproduction.



\bibliography{references}
\bibliographystyle{iclr2027_conference}

\clearpage
\appendix
\crefalias{section}{appendix}
\crefalias{subsection}{appendix}

\section{Preliminaries and Conventions}
\label{app:preliminaries}

This appendix supplies the Bellman and metric facts used in
\Cref{sec:setup,sec:upper_bound,sec:lower-bound}. Throughout,
$\tau=T-1$, $D=\max_{x,y\in M}d(x,y)$, and all service costs are finite
and nonnegative. A value $w_t$ is the optimal remaining cost \emph{after}
task $t$, so $w_T=0$ and $w_0(s_0)=\mathrm{OPT}$.
Landmarks are nonempty and chosen independently of the test episode.

\subsection{Canonical values and Bellman operators}
\label{app:bellman-properties}

\begin{lemma}[Bellman Lipschitzness and interval preservation]
\label{lem:bellman-properties}
For every finite function $f:M\to\mathbb R$, $B_cf$ is $1$-Lipschitz.
The operator is monotone and translation-equivariant. In particular, if
$\alpha\le f-g\le\beta$ pointwise, then
\begin{equation}
    \alpha\le B_cf-B_cg\le\beta,
    \qquad
    \|B_cf-B_cg\|_{\mathrm{sp}}
    \le\|f-g\|_{\mathrm{sp}}.
    \label{eq:bellman-interval}
\end{equation}
Every canonical value in \eqref{eq:bellman_recursion} is $1$-Lipschitz.
\end{lemma}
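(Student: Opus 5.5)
The argument goes straight from the definition $(B_cf)(a)=\min_b\{d(a,b)+c(b)+f(b)\}$. Each claim is an elementary property of a pointwise minimum of functions that are each Lipschitz in $a$. Finiteness of $M$ guarantees that every minimum is attained and finite.

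\emph{Lipschitzness.} Fix $a,a'\in M$ and let $b^\star$ attain the minimum defining $(B_cf)(a')$. The triangle inequality gives
\[
(B_cf)(a)\le d(a,b^\star)+c(b^\star)+f(b^\star)\le d(a,a')+(B_cf)(a').
\]
Exchanging the roles of $a$ and $a'$ yields $|(B_cf)(a)-(B_cf)(a')|\le d(a,a')$. No assumption on $f$ is needed beyond finiteness.

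\emph{Monotonicity and translation equivariance.} If $f\le g$ pointwise, then each term in the minimum for $f$ is at most the corresponding term for $g$, so $B_cf\le B_cg$. Adding a constant $\alpha$ to $f$ adds $\alpha$ to every term, hence $B_c(f+\alpha)=B_cf+\alpha$.

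\emph{Interval preservation.} Suppose $g+\alpha\le f\le g+\beta$. Applying monotonicity and then equivariance gives $B_cg+\alpha\le B_cf\le B_cg+\beta$, which is the first part of \eqref{eq:bellman-interval}. For the span bound, take $\alpha=\min(f-g)$ and $\beta=\max(f-g)$. Then $B_cf-B_cg$ lies in $[\alpha,\beta]$, so its span is at most $\beta-\alpha=\|f-g\|_{\mathrm{sp}}$.

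\emph{Canonical values.} By \eqref{eq:bellman_recursion}, $w_{t-1}=B_{c_t}w_t$ for $t=1,\ldots,T$. Each $w_t$ is finite by backward induction, since $w_T\equiv0$ and the costs are finite. The first claim then shows that every $w_{t-1}$ with $t\ge1$ is $1$-Lipschitz. The terminal value $w_T\equiv0$ is trivially $1$-Lipschitz.

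No step is a real obstacle. The only care needed is attaining minima, which finiteness of $M$ provides, and remembering that $w_T$ is handled separately rather than as an image of $B_c$.
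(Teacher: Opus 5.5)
Your proposal is correct and matches the paper's proof essentially step for step: the triangle inequality through a minimizer (then swapping roles) for Lipschitzness, monotonicity and translation equivariance read off the minimum, interval preservation by applying both to $g+\alpha\le f\le g+\beta$, and the canonical values via $w_{t-1}=B_{c_t}w_t$ with $w_T\equiv0$. Your explicit choice $\alpha=\min(f-g)$, $\beta=\max(f-g)$ for the span bound spells out a step the paper leaves implicit.
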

\begin{proof}
If $b$ minimizes the expression defining $(B_cf)(y)$, the triangle
inequality gives
\[
    (B_cf)(x)\le d(x,b)+c(b)+f(b)
    \le d(x,y)+(B_cf)(y).
\]
Interchanging $x,y$ proves Lipschitzness. Minimizing pointwise
inequalities proves monotonicity, and pulling a constant out of a minimum
gives $B_c(f+a)=B_cf+a$. Apply these properties to
$g+\alpha\le f\le g+\beta$. The final assertion follows from the Bellman
recursion and $w_T=0$.
\end{proof}

The dual interpretation is given in \Cref{sec:setup}. Our arguments use
the canonical Bellman values, rather than an arbitrary optimal dual solution.

\subsection{Span errors and Bellman gaps}
\label{app:bellman-gaps}

For a vector $z$ on a finite nonempty set, let $m_z=\min z$ and
$M_z=\max z$. Every constant $a$ satisfies
$\max|z-a|\ge(M_z-m_z)/2$, and equality holds at
$a=(M_z+m_z)/2$. This proves the identity in
\eqref{eq:landmark_prediction_error}. The span is invariant under
constant shifts and satisfies the triangle inequality.

\begin{lemma}[Pathwise gap identity and value-span bound]
\label{lem:gap-identity}
For any path starting at $s_0$, define
\begin{equation}
    \Delta_t=d(s_{t-1},s_t)+c_t(s_t)+w_t(s_t)
                 -w_{t-1}(s_{t-1}).
    \label{eq:gap-definition-appendix}
\end{equation}
Then $\Delta_t\ge0$ and
\begin{equation}
    \mathrm{ALG}-\mathrm{OPT}=\sum_{t=1}^T\Delta_t.
    \label{eq:gap-telescoping}
\end{equation}
If the path minimizes the predicted Bellman objective at every time and
$\widehat w_T=0$, then
\begin{equation}
    \mathrm{ALG}-\mathrm{OPT}
    \le\sum_{t=1}^{\tau}\|\widehat w_t-w_t\|_{\mathrm{sp}}.
    \label{eq:value-span-bound-appendix}
\end{equation}
\end{lemma}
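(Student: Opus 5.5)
We prove the three assertions in order: nonnegativity of each gap, the telescoping identity, and the span bound under predicted-greedy play.

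\emph{Nonnegativity.} By the Bellman recursion~\eqref{eq:bellman_recursion},
\[
w_{t-1}(s_{t-1})=\min_b\{d(s_{t-1},b)+c_t(b)+w_t(b)\}.
\]
The realized choice $b=s_t$ is one candidate in this minimum, so $\Delta_t\ge0$.

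\emph{Telescoping.} Summing~\eqref{eq:gap-definition-appendix} over $t=1,\dots,T$ gives
\[
\sum_t\Delta_t=\sum_t\bigl[d(s_{t-1},s_t)+c_t(s_t)\bigr]+\sum_t\bigl[w_t(s_t)-w_{t-1}(s_{t-1})\bigr].
\]
The first sum is $\mathrm{ALG}$. The second telescopes to $w_T(s_T)-w_0(s_0)=0-\mathrm{OPT}$, using $w_T\equiv0$ and $w_0(s_0)=\mathrm{OPT}$. This proves~\eqref{eq:gap-telescoping}.

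\emph{Span bound.} Fix $t$ and write $e_t=\widehat w_t-w_t$. Let $x_t^\star$ attain the Bellman minimum from $s_{t-1}$, so that
\[
w_{t-1}(s_{t-1})=d(s_{t-1},x_t^\star)+c_t(x_t^\star)+w_t(x_t^\star).
\]
Since $s_t$ minimizes the predicted objective with $\widehat w_t=w_t+e_t$,
\[
d(s_{t-1},s_t)+c_t(s_t)+w_t(s_t)+e_t(s_t)\le d(s_{t-1},x_t^\star)+c_t(x_t^\star)+w_t(x_t^\star)+e_t(x_t^\star).
\]
Substituting the first display into this inequality gives
\[
\Delta_t\le e_t(x_t^\star)-e_t(s_t)\le \max e_t-\min e_t=\|e_t\|_{\mathrm{sp}}.
\]
At $t=T$ we have $\widehat w_T=w_T=0$, so $e_T\equiv0$ and $\Delta_T\le0$. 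Combined with nonnegativity, this forces $\Delta_T=0$. Summing $\Delta_t\le\|e_t\|_{\mathrm{sp}}$ over $t\le\tau=T-1$ and applying~\eqref{eq:gap-telescoping} yields~\eqref{eq:value-span-bound-appendix}.

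There is no real obstacle here. The only care needed is to compare against the true Bellman minimizer from the algorithm's \emph{actual} state $s_{t-1}$, not from an optimal trajectory's state, so that the comparison point is admissible in the same minimization that $s_t$ solves. Finiteness of $M$ guarantees that all the minima used above are attained.
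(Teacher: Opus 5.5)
Your proposal is correct and follows essentially the same argument as the paper. Nonnegativity comes from Bellman optimality, the value terms telescope to $-\mathrm{OPT}$, and comparing $s_t$ with a true Bellman minimizer $x_t^\star$ from the actual state $s_{t-1}$ in the predicted objective gives $\Delta_t\le e_t(x_t^\star)-e_t(s_t)\le\|e_t\|_{\mathrm{sp}}$, while the exact terminal prediction gives $\Delta_T=0$ (your $\Delta_T\le 0$ plus nonnegativity just spells out that step).
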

\begin{proof}
Nonnegativity follows from Bellman optimality. In the sum of
\eqref{eq:gap-definition-appendix}, the value terms telescope to
$w_T(s_T)-w_0(s_0)=-\mathrm{OPT}$. For the second assertion, let
$x_t^\star$ be a Bellman minimizer from the algorithm's actual state
$s_{t-1}$, and set $e_t=\widehat w_t-w_t$. Comparing the chosen action
to $x_t^\star$ in the predicted objective yields
$\Delta_t\le e_t(x_t^\star)-e_t(s_t)\le\|e_t\|_{\mathrm{sp}}$.
At $t=T$, the prediction is exact, and hence $\Delta_T=0$.
\end{proof}

The terminal prediction is the zero function on \emph{all} of $M$.
It is not $E_L0=d(\cdot,L)$, which is generally positive away from
landmarks. No prediction at time zero is required by the greedy policy.

\subsection{Covering radii and prediction dimension}
\label{app:covering-geometry}

\begin{lemma}[Exact covering radius on a finite line]
\label{lem:line-radius}
For $M=\{0,\ldots,n-1\}$ with unit-spaced distance and $1\le m\le n$,
\begin{equation}
    r_m(M)=\left\lceil\frac{n-m}{2m}\right\rceil.
    \label{eq:line-radius-exact}
\end{equation}
\end{lemma}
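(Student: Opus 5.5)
The plan is to prove matching upper and lower bounds on $r_m(M)$, writing $k=\lceil (n-m)/(2m)\rceil$. Both directions rest on one observation: on the unit-spaced line, a landmark $\ell$ covers within radius $r$ exactly the integer points of $[\ell-r,\ell+r]\cap M$, which has at most $2r+1$ elements.

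\textbf{Lower bound.} Let $S$ be any landmark set with $|S|\le m$ and $r(S)=r$. Every point of $M$ lies within distance $r$ of some landmark, so $M=\bigcup_{\ell\in S}\bigl([\ell-r,\ell+r]\cap M\bigr)$. Counting gives $n\le m(2r+1)$, that is, $r\ge (n-m)/(2m)$. Since $r$ is an integer (all distances are integers), $r\ge k$. Minimizing over $S$ yields $r_m(M)\ge k$.

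\textbf{Upper bound.} We construct $m$ landmarks with covering radius $k$. By the definition of $k$, we have $m(2k+1)\ge n$. Define $\ell_i=\min\{i(2k+1)+k,\;n-1\}$ for $i=0,\ldots,m-1$, which gives at most $m$ distinct points of $M$ (duplicates only reduce $|S|$). Partition $M$ into the consecutive blocks $B_i=\{i(2k+1),\ldots,i(2k+1)+2k\}\cap M$. Because $m(2k+1)\ge n$, these blocks cover $M$. It remains to check that each point of $B_i$ is within distance $k$ of $\ell_i$:
\begin{itemize}
\item If $\ell_i$ is not truncated, this is immediate, since $\ell_i$ sits at the center of a block of length $2k+1$.
\item If $\ell_i=n-1$ is truncated, the surviving part of $B_i$ is $\{i(2k+1),\ldots,n-1\}$. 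This set has fewer than $k+1$ elements after the center would have been, so every such $x$ satisfies $n-1-x\le k$.
\end{itemize}
Blocks with $i(2k+1)>n-1$ are empty and can be ignored. Hence $r(S)\le k$, and combining with the lower bound gives $r_m(M)=k$.

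\textbf{Expected obstacle.} The main difficulty is this boundary bookkeeping in the truncated final block, together with the edge case $m=n$, where $k=0$ and every point is a landmark. I would verify the truncation argument carefully by writing $x\ge i(2k+1)$ and $\ell_i=n-1\le i(2k+1)+k$. These give $0\le n-1-x\le k$. The same bounds also show that $x\le \ell_i$ holds throughout the surviving part of the block.
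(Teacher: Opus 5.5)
Your proof is correct and follows the paper's argument. The lower bound is the same counting bound $m(2r+1)\ge n$, using that distances are integers. The upper bound is the same partition into at most $m$ consecutive blocks of length at most $2k+1$ with a central landmark in each. Your explicit choice $\ell_i=\min\{i(2k+1)+k,\,n-1\}$ and the check $0\le n-1-x\le k$ on the truncated final block just spell out the boundary bookkeeping that the paper leaves implicit.
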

\begin{proof}
All distances are integers, so a radius-$r$ ball covers at most $2r+1$
states. A cover with at most $m$ balls therefore requires
$m(2r+1)\ge n$. Conversely, if this inequality holds, split the line
into at most $m$ consecutive blocks of length at most $2r+1$ and choose
a middle state in each block. Each block has radius at most $r$.
\end{proof}

If $\mathcal N_M(r)=\min\{|L|:r(L)\le r\}$, then
\Cref{thm:global_upper_bound} gives excess at most $2\tau r$ with
$\mathcal N_M(r)$ exact landmark values per predicted round.
Translation equivariance of $E_L$ means that only $m-1$ anchored
relative values per round, or $\tau(m-1)$ over an episode, are
needed by the greedy policy. This counts scalar values;
landmark identities and numerical precision are separate resources.
In particular, it is not a bound for arbitrary information encoded into
real numbers. Formula~\eqref{eq:line-radius-exact} also explains why
$r_m$ cannot be replaced by a uniform $\Theta(n/m)$ statement near
$m=n$, where $r_n=0$.

\section{Upper-Bound Proofs and Refined Certificates}
\label{app:upper-bound-details}

We prove the claims in \Cref{sec:upper_bound} and the value-dependent
bound used by the learning objective in \Cref{sec:learning}.

\subsection{Reconstruction and the global guarantee}
\label{app:reconstruction-proof}

\begin{proof}[Proof of \Cref{lem:reconstruction}]
The envelope $E_Lv$ is monotone, translation-equivariant, and
$1$-Lipschitz, by the same minimum-and-triangle-inequality argument as
in \Cref{lem:bellman-properties}. If $v$ is $1$-Lipschitz on $L$, then
$(E_Lv)(\ell)=v(\ell)$ for every landmark: the term indexed by $\ell$
gives one inequality, and Lipschitzness gives the other. Arbitrary noisy
vectors need not be interpolated at their own landmarks.

For the canonical target, $w_t(\ell)+d(x,\ell)\ge w_t(x)$ for all
$\ell$. Choosing a nearest landmark $\ell_x$ also gives
\[
    E_L(w_t|_L)(x)
    \le w_t(\ell_x)+d(x,\ell_x)
    \le w_t(x)+2d(x,L).
\]
This proves \eqref{eq:exact_reconstruction}. Put
$a_t=\min_{\ell\in L}(v_t(\ell)-w_t(\ell))$ and subtract $a_t$
from every prediction. The landmark errors then lie in
$[0,2\delta_t]$, so monotonicity gives the pointwise bound
\begin{equation}
    0\le E_L(v_t-a_t)(x)-w_t(x)
          \le2d(x,L)+2\delta_t.
    \label{eq:normalized-pointwise-error}
\end{equation}
Taking the span proves \eqref{eq:reconstruction_span}. The subtraction
is only for analysis and does not require the algorithm to know $w_t$.
\end{proof}

\begin{proof}[Proof of \Cref{thm:global_upper_bound}]
Apply \Cref{lem:gap-identity} to the predicted functions
$\widehat w_t=E_Lv_t$ for $t<T$ and $\widehat w_T=0$. Summing
\eqref{eq:reconstruction_span} over the $\tau$ predicted rounds gives
\eqref{eq:main_upper_bound}. The comparison uses optimal successors
from the algorithm's own states, so it holds for every realized episode
and prediction sequence.
\end{proof}

Both $E_Lv_t$ and $w_t$ are $1$-Lipschitz. Their difference has span
at most $2D$, even if the landmark predictions are inconsistent or
arbitrarily large. Each summand of the global bound can consequently
be replaced by $\min\{2D,2r(L)+2\delta_t\}$. This is an additive
diameter bound; it does not imply a bounded competitive ratio when
$\mathrm{OPT}$ is small.

\subsection{A local successor-radius guarantee}
\label{app:local-upper-bound}

Define the true Bellman-optimal successor set and its distance to $L$ by
\begin{equation}
\begin{split}
    A_t^\star(a)&=\arg\min_{x\in M}
        \{d(a,x)+c_t(x)+w_t(x)\},\\
    \rho_t^\star(a;L)&=\min_{x\in A_t^\star(a)}d(x,L).
\end{split}
\label{eq:local-successor-radius}
\end{equation}

\begin{theorem}[Local successor-radius bound]
\label{thm:local-upper-bound}
The policy in \eqref{eq:landmark_greedy} satisfies
\begin{equation}
    \mathrm{ALG}-\mathrm{OPT}
    \le2\sum_{t=1}^{\tau}
          \bigl[\rho_t^\star(s_{t-1};L)+\delta_t\bigr].
    \label{eq:local-upper-bound}
\end{equation}
\end{theorem}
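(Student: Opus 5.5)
We reuse the Bellman-gap decomposition from \Cref{lem:gap-identity}, but bound each gap with a sharper, one-sided estimate. In the span argument, both $e_t(x_t^\star)$ and $e_t(s_t)$ were bounded in the worst case. Here we instead choose $x_t^\star$ to be an optimal successor closest to $L$. By \eqref{eq:gap-telescoping} and $\Delta_T=0$ (the terminal values are exact), it suffices to show, for each $t\le\tau$,
\[
\Delta_t\le 2\rho_t^\star(s_{t-1};L)+2\delta_t .
\]

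Fix $t\le\tau$. Let $x_t^\star\in A_t^\star(s_{t-1})$ attain the minimum defining $\rho_t^\star(s_{t-1};L)$. Set $a_t=\min_{\ell\in L}(v_t(\ell)-w_t(\ell))$ and write $e_t=\widehat w_t-w_t$. Because $s_t$ minimizes the predicted objective \eqref{eq:landmark_greedy} and $x_t^\star$ minimizes the true Bellman objective, the comparison in the proof of \Cref{lem:gap-identity} gives
\[
\Delta_t\le e_t(x_t^\star)-e_t(s_t)=\bigl(e_t(x_t^\star)-a_t\bigr)-\bigl(e_t(s_t)-a_t\bigr).
\]
Translation equivariance of $E_L$ gives $e_t-a_t=E_L(v_t-a_t)-w_t$. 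The normalized pointwise bound \eqref{eq:normalized-pointwise-error} then applies at every state $x$:
\[
0\le e_t(x)-a_t\le 2d(x,L)+2\delta_t .
\]
We use the lower bound at $x=s_t$, so the subtracted term is nonnegative and can be dropped. We use the upper bound at $x=x_t^\star$, which gives $2d(x_t^\star,L)+2\delta_t=2\rho_t^\star(s_{t-1};L)+2\delta_t$. This is exactly the required per-round bound. Summing over $t\le\tau$ yields \eqref{eq:local-upper-bound}.

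The main subtlety is the comparison step. The reference successor must be taken from the algorithm's \emph{actual} state $s_{t-1}$, and it may be any element of $A_t^\star(s_{t-1})$, since the inequality $\Delta_t\le e_t(x^\star)-e_t(s_t)$ holds for every true minimizer $x^\star$. We must also check that the lower bound in \eqref{eq:normalized-pointwise-error} really holds at the chosen state $s_t$ for arbitrary predictions. It does: $v_t-a_t\ge w_t|_L$ on $L$, so monotonicity of $E_L$ gives $E_L(v_t-a_t)\ge E_L(w_t|_L)\ge w_t$, and the last inequality holds by the Lipschitz property. With both one-sided bounds in hand, the shift $a_t$ cancels, as required, and the proof is complete.
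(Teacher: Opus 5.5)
Your proposal is correct and matches the paper's proof: both normalize by $a_t$ so that $E_L(v_t-a_t)-w_t$ lies in $[0,\,2d(\cdot,L)+2\delta_t]$, take $x_t^\star\in A_t^\star(s_{t-1})$ nearest to $L$, drop the nonnegative error term at $s_t$ in the action comparison, and sum over $t\le\tau$ using $\Delta_T=0$. Your explicit check that the shift $a_t$ cancels in $e_t(x_t^\star)-e_t(s_t)$, and that the lower bound holds for arbitrary predictions by monotonicity, is precisely the justification the paper leaves implicit.
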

\begin{proof}
Normalize errors as in \eqref{eq:normalized-pointwise-error} and write
$e'_t=E_L(v_t-a_t)-w_t\ge0$. Choose $x_t^\star\in A_t^\star(s_{t-1})$
with smallest distance to $L$. The action comparison in
\Cref{lem:gap-identity} gives
\[
    \Delta_t\le e'_t(x_t^\star)-e'_t(s_t)
    \le e'_t(x_t^\star)
    \le2\rho_t^\star(s_{t-1};L)+2\delta_t.
\]
Sum over $t<T$ and use $\Delta_T=0$.
\end{proof}

Here $s_{t-1}$ depends on the candidate landmark set and table.
Clustering states on a fixed offline optimal path does not by itself
control this certificate. Validation in \Cref{app:validation-proof}
instead evaluates each candidate's own trajectory.

\subsection{Value-dependent reconstruction distortion}
\label{app:distortion-upper-bound}

For an episode $X$, define
\begin{equation}
    \kappa_t(L;X)=\max_{x\in M}\min_{\ell\in L}
        \{d(x,\ell)+w_t^X(\ell)-w_t^X(x)\}.
    \label{eq:distortion-definition}
\end{equation}
The exact reconstruction error is nonnegative and vanishes at every
landmark. Its minimum is therefore zero, and
\begin{equation}
    \kappa_t(L;X)
       =\|E_L(w_t^X|_L)-w_t^X\|_{\mathrm{sp}}
       \le2r(L).
    \label{eq:distortion-span-identity}
\end{equation}

\begin{theorem}[Distortion-plus-prediction bound]
\label{thm:distortion-upper-bound}
For every episode and arbitrary landmark predictions,
\begin{equation}
    \mathrm{ALG}-\mathrm{OPT}
       \le\sum_{t=1}^{\tau}
          \left[\kappa_t(L;X)+\|v_t-w_t^X|_L\|_{\mathrm{sp}}\right].
    \label{eq:distortion-upper-bound}
\end{equation}
\end{theorem}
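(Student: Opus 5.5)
The plan is to reduce the theorem to the value-span bound of \Cref{lem:gap-identity} and then control each per-round reconstruction span by a triangle inequality between the exact envelope and the predicted envelope. By \eqref{eq:value-span-bound-appendix}, applied with $\widehat w_t=E_Lv_t$ for $t<T$ and $\widehat w_T=0$, we have
\[
\mathrm{ALG}-\mathrm{OPT}\le\sum_{t=1}^{\tau}\|E_Lv_t-w_t^X\|_{\mathrm{sp}}.
\]
So it suffices to show, for each $t<T$,
\[
\|E_Lv_t-w_t^X\|_{\mathrm{sp}}\le\kappa_t(L;X)+\|v_t-w_t^X|_L\|_{\mathrm{sp}}.
\]

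For the per-round bound, I will split
\[
E_Lv_t-w_t^X=\bigl(E_Lv_t-E_L(w_t^X|_L)\bigr)+\bigl(E_L(w_t^X|_L)-w_t^X\bigr).
\]
By the span identity \eqref{eq:distortion-span-identity}, the second term has span exactly $\kappa_t(L;X)$. That identity itself rests on \Cref{lem:reconstruction}: the exact reconstruction error is nonnegative and vanishes at landmarks, since $w_t^X$ is $1$-Lipschitz by \Cref{lem:bellman-properties}. Hence its minimum is $0$ and its span equals its maximum, which is the formula \eqref{eq:distortion-definition}.

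For the first term, let $a_t=\min_{\ell\in L}(v_t(\ell)-w_t^X(\ell))$ and $b_t=\max_{\ell\in L}(v_t(\ell)-w_t^X(\ell))$. Then
\[
w_t^X|_L+a_t\le v_t\le w_t^X|_L+b_t
\]
pointwise on $L$. Monotonicity and translation equivariance of $E_L$, established in the proof of \Cref{lem:reconstruction}, give
\[
a_t\le E_Lv_t-E_L(w_t^X|_L)\le b_t
\]
pointwise on $M$. So the first term has span at most $b_t-a_t=\|v_t-w_t^X|_L\|_{\mathrm{sp}}$. The span triangle inequality (\Cref{app:bellman-gaps}) then combines the two pieces into the desired per-round bound, and summing over $t\le\tau$ finishes the argument.

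There is no real obstacle; the step requiring care is the span identity for $\kappa_t$. We need that the minimum of the exact reconstruction error is exactly zero, not merely nonnegative, so that the max-min expression \eqref{eq:distortion-definition} equals the span. This holds because $L$ is nonempty and $E_L(w_t^X|_L)(\ell)=w_t^X(\ell)$ at each landmark by Lipschitzness. One should also note that we never need $v_t$ to be Lipschitz: only monotonicity and equivariance of $E_L$ are used for the prediction term.
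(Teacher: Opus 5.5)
Your proposal is correct and follows the paper's proof essentially step for step: the paper likewise sandwiches $E_Lv_t-E_L(w_t^X|_L)$ between the minimum and maximum landmark errors using monotonicity and translation equivariance of $E_L$. It then combines this with the span identity for $\kappa_t(L;X)$ via the span triangle inequality and concludes with the value-span bound of \Cref{lem:gap-identity}.
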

\begin{proof}
Let $a_t$ and $b_t$ be the minimum and maximum landmark prediction
errors. Monotonicity and translation equivariance imply
\[
    a_t\le E_Lv_t-E_L(w_t^X|_L)\le b_t.
\]
The span of this difference is at most $b_t-a_t=2\delta_t$.
The span triangle inequality and \eqref{eq:distortion-span-identity}
give $\|E_Lv_t-w_t^X\|_{\mathrm{sp}}\le\kappa_t(L;X)+2\delta_t$.
Apply \Cref{lem:gap-identity}.
\end{proof}

Unlike \eqref{eq:local-upper-bound}, $\kappa_t$ is independent of the
online trajectory. This is why it can be used in
\eqref{eq:joint_learning_objective}. It can be much smaller than
$2r(L)$: if $w_t(x)=d(x,a)$ and $L=\{a\}$, the reconstruction is
exact and $\kappa_t=0$, even when $r(L)=D$.
All three certificates can also be combined after an episode:
\begin{equation}
    \mathrm{ALG}-\mathrm{OPT}\le
    \sum_{t=1}^{\tau}\min\left\{
       2D,\ \kappa_t(L;X)+2\delta_t,
       \ 2\rho_t^\star(s_{t-1};L)+2\delta_t\right\}.
    \label{eq:combined-certificate}
\end{equation}
Computing these certificates requires offline values; taking online
actions does not.

\subsection{Comparison with Bellman-residual analysis}
\label{app:residual-comparison}

To distinguish the direct analysis from a residual bound, let
$\widehat w_0$ be an auxiliary function and $\widehat w_T=0$, and set
\[
    h_t=B_{c_t}\widehat w_t-\widehat w_{t-1},
    \qquad \eta_t=\|h_t\|_{\mathrm{sp}}.
\]
On the greedy trajectory, the predicted Bellman equality gives
$\mathrm{ALG}=\widehat w_0(s_0)+\sum_t h_t(s_{t-1})$.
Along an offline optimal trajectory, the corresponding inequality gives
$\mathrm{OPT}\ge\widehat w_0(s_0)+\sum_t\min_x h_t(x)$.
Thus $\mathrm{ALG}-\mathrm{OPT}\le\sum_t\eta_t$.
Writing $e_t=\widehat w_t-w_t$, interval preservation yields
\begin{equation}
    \eta_t\le\|e_t\|_{\mathrm{sp}}+\|e_{t-1}\|_{\mathrm{sp}}.
    \label{eq:residual-adjacent-errors}
\end{equation}
For two adjacent exact envelopes, $0\le e_t,e_{t-1}\le2r(L)$,
and hence $\eta_t\le4r(L)$. Setting $\widehat w_0=w_0$ for analysis
and using $e_T=0$ yields the total estimate
$4\tau r(L)+4\sum_{t<T}\delta_t$. The direct proof of
\Cref{thm:global_upper_bound} pays each current value error once.

\begin{proposition}[Tightness of the one-step residual estimate]
\label{prop:residual-tightness}
The constant $4$ in the exact-envelope bound $\eta_t\le4r(L)$
cannot be improved in general, even for canonical future values.
\end{proposition}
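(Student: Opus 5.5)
The plan is to exhibit one explicit instance with exact landmark envelopes $\widehat w_t=E_L(w_t|_L)$ and $\widehat w_{t-1}=E_L(w_{t-1}|_L)$, where $w_{t-1}=B_{c_t}w_t$ is canonical, such that $\eta_t=\|B_{c_t}\widehat w_t-\widehat w_{t-1}\|_{\mathrm{sp}}=4r(L)$. This matches the upper bound \eqref{eq:residual-adjacent-errors} combined with $0\le e_t,e_{t-1}\le 2r(L)$ from \Cref{lem:reconstruction}. I would first record the decomposition
\[
h_t=\bigl(B_{c_t}\widehat w_t-B_{c_t}w_t\bigr)-e_{t-1}.
\]
By interval preservation (\Cref{lem:bellman-properties}), the first bracket lies in $[0,2r(L)]$, and $e_{t-1}\in[0,2r(L)]$. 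Equality $\eta_t=4r(L)$ therefore forces two \emph{different} states $x,y$ with
\begin{itemize}
\item $B\widehat w_t(x)-Bw_t(x)=2r$ and $e_{t-1}(x)=0$;
\item $B\widehat w_t(y)=Bw_t(y)$ and $e_{t-1}(y)=2r$.
\end{itemize}
The construction is reverse-engineered from these four equalities.

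I would work on a small star metric: a center $\ell$, which is the only landmark, and leaves at distance $r$, so $r(L)=r$. In a star, $E_L f(u)=f(\ell)+r$ at every leaf $u$. The error at a leaf is therefore $2r$ exactly when its value is $f(\ell)-r$, and $0$ exactly when its value is $f(\ell)+r$. The mechanism for the first bracket is successor choice. From $x$, the true optimal successor should be a leaf $p$ with $w_t(p)=w_t(\ell)-r$, so that $e_t(p)=2r$, and this successor should carry over to $B\widehat w_t$ with the full $2r$ inflation. From $y$, the optimal successor should be a leaf $q$ with $e_t(q)=0$, and under $\widehat w_t$ no cheaper route should appear. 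I would penalize the center in $c_t$ so that routing through $\ell$ is never useful, and use service costs at the leaves to steer successors. In particular, I would make staying at $y$ cheap with $e_t(y)$ small. Meanwhile $x$ must be forced toward the low-valued leaf $p$ while $w_{t-1}(x)=B_{c_t}w_t(x)$ ends up at $w_{t-1}(\ell)+r$.

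A first attempt with just two leaves fails. With $w_t(x)=W-r$, $w_t(y)=W+r$, zero leaf costs and a costly center, the first bracket equals $2r$ at $x$ and $0$ at $y$, as desired. However, $w_{t-1}(x)=W-r$ and $w_{t-1}(y)=W+r$, so $e_{t-1}$ is $2r$ at $x$ and $0$ at $y$. This cancels exactly and gives $\eta_t=0$.

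The main obstacle is decoupling the leaf values at time $t-1$ from those at time $t$. I would add extra leaves, separating the successor leaves $p,q$ from the starting leaves $x,y$. I would also add a positive service cost at $p$ and a service cost at $x$ chosen to make $x$ unattractive as its own successor. The target is that $B_{c_t}w_t$ takes value $w_{t-1}(\ell)+r$ at $x$ and $w_{t-1}(\ell)-r$ at $y$, while the successor sets keep the first bracket at $2r$ and $0$ respectively. Lipschitzness of $w_{t-1}$ (\Cref{lem:bellman-properties}) is compatible with these values, since in the star $|w_{t-1}(x)-w_{t-1}(y)|=2r=d(x,y)$. If the star turns out to be too rigid, I would pass to a path or a weighted metric with two landmarks and repeat the same four-equality design. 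Once a concrete instance is fixed, verifying it is a finite check of the minima defining $B_{c_t}$ and $E_L$.
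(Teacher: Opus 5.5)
Your decomposition $h_t=(B_{c_t}\widehat w_t-B_{c_t}w_t)-e_{t-1}$ and the four equalities you derive from it are correct. However, the proposal never exhibits an instance, and for this proposition the instance is the whole proof. Worse, the construction you commit to is not merely possibly too rigid; it provably cannot work. With a single landmark $\ell$, both exact envelopes are cones, $E_L(w_s|_L)=w_s(\ell)+d(\cdot,\ell)$, so
\[
h_t(a)=w_t(\ell)-w_{t-1}(\ell)+\min_{b\in M}\bigl\{d(a,b)+d(b,\ell)-d(a,\ell)+c_t(b)\bigr\}.
\]
The detour $d(a,b)+d(b,\ell)-d(a,\ell)$ is nonnegative. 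At $b=b_0\in\arg\min c_t$ it is at most $2\min\{d(a,b_0),d(b_0,\ell)\}\le 2r(L)$. Hence the minimum lies in $[\min c_t,\ \min c_t+2r(L)]$ for every $a$. This gives $\eta_t\le 2r(L)$ for every one-landmark instance, however many leaves or service costs you add. This is the underlying reason your two-leaf attempt collapsed, and it also rules out taking $x=\ell$. The two-landmark fallback you mention in one sentence is where all the actual work lies, and it is left undone.

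The paper's witness is the line $M=\{0,1,2,3\}$ with $L=\{0,3\}$, $r(L)=1$, $w_t=(0,1,0,1)$ and $c_t=(2,0,0,2)$. Then $w_{t-1}=(2,1,0,1)$, $E_L(w_t|_L)=(0,1,2,1)$, $E_L(w_{t-1}|_L)=(2,3,2,1)$ and $B_{c_t}E_L(w_t|_L)=(2,1,2,3)$. So $h_t=(0,-2,0,2)$ has span $4=4r(L)$.

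In your terms, the $+2r$ state is the landmark $3$. There $e_{t-1}(3)=0$ automatically, and its unique true successor $2$ has $e_t(2)=2$. The $-2r$ state is the non-landmark $1$. Staying there is optimal under both $w_t$ and its envelope because $e_t(1)=0$, yet $w_{t-1}(1)=1$ lies $2$ below both landmark cones at time $t-1$. This realizes your four equalities, which the computation above shows is impossible with a single cone.

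You should also justify the phrase ``canonical future values.'' Any nonnegative $1$-Lipschitz $w_t$ is realized by appending the final task $c_{t+1}=w_t$, since $B_{w_t}0=w_t$.
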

\begin{proof}
On the four-state line $M=\{0,1,2,3\}$ take $L=\{0,3\}$, so
$r(L)=1$, and let
\[
\begin{aligned}
    w_t&=(0,1,0,1), & c_t&=(2,0,0,2),\\
    w_{t-1}=B_{c_t}w_t&=(2,1,0,1),&
    E_L(w_t|_L)&=(0,1,2,1).
\end{aligned}
\]
The preceding envelope is $(2,3,2,1)$, whereas
$B_{c_t}E_L(w_t|_L)=(2,1,2,3)$. Their difference is
$(0,-2,0,2)$, of span $4$. The nonnegative $1$-Lipschitz function
$w_t$ is realized by a final task $c_{t+1}=w_t$, since
$B_{w_t}0=w_t$. Thus these are canonical values of a legal episode.
\end{proof}

This example concerns the residual estimate. It does not prove that the
coefficient $2$ of every algorithmic guarantee is optimal. The matching
order lower bound uses the assumptions on fixed landmarks in
\Cref{thm:lower-bound}.

\section{Full Proof of the Lower Bound for Fixed Landmarks}
\label{app:lower-bound-details}

This appendix completes the construction in \Cref{sec:lower-bound},
including finite reset penalties, whole-table indistinguishability,
and the quantifiers for randomized landmark selection.

\subsection{The gadget and legal service costs}
\label{app:lower-bound-gadget}

As in \Cref{sec:lower-bound}, let $D=n-1$,
$h=\lfloor D/(8m)\rfloor$, and $o=s_0$.
Choose the baseline $C=D+h$ and the finite penalty $H=4D$.
For $i=0,\ldots,2m-1$, set
\[
    p_i=4hi,\quad I_i=(p_i,p_i+4h),\quad
    a_i=p_i+h,\quad z_i=p_i+2h,\quad b_i=p_i+3h.
\]
Intervals are intersected with the discrete state space when used in set
membership. They are disjoint, and all their endpoints lie in $M$ because
$8mh\le D$. The assumption $D\ge16m$ ensures $h\ge2$.

For one interval, suppress $i$ and define
\[
    \phi(x)=(h-|x-a|)_+-(h-|x-b|)_+,
    \qquad f_\sigma(x)=C-\sigma\phi(x),\quad \sigma\in\{-1,1\}.
\]
The two triangular supports have disjoint interiors and slopes of
magnitude at most one. Hence $f_\sigma$ is $1$-Lipschitz,
$f_\sigma=C$ outside $I$, and
$\{f_\sigma(a),f_\sigma(b)\}=\{C-h,C+h\}$.
The four stages described in \Cref{sec:lower-bound} are realized by
the following enter, decide, reveal, and reset tasks:
\[
\begin{aligned}
c^{\mathrm{in}}(x)&=\begin{cases}0,&x=z,\\ H,&x\neq z,\end{cases}
&
c^{\mathrm{dec}}(x)&=\begin{cases}0,&x\notin I\text{ or }x\in\{a,b\},\\ H,&\text{otherwise},\end{cases}\\
c_\sigma^{\mathrm{rev}}(x)&=f_\sigma(x)-d(x,o),
&
c^{\mathrm{out}}(x)&=\begin{cases}0,&x=o,\\ H,&x\neq o.\end{cases}
\end{aligned}
\]
In particular,
$c_\sigma^{\rm rev}(x)=f_\sigma(x)-d(x,o)\ge D-d(x,o)\ge0$.
Thus all four tasks have finite, nonnegative service costs.

\paragraph{Why a fixed finite penalty forces a reset.}
For any $1$-Lipschitz continuation $V$, let a reset task cost zero at
$q$ and $H$ elsewhere. From any incoming state $x$, the score of a
non-target state $y$ minus that of $q$ is at least
\begin{equation}
    H+d(x,y)-d(x,q)+V(y)-V(q)\ge H-2D>0.
    \label{eq:reset-gap}
\end{equation}
The reset is therefore uniquely Bellman-optimal. The bound does not
depend on the number of later gadgets or their accumulated cost.

\subsection{Canonical values throughout the concatenation}
\label{app:lower-bound-recurrence}

Let $g_\sigma=B_{c^{\rm dec}}f_\sigma$. At $z$, the favorable
candidate costs $h+C-h=C$; the unfavorable one costs $C+2h$;
an exterior state costs at least $C+2h$; and any other interior state
costs at least $H+C-h>C$. Thus $g_\sigma(z)=C$.
Nonnegative service costs and Lipschitzness imply
$g_\sigma\ge B_0f_\sigma=f_\sigma$. At an exterior state, staying
costs exactly $C$, so
\begin{equation}
    g_\sigma(z)=C,\qquad g_\sigma(x)=C\quad(x\notin I).
    \label{eq:gadget-exterior-values}
\end{equation}

Concatenate $N=\lfloor T/4\rfloor$ gadgets with interval indices
$i_1,\ldots,i_N$ and signs $\sigma_1,\ldots,\sigma_N$, and append
$T-4N$ zero tasks. Define
\begin{equation}
    K_j=\sum_{q=j+1}^{N}\bigl[d(o,z_{i_q})+C\bigr].
    \label{eq:gadget-suffix-cost}
\end{equation}
These constants depend on interval choices but on no hidden sign.
Backward induction gives
\begin{equation}
\begin{aligned}
    w_{4j-4}(x)&=d(x,z_{i_j})+C+K_j,\\
    w_{4j-3}(x)&=g_{\sigma_j}(x)+K_j,\\
    w_{4j-2}(x)&=f_{\sigma_j}(x)+K_j,\\
    w_{4j-1}(x)&=d(x,o)+K_j.
\end{aligned}
\label{eq:gadget-full-recurrence}
\end{equation}
Here $f_{\sigma_j},g_{\sigma_j}$ refer to interval $i_j$.
To see the induction, the zero-task suffix has value zero. At a return
task, \eqref{eq:reset-gap} forces $o$, whose subsequent value is $K_j$.
The reveal task then gives
\[
    \min_y\{d(x,y)+f_{\sigma_j}(y)-d(y,o)+d(y,o)+K_j\}
       =f_{\sigma_j}(x)+K_j,
\]
using $B_0f_{\sigma_j}=f_{\sigma_j}$. The decide task gives
$g_{\sigma_j}+K_j$, and the enter task resets to $z_{i_j}$ with
continuation $C+K_j$ by \eqref{eq:gadget-exterior-values}.
Consequently,
\begin{equation}
    \mathrm{OPT}=\sum_{j=1}^{N}[d(o,z_{i_j})+C],
    \label{eq:gadget-optimum}
\end{equation}
also independently of every sign.

\subsection{Indistinguishability of the entire Raw table}
\label{app:whole-table-indistinguishability}

Fix $L$ and an unhit interval $L\cap I_{i_j}=\varnothing$.
At each landmark, $f_{\sigma_j}$ and $g_{\sigma_j}$ both equal $C$.
Equations~\eqref{eq:gadget-suffix-cost}--\eqref{eq:gadget-optimum}
therefore show that flipping $\sigma_j$ leaves \emph{every} revealed
dual value unchanged, at every time including time zero. In particular,
no earlier absolute value reveals the sign through its continuation cost.

The tasks through the decide stage of gadget $j$ do not depend on
$\sigma_j$ either. Conditional on the interval choices, the other
signs, and the algorithm's randomness, the two choices of $\sigma_j$
thus induce identical observations and identical actions until that
gadget's reveal stage. With a fair sign, it remains a fair hidden bit
at the decision round, even when the full Raw table was given in advance.

\subsection{Loss on an unhit interval}
\label{app:lower-bound-decision-loss}

Use the Bellman gaps in \Cref{lem:gap-identity}. If the enter action
is not $z$, then \eqref{eq:reset-gap} gives
$\Delta_{\rm in}\ge H-2D\ge h$. Otherwise the decision begins at $z$,
where the Bellman value is $C+K_j$. The favorable candidate has gap
zero, the unfavorable candidate has gap $2h$, an exterior state has
gap at least $2h$, and any other interior state has gap at least $H-h$.
Because the sign is hidden, either candidate has conditional expected
gap $h$, and every other action has at least this much. This also holds
for randomized mixtures of actions. Therefore
\begin{equation}
    \E[\Delta_{\rm in}+\Delta_{\rm dec}
             \mid L\cap I_{i_j}=\varnothing]\ge h.
    \label{eq:unhit-gadget-loss}
\end{equation}
Later gaps are nonnegative and cannot cancel this loss.

\subsection{Oblivious inputs and the final constant}
\label{app:lower-bound-averaging}

Choose every interval index independently and uniformly from
$\{0,\ldots,2m-1\}$, and choose all signs independently and uniformly.
This distribution fixes the full episode without observing any realized
algorithmic random bits. Any realized landmark set with $|L|\le m$
hits at most $m$ of the $2m$ disjoint intervals. As $L$ is chosen before
test information is observed,
\[
    \Pr[L\cap I_{i_j}=\varnothing\mid L]\ge\tfrac12.
\]
Combining this with \eqref{eq:unhit-gadget-loss} and summing gaps yields
$\E[\mathrm{ALG}-\mathrm{OPT}]\ge Nh/2$, with expectation over both
input and algorithm randomness. Averaging over the finite input support
fixes a deterministic episode whose expected excess over algorithm
randomness is at least $Nh/2$.

Finally, $D\ge16m$ implies
$h=\lfloor D/(8m)\rfloor\ge D/(16m)$. By
\Cref{lem:line-radius}, $r_m(M)\le D/m$ in this regime, so
$h\ge r_m(M)/16$. Therefore the selected episode satisfies
\[
    \E[\mathrm{ALG}]-\mathrm{OPT}
        \ge\frac1{32}\left\lfloor\frac T4\right\rfloor r_m(M),
\]
which proves \Cref{thm:lower-bound}. The zero-task padding leaves all
preceding canonical values unchanged and adds no negative gap.

This proof requires the landmark set to be fixed before the episode.
Adaptively querying the future value at $a_i$ and at a state outside
$I_i$ reveals a value difference
$-\sigma h$ at the decision stage. It therefore does not establish an
adaptive-query lower bound. The nearly-full Raw regime is addressed
separately in \Cref{app:information-interfaces}.

\section{Absolute and Relative Prediction Interfaces}
\label{app:information-interfaces}

This appendix proves the interface separation in
\Cref{thm:raw-relative-separation} and explains the sparse-regime
restriction in \Cref{thm:lower-bound}. The guarantees here allow
policies other than the value-greedy rule.

\subsection{A Raw budget policy with one unreported state value}
\label{app:raw-budget-policy}

\begin{theorem}[One unreported state value under Raw access]
\label{thm:raw-budget}
Suppose $M\setminus L=\{q\}$ and $L\ne\varnothing$. With round-by-round
access to exact absolute landmark values, a deterministic policy satisfies
\begin{equation}
    \mathrm{ALG}\le\mathrm{OPT}+2r(L)
    \label{eq:raw-budget-bound}
\end{equation}
on every episode, without being supplied $\mathrm{OPT}$.
\end{theorem}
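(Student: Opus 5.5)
The plan is a \emph{budget-tracking} policy. The policy keeps a remaining budget $R_t$ and maintains the invariant $R_t\ge w_t(s_t)$, where $s_t$ is its current state. Because the budget only decreases by the cost actually paid, the total cost equals the initial budget minus $R_T\ge0$. It therefore suffices to initialize the budget at most $2r(L)$ above $\mathrm{OPT}$ and to keep the invariant. Since $M\setminus L=\{q\}$, we have $r(L)=d(q,L)$, and the only unknown future value in each round is $w_t(q)$.

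\textbf{Initialization.} At round $1$ the policy sees $c_1$ and $w_1|_L$. It forms the envelope $\widehat w_1=E_L(w_1|_L)$ from \eqref{eq:metric_extension}, which equals $w_1$ on $L$. By \Cref{lem:reconstruction}, $w_1(q)\le\widehat w_1(q)\le w_1(q)+2r(L)$. It then sets
\[
U=R_0=\min_{x\in M}\{d(s_0,x)+c_1(x)+\widehat w_1(x)\}.
\]
Because $w_1\le\widehat w_1\le w_1+2r(L)$ pointwise, the Bellman recursion \eqref{eq:bellman_recursion} gives $\mathrm{OPT}=w_0(s_0)\le U\le\mathrm{OPT}+2r(L)$. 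In particular $R_0\ge w_0(s_0)$, so the invariant holds at the start. No knowledge of $\mathrm{OPT}$ is needed.

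\textbf{Step rule.} Suppose at round $t$ the policy is at state $s=s_{t-1}$ with $R_{t-1}\ge w_{t-1}(s)$. It knows $c_t$ exactly, and it knows $w_t|_L$ exactly (for $t=T$, $w_T\equiv0$ is known everywhere).
\begin{itemize}[leftmargin=*,itemsep=1pt,topsep=2pt]
\item If some landmark $\ell$ satisfies $d(s,\ell)+c_t(\ell)+w_t(\ell)\le R_{t-1}$, the policy moves to such an $\ell$.
\item Otherwise it moves to $q$.
\end{itemize}
In both cases it sets $R_t=R_{t-1}-d(s,s_t)-c_t(s_t)$.

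\textbf{Invariant.} In the first case, $R_t\ge w_t(\ell)$ directly. In the second case, every landmark's Bellman score exceeds $R_{t-1}\ge w_{t-1}(s)$. But $w_{t-1}(s)$ is the minimum of these scores over all of $M=L\cup\{q\}$. So the minimum is attained only at $q$:
\[
d(s,q)+c_t(q)+w_t(q)=w_{t-1}(s)\le R_{t-1},
\]
which again gives $R_t\ge w_t(q)$. This feasibility argument for the unreported state is the main obstacle: it is exactly where having a single unreported state is used. With two unreported states, failing every landmark test would not identify which of them is Bellman-optimal.

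\textbf{Conclusion.} By induction $R_T\ge w_T(s_T)=0$. Hence
\[
\mathrm{ALG}=U-R_T\le U\le\mathrm{OPT}+2r(L),
\]
which is \eqref{eq:raw-budget-bound}. The policy is deterministic and uses only round-by-round Raw values. The minor points left to confirm are that round $1$ is handled by the same step rule starting from $R_0=U$, and that ties and the terminal round are covered because $w_T$ is known exactly.
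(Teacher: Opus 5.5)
Your proposal is correct and follows essentially the same route as the paper. It uses the same initial budget $U$, computed from the round-one envelope with $\mathrm{OPT}\le U\le\mathrm{OPT}+2r(L)$, and the same invariant that the remaining budget dominates $w_{t-1}(s_{t-1})$; when no landmark fits the budget, Bellman optimality likewise forces the single unreported state $q$ to fit, giving $\mathrm{ALG}=U-R_T\le U$.
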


\paragraph{Policy.}
For $T=1$, minimize the observed movement-plus-service cost directly.
For $T\ge2$, after receiving $c_1$ and $w_1|_L$, set
\begin{equation}
    U=\min_{x\in M}
       \{d(s_0,x)+c_1(x)+E_L(w_1|_L)(x)\}.
    \label{eq:raw-initial-budget}
\end{equation}
Let $P_{t-1}$ be the amount already paid and
$B_{t-1}=U-P_{t-1}$. At time $t$, select a landmark satisfying
\begin{equation}
    d(s_{t-1},\ell)+c_t(\ell)+w_t(\ell)\le B_{t-1},
    \label{eq:raw-feasible-landmark}
\end{equation}
if one exists, using a fixed tie rule. Otherwise select the unique
state $q$ outside $L$. At $t=T$, the required values are the known zeros.

\begin{proof}[Proof of \Cref{thm:raw-budget}]
Exact reconstruction gives
$w_1\le E_L(w_1|_L)\le w_1+2r(L)$, and hence
$\mathrm{OPT}\le U\le\mathrm{OPT}+2r(L)$. Maintain the invariant
\begin{equation}
    B_{t-1}\ge w_{t-1}(s_{t-1}).
    \label{eq:raw-budget-invariant}
\end{equation}
It holds initially. If a feasible landmark is chosen, subtracting its
immediate cost from \eqref{eq:raw-feasible-landmark} proves the next
instance of the invariant. If no landmark is feasible, Bellman
optimality and \eqref{eq:raw-budget-invariant} guarantee that some
state has movement-plus-service-plus-continuation cost at most
$B_{t-1}$. That state must be $q$, so choosing it preserves the
invariant without observing $w_t(q)$. At the end,
$U-\mathrm{ALG}=B_T\ge w_T(s_T)=0$, proving the bound.
\end{proof}

If $\mathrm{OPT}$ is additionally revealed, initializing $U=\mathrm{OPT}$
makes the policy optimal. This extra information is not assumed above;
it is available, for example, if the Raw interface also reveals $w_0(s_0)$.
The policy depends on a common absolute cost scale across times, so
independent additive shifts of successive revealed value vectors invalidate
its budget comparisons. This is compatible with shift invariance of
the different greedy policy in \Cref{sec:upper_bound}.

\subsection{Proof of the two-state separation}
\label{app:raw-relative-separation}

\begin{proof}[Proof of \Cref{thm:raw-relative-separation}]
The Raw upper bound is \Cref{thm:raw-budget}. With one landmark,
every revealed relative value is zero, regardless of the chosen landmark.
Label the initial state $0$, the other state $1$, and set
$H=4a$. For each independent gadget, choose a fair hidden bit
$b\in\{0,1\}$ and use the following tasks:
\[
\begin{array}{c|cc}
    \text{task}&c(0)&c(1)\\ \hline
    \text{enter}&0&H\\
    \text{decide}&a&0\\
    \text{reveal}&H\,\mathbf{1}\{b=1\}&H\,\mathbf{1}\{b=0\}\\
    \text{return}&0&H
\end{array}
\]
Every reset is Bellman-optimal by \eqref{eq:reset-gap}, with $D=a$.
Failure to enter $0$ creates gap at least $H-2a=2a$. Otherwise, both
decision actions have immediate cost $a$. Writing $K$ for the
continuation cost after returning to $0$, the future value after the
decision at state $x$ is
$d(x,b)+d(b,0)+K$. The Bellman-optimal decision is $b$, and choosing
the other state has gap exactly $a$. The current task, all past tasks,
and the identically zero relative values reveal nothing about the fresh bit.
Thus the conditional expected decision gap is at least $a/2$.
This argument covers every randomized decision rule.

Repeat $\lfloor T/4\rfloor$ gadgets and append zero tasks. Sum the
nonnegative Bellman gaps and average over the hidden bits to fix an
oblivious input satisfying \eqref{eq:relative-lower-bound}.
\end{proof}

The Raw excess bound is independent of $T$, whereas the Relative interface
can force excess linear in $T$. Thus the landmark budget
alone does not characterize the information available to arbitrary
history-dependent policies.

\section{Learning Guarantees and Optimization Details}
\label{app:learning-proofs}

We supply proofs for \Cref{sec:learning}. Training samples are ${N_{\mathrm{train}}}$
independent complete episodes from $\mathcal D$; no independence among
the tasks of a single episode is assumed. Every fitted table is fixed
across test episodes. Set $\tau=T-1$ and choose an anchor $\ell_0\in L$
by a deterministic rule.

\subsection{Envelope closure and a compact class}
\label{app:envelope-closure}

\begin{lemma}[Closure preserves decisions and contracts target error]
\label{lem:envelope-closure}
For any vector $v$ on $L$, let $\bar v=(E_Lv)|_L$. Then $\bar v$
is $1$-Lipschitz, $E_L\bar v=E_Lv$, and, for every $1$-Lipschitz
target $w$,
\begin{equation}
    \|\bar v-w|_L\|_{\mathrm{sp}}
       \le\|v-w|_L\|_{\mathrm{sp}}.
    \label{eq:closure-span-contraction}
\end{equation}
\end{lemma}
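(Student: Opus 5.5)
The plan is to derive all three claims from properties of $E_L$ already recorded in the proof of \Cref{lem:reconstruction}: $E_L v$ is $1$-Lipschitz on $M$, monotone, translation-equivariant, and interpolates any $1$-Lipschitz vector on $L$.

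\emph{Lipschitzness and idempotence.} Since $E_Lv$ is $1$-Lipschitz on all of $M$, its restriction $\bar v$ is $1$-Lipschitz on $L$. Because $\bar v$ is $1$-Lipschitz on $L$, interpolation gives $(E_L\bar v)|_L=\bar v$. For $E_L\bar v=E_Lv$, I argue pointwise in both directions. First, $\bar v(\ell)\le v(\ell)$ (take the term indexed by $\ell$ itself), so monotonicity gives $E_L\bar v\le E_Lv$. Conversely, for any $x$ and $\ell$,
\[
\bar v(\ell)+d(x,\ell)=(E_Lv)(\ell)+d(x,\ell)\ge (E_Lv)(x)
\]
by Lipschitzness of $E_Lv$. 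Minimizing over $\ell$ gives $E_L\bar v\ge E_Lv$.

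\emph{Span contraction.} Let $w$ be $1$-Lipschitz and set $a=\min_L(v-w)$ and $b=\max_L(v-w)$, so $w+a\le v\le w+b$ on $L$. By monotonicity and translation equivariance,
\[
E_L(w|_L)+a\le E_Lv\le E_L(w|_L)+b .
\]
Since $w|_L$ is $1$-Lipschitz, $E_L(w|_L)$ interpolates $w$ on $L$. Restricting the display to $L$ therefore gives $a\le \bar v-w|_L\le b$ on $L$. Hence $\|\bar v-w|_L\|_{\mathrm{sp}}\le b-a=\|v-w|_L\|_{\mathrm{sp}}$.

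The only delicate point is the interpolation step, which needs Lipschitzness of $w$ on $L$. This is exactly where the hypothesis on the target enters: the bound is not claimed for non-Lipschitz $w$. Everything else is a monotonicity argument, so I expect no real obstacle.
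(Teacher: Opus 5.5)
Your proposal is correct and matches the paper's proof. Lipschitzness follows by restriction. Your two-sided idempotence argument is the paper's expanded double minimum in different words: the bound $\bar v\le v$ plus monotonicity is its choice $\ell=\ell'$, and Lipschitzness of $E_Lv$ is its triangle-inequality step. The span contraction is the same sandwich $E_L(w|_L)+a\le E_Lv\le E_L(w|_L)+b$, restricted to $L$, where the exact envelope interpolates the $1$-Lipschitz target $w$; you correctly identify that step as the one place the hypothesis on $w$ is needed.
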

\begin{proof}
The envelope is $1$-Lipschitz, and its restriction is therefore
$1$-Lipschitz. Expanding the repeated envelope gives
\[
    E_L\bar v(x)=\min_{\ell,\ell'\in L}
         \{v(\ell')+d(\ell,\ell')+d(x,\ell)\}.
\]
The triangle inequality bounds this below by $E_Lv(x)$. Taking
$\ell=\ell'$ gives the reverse inequality. If
$a\le v-w|_L\le b$, monotonicity and translation equivariance yield
$E_L(w|_L)+a\le E_Lv\le E_L(w|_L)+b$.
The exact envelope agrees with $w$ on $L$, so restricting this
inequality to $L$ proves \eqref{eq:closure-span-contraction}.
\end{proof}

After closure, subtracting $\bar v(\ell_0)$ only translates the
extended function. Thus there is no loss of greedy decision rules
or increase of span loss in restricting to
\begin{equation}
    \mathcal V_L=\left\{v:
        v_t(\ell_0)=0,\quad
        |v_t(\ell)-v_t(\ell')|\le d(\ell,\ell')
        \ \text{for all }t,\ell,\ell'\right\}.
    \label{eq:compact-table-class}
\end{equation}
This is a closed bounded polytope with $\tau(m-1)$ free parameters,
each in $[-D,D]$. It is nonempty, and continuous empirical risks
attain their minima on it. The anchored true labels
$q_t^X(\ell)=w_t^X(\ell)-w_t^X(\ell_0)$ are also $1$-Lipschitz.
For any $u,v$ on $L$,
\begin{equation}
    \left|\|u-q_t^X\|_{\mathrm{sp}}
             -\|v-q_t^X\|_{\mathrm{sp}}\right|
       \le2\|u-v\|_\infty.
    \label{eq:span-loss-lipschitz}
\end{equation}
If $v$ is feasible, its difference from $q_t^X$ has span at most
$2D$. These bounds hold independently of the absolute scale of
the service costs.

\subsection{Proof of the fixed-table PAC guarantee}
\label{app:fixed-table-proof}

\begin{proof}[Proof of \Cref{thm:fixed_table_learning}]
Fix $t$ and write $\ell_{t,v_t}(X)=\|v_t-q_t^X\|_{\mathrm{sp}}$.
Partition $[-D,D]^{m-1}$ into cells of side
$a=\varepsilon/16$, choosing a feasible representative in each
nonempty cell. The representatives form an $a$-net of size at most
\[
    N_\varepsilon\le(1+32D/\varepsilon)^{m-1}.
\]
For a fixed representative, the ${N_{\mathrm{train}}}$ losses are independent across
episodes and lie in $[0,2D]$. Hoeffding's inequality gives
\begin{equation}
    \Pr\left[\left|\frac1{N_{\mathrm{train}}}\sum_{i=1}^{N_{\mathrm{train}}}
       \ell_{t,v_t}(X^{(i)})-\E_X\ell_{t,v_t}(X)\right|
       >\frac\varepsilon4\right]
       \le2\exp\left(-\frac{{N_{\mathrm{train}}}\varepsilon^2}{32D^2}\right).
    \label{eq:fixed-table-hoeffding}
\end{equation}
A union bound over all representatives and all $\tau$ times has
failure probability at most $\beta$ under
\eqref{eq:learning_sample_size}. On the resulting event, replacing
an arbitrary feasible vector by its representative changes empirical
and population losses by at most $2a$ each, by
\eqref{eq:span-loss-lipschitz}. Therefore the deviation at each time
is at most $\varepsilon/4+4a=\varepsilon/2$ uniformly over its
feasible vectors. Averaging over time proves
\begin{equation}
    \sup_{v\in\mathcal V_L}|\widehat R_L(v)-R_L(v)|
        \le\varepsilon/2.
    \label{eq:fixed-table-uniform-convergence}
\end{equation}
For a population minimizer $v^\star$ and any empirical minimizer
$\widehat v$, this implies
\[
    R_L(\widehat v)
    \le\widehat R_L(\widehat v)+\varepsilon/2
    \le\widehat R_L(v^\star)+\varepsilon/2
    \le R_L(v^\star)+\varepsilon.
\]
Finally, $\|v_t-q_t^X\|_{\mathrm{sp}}=2\delta_t$ and
\Cref{thm:global_upper_bound} imply
\eqref{eq:learning_cost_bound} after taking expectation over a
fresh test episode.
\end{proof}

The expectation over the test episode is conditional on the learned
table; the high-probability event is over training episodes. The union
bound over times needs no within-episode independence. Because the
risk averages over $\tau$ times, a total statistical excess target
$\varepsilon_{\rm tot}$ requires setting
$\varepsilon=\varepsilon_{\rm tot}/\tau$, producing a quadratic
horizon factor in the sample bound. For $m=1$, the anchored class
contains only the zero table and its span risk is identically zero;
compression can still incur nonzero online excess.

\subsection{The empirical-risk linear program}
\label{app:learning-lp}

Let $q_{it\ell}=q_t^{X^{(i)}}(\ell)$. The LP used in
\Cref{sec:learning,sec:experiments} is
\begin{equation}
\begin{aligned}
    \min_{v,u,b}\quad&\frac1{{N_{\mathrm{train}}}\tau}
                      \sum_{i=1}^{N_{\mathrm{train}}}\sum_{t=1}^{\tau}(u_{it}-b_{it})\\
    \text{subject to}\quad
       &v_t(\ell)-q_{it\ell}\le u_{it} &&\text{for all }i,t,\ell,\\
       &b_{it}\le v_t(\ell)-q_{it\ell} &&\text{for all }i,t,\ell,\\
       &v_t(\ell)-v_t(\ell')\le d(\ell,\ell')
                                         &&\text{for all }t,\ell,\ell',\\
       &v_t(\ell_0)=0                    &&\text{for all }t.
\end{aligned}
\label{eq:span-erm-lp}
\end{equation}
The ordered-pair constraints include both directions of the Lipschitz
condition. At an optimum, $u_{it}$ is the largest landmark error and
$b_{it}$ the smallest, since otherwise either variable can be tightened
to decrease the objective. The objective is thus exactly
$\widehat R_L(v)$.

Before eliminating anchors, the LP has $\tau m+2{N_{\mathrm{train}}}\tau$ variables,
$2{N_{\mathrm{train}}}\tau m+\tau m(m-1)$ displayed inequalities, and $\tau$ anchor
equalities. Bounds $[-D,D]$ on non-anchor table entries are
redundant mathematically and are imposed in the implementation.
The variables $u,b$ are unrestricted real numbers. With rational data,
the labels and this LP have polynomial encoding size and the fixed-$L$
problem is polynomial-time solvable. Computing all training labels by
generic dynamic programming costs $O({N_{\mathrm{train}}}Tn^2)$.

Solver and storage conventions are given in \Cref{app:implementation-details}.

\subsection{Joint selection of landmarks and tables}
\label{app:joint-learning-proof}

Let $\widehat J$ be the empirical version of
\eqref{eq:joint_learning_objective}, with $\kappa_t$ given by
\eqref{eq:distortion-definition}.

\begin{theorem}[Joint landmark/table generalization]
\label{thm:joint-learning}
If
\begin{equation}
    {N_{\mathrm{train}}}\ge\frac{128D^2}{\varepsilon^2}
       \left[\log\binom nm
       +(m-1)\log\left(1+\frac{32D}{\varepsilon}\right)
       +\log\frac{2\tau}{\beta}\right],
    \label{eq:joint-sample-size}
\end{equation}
then, with probability at least $1-\beta$, every output
$(\widehat L,\widehat v)$ whose empirical objective is within
$\gamma\ge0$ of the global empirical minimum satisfies
\begin{equation}
    \E_X[\mathrm{ALG}_{\widehat L,\widehat v}-\mathrm{OPT}]
       \le\tau\left[
         \inf_{|L|=m,\,v\in\mathcal V_L}J(L,v)
            +\varepsilon+\gamma\right].
    \label{eq:joint-learning-bound}
\end{equation}
\end{theorem}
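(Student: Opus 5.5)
The plan is to extend the fixed-table argument of \Cref{thm:fixed_table_learning} by a union bound over all $\binom nm$ landmark sets. Then we convert uniform convergence of $\widehat J$ into the cost bound via \Cref{thm:distortion-upper-bound}.

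Fix a landmark set $L$ with $|L|=m$ and a time $t$. Consider the per-time loss
\[
\lambda_{t,L,v_t}(X)=\kappa_t(L;X)+\|v_t-q_t^X\|_{\mathrm{sp}}.
\]
By \eqref{eq:distortion-span-identity}, $\kappa_t\in[0,2D]$. For $v\in\mathcal V_L$, the span term also lies in $[0,2D]$, since both $v_t$ and $q_t^X$ are $1$-Lipschitz and anchored. Hence $\lambda\in[0,4D]$.

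The distortion term does not depend on $v$, so the net argument carries over unchanged. Take the same $\varepsilon/16$-grid net on $[-D,D]^{m-1}$, of size at most $(1+32D/\varepsilon)^{m-1}$. Apply Hoeffding with range $4D$ and deviation $\varepsilon/4$; the failure probability per point is $2\exp(-N_{\mathrm{train}}\varepsilon^2/(128D^2))$, which explains the constant $128$.

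We union bound over the $\tau$ times, the $\binom nm$ landmark sets, and the net points. The sample size \eqref{eq:joint-sample-size} makes the total failure probability at most $\beta$. By \eqref{eq:span-loss-lipschitz}, moving from a net point to an arbitrary feasible vector changes each risk by at most $2a=\varepsilon/8$. This gives $|\widehat J(L,v)-J(L,v)|\le\varepsilon/2$ uniformly over all $|L|=m$ and $v\in\mathcal V_L$, after averaging over time.

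On this event, let $(L^\star,v^\star)$ attain, or come within an arbitrary $\eta>0$ of, $\inf J$; this handles the infimum over the compact class. Then
\[
J(\widehat L,\widehat v)\le\widehat J(\widehat L,\widehat v)+\varepsilon/2\le\widehat J(L^\star,v^\star)+\gamma+\varepsilon/2\le J(L^\star,v^\star)+\gamma+\varepsilon.
\]
Let $\eta\to0$. Finally, \Cref{thm:distortion-upper-bound} bounds $\mathrm{ALG}_{\widehat L,\widehat v}-\mathrm{OPT}$ pathwise by $\sum_t\lambda_{t,\widehat L,\widehat v_t}(X)$. Here we use that the span error relative to anchored labels equals the span error relative to $w_t^X|_L$, because span is shift invariant. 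Taking expectation over a fresh test episode, conditional on the training sample, gives $\tau J(\widehat L,\widehat v)$ and hence \eqref{eq:joint-learning-bound}.

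The main obstacle is that $\widehat L$ is data-dependent, so a pointwise bound for a fixed $L$ cannot be used. This is handled by the uniform union bound over landmark sets; the $\log\binom nm$ term pays for it. Care is also needed that the distortion term introduces no extra net error. It does not, because for fixed $L$ it is a single fixed function of $X$ that is included in each Hoeffding application. The only other check is that the loss range doubles to $4D$, which the constant $128$ accounts for.
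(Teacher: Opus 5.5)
Your proposal is correct and follows the paper's proof essentially step for step: the same $\varepsilon/16$ net per landmark set, Hoeffding with range $4D$ giving the $128D^2$ constant, and a union bound over $\binom nm$ landmark sets, $\tau$ times and net points to obtain $|\widehat J-J|\le\varepsilon/2$ uniformly. It then uses the standard three-inequality chain for an approximate empirical minimizer and converts risk to excess via \Cref{thm:distortion-upper-bound}. Your $\eta$-approximation of the infimum and your explicit shift-invariance remark for the anchored labels are small clarifications that the paper leaves implicit.
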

\begin{proof}
For each $L$, every timewise summand in $J$ lies in $[0,4D]$:
both $\kappa_t$ and the landmark error span are at most $2D$.
The distortion is independent of $v_t$, so the summand remains
$2$-Lipschitz in that vector. Use the same $\varepsilon/16$ net
as in \Cref{app:fixed-table-proof}, and union-bound over all
$\binom nm$ landmark sets as well as the $\tau$ times. The exponent
in \eqref{eq:fixed-table-hoeffding} becomes
$-{N_{\mathrm{train}}}\varepsilon^2/(128D^2)$. Under \eqref{eq:joint-sample-size},
$|J-\widehat J|\le\varepsilon/2$ uniformly. For a population
minimizer $(L^\star,v^\star)$, empirical suboptimality gives
\[
\begin{split}
    J(\widehat L,\widehat v)
    &\le\widehat J(\widehat L,\widehat v)+\varepsilon/2\\
    &\le\widehat J(L^\star,v^\star)+\gamma+\varepsilon/2
     \le J(L^\star,v^\star)+\gamma+\varepsilon.
\end{split}
\]
Use \Cref{thm:distortion-upper-bound} to convert risk to excess.
\end{proof}

For each $L$, minimizing $\widehat J$ uses the same LP because its
distortion term is constant in $v$. Enumerating all $\binom nm$ sets
therefore realizes global empirical minimization in exact arithmetic.
The combinatorial factor is not polynomial for unrestricted $m$.
The pilots enumerate all 36 pairs and solve their LPs numerically;
they do not provide exact-arithmetic optimality certificates. A general
heuristic requires an explicit optimization-error allowance $\gamma$
unless its global suboptimality is certified.

\subsection{Independent validation of frozen candidates}
\label{app:validation-proof}

Let $\mathcal A_1,\ldots,\mathcal A_{J_0}$ be value-greedy
policies generated and fitted using only training data, and then frozen.
For each candidate define its normalized episode excess
\[
    Z_j(X)=\frac{\mathrm{ALG}_j(X)-\mathrm{OPT}(X)}{\tau}.
\]
Select $\widehat j$ by minimizing mean $Z_j$ over independent
validation episodes.

\begin{theorem}[Finite-candidate validation]
\label{thm:validation-selection}
If
\begin{equation}
    N_{\mathrm{val}}\ge\frac{8D^2}{\varepsilon^2}
                        \log\frac{2J_0}{\beta},
    \label{eq:validation-sample-size}
\end{equation}
then, with probability at least $1-\beta$,
$\E_X Z_{\widehat j}(X)\le\min_{j\le J_0}\E_X Z_j(X)+\varepsilon$.
\end{theorem}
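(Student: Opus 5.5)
The plan is the standard finite-class Hoeffding-plus-union-bound argument, followed by the usual two-sided comparison between the selected candidate and the best one.

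\textbf{Step 1: range of $Z_j$.} We need $Z_j(X)$ to lie in an interval of length at most $2D$. Nonnegativity follows from \Cref{lem:gap-identity}: every Bellman gap is nonnegative, so $\mathrm{ALG}_j\ge\mathrm{OPT}$. For the upper end, each candidate is a value-greedy policy using $\widehat w_t=E_Lv_t$ for $t<T$ and $\widehat w_T=0$. The value-span bound \eqref{eq:value-span-bound-appendix} gives $\mathrm{ALG}_j-\mathrm{OPT}\le\sum_{t\le\tau}\|\widehat w_t-w_t\|_{\mathrm{sp}}$. Both $E_Lv_t$ and $w_t$ are $1$-Lipschitz, so each term is at most $2D$, as remarked after the proof of \Cref{thm:global_upper_bound}. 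Hence $0\le Z_j(X)\le 2D$ for every episode $X$, regardless of how the predictions were fitted.

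\textbf{Step 2: concentration for each frozen candidate.} The candidates are frozen using only training data, so conditional on training each $Z_j$ is a fixed function of a fresh episode. The validation values $Z_j(X^{(1)}),\dots,Z_j(X^{(N_{\mathrm{val}})})$ are therefore i.i.d. and bounded in $[0,2D]$. Hoeffding's inequality with deviation $\varepsilon/2$ gives
\[
\Pr\bigl[|\widehat Z_j-\E Z_j|>\varepsilon/2\bigr]\le 2\exp\bigl(-N_{\mathrm{val}}\varepsilon^2/(8D^2)\bigr).
\]
A union bound over the $J_0$ candidates makes the total failure probability at most $\beta$ under \eqref{eq:validation-sample-size}.

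\textbf{Step 3: comparison.} Let $j^\star$ minimize $\E Z_j$. On the good event,
\[
\E Z_{\widehat j}\le\widehat Z_{\widehat j}+\varepsilon/2\le\widehat Z_{j^\star}+\varepsilon/2\le\E Z_{j^\star}+\varepsilon,
\]
where the middle inequality holds because $\widehat j$ minimizes the empirical mean.

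\textbf{Main obstacle.} The only delicate point is Step 1. We must make sure the bound $2D$ holds even for arbitrary fitted predictions, which the Lipschitz-span remark guarantees. We must also handle the conditioning on the training data correctly, so that independence of the validation episodes from the candidates justifies Hoeffding's inequality.
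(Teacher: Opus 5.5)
Your proposal is correct and follows essentially the same route as the paper's proof. It obtains the range $0\le Z_j\le 2D$ from the gap identity and the Lipschitz diameter bound, applies Hoeffding at deviation $\varepsilon/2$ with a union bound over the $J_0$ frozen candidates, and compares the empirical winner with the population-best candidate. The only step the paper states explicitly and you leave implicit is conditioning on the training set together with any other randomness used to build the candidates, then dropping the conditioning because the bound holds for every realization.
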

\begin{proof}
Condition on the training set and all randomness used to construct
the candidates. By \Cref{lem:gap-identity} and the Lipschitz diameter
bound, $0\le Z_j\le2D$ for every candidate and episode. Hoeffding's
inequality gives a failure probability at most
$2\exp(-N_{\mathrm{val}}\varepsilon^2/(8D^2))$ for deviation
$\varepsilon/2$ for one candidate. Union-bound over $J_0$ candidates
and compare the empirical winner to the population-best candidate,
using the deviation bound twice. The conclusion holds for every
conditioning and hence also without conditioning.
\end{proof}

The guarantee compares to the best member of the frozen family, not
the globally best landmark set. It permits arbitrary training-only
candidate generation and needs no global optimization certificate.
Repeatedly modifying candidates using validation outcomes would no
longer satisfy the stated frozen-family assumption. Test episodes are
used only for final evaluation. None of the learning results above
asserts a complexity bound for unrestricted contextual predictors.

\section{Experimental Protocol and Reproducibility}
\suppressfloats[t]
\label{app:experiment-details}

This appendix expands the protocol in \Cref{sec:experiments}, first for
the Citi Bike-derived benchmark and then for the synthetic fixed-table
pilots. Additional results appear in
\Cref{app:additional-results}.

\subsection{Demand data and preprocessing}
\label{app:citibike-data}

The benchmark uses trip start times and coordinates from the official
Citi Bike archive \citep{citibikeData}. The frozen input comprises the
2023 annual archive and twelve monthly archives for each of 2024 and 2025:
25 ZIP files. Across 1,096 unique dates,
125,181,021 raw trip records yield 100,861,883 records after filtering.
\Cref{tab:citibike-splits} specifies the chronological split. The final
refit includes the initial training and validation periods; it is not
an additional independent data block.

\begin{table}[htbp]
\centering\small
\caption{Citi Bike data split. The final refit overlaps the two preceding rows.}
\label{tab:citibike-splits}
\begin{tabular}{@{}llrr@{}}
\toprule
Split & Dates & Days & Filtered trips \\
\midrule
Initial fit & 2023-01-01--2024-09-30 & 639 & 55,497,434 \\
Validation & 2024-10-01--2024-12-31 & 92 & 8,910,606 \\
Final refit & 2023-01-01--2024-12-31 & 731 & 64,408,040 \\
Evaluation & 2025-01-01--2025-12-31 & 365 & 36,453,843 \\
\bottomrule
\end{tabular}
\end{table}

The geographic filter is the rectangle with latitude $[40.70,40.88]$ and
longitude $[-74.025,-73.90]$, not an exact borough boundary. The observed
latitude range in the initial 639 training days is divided into ten
equal-width bins and frozen. Later coordinates outside that fitted range
are clipped to the edge bins. Longitude is used only in the rectangle
filter; distances are measured in latitude-bin units. There is no road
network, inventory, capacity, return-trip, traffic, or truck-loading model.

Each slot requests the bin with the largest trip count, breaking count
ties toward the smallest index. Empty slots receive the special label
$-1$; missing date coverage raises an error. Fifteen-minute requests
are formed by summing minute/bin counts before taking the most frequent
bin, rather than aggregating minute-level modal labels. Archive-local
New York wall-clock times give 96 or 1,440 slots per day, with repeated
daylight-saving slots merged. Every day starts a new service episode.

Modal aggregation concentrates the requests: bins 1 and 2 account for
90.79\% of nonempty 15-minute requests and 75.72\% of nonempty one-minute
requests across the processed traces. Mean active bins per day are 3.89
and 8.09, respectively; within-day adjacent-repeat rates, including empty
labels, are 69.81\% and 40.44\%. The 2025 traces contain eight empty
15-minute slots and 3,639 empty minute slots. These statistics delimit the
benchmark's scope. The finer trace is less concentrated but can be easier
to compress, so concentration alone does not explain the results.

\subsection{Configuration metric, online rule, and MTS encoding}
\label{app:citibike-model}

Use $K$ for the number of servers and $m=|L|$ for the landmark budget,
consistently with the theory. The report's state count $N$ is denoted by
$n$ here; its landmark budget $m$ uses the same notation. A state is a sorted set of distinct bins:
\begin{equation}
    \mathcal C_K=\{C\subseteq\{0,\ldots,9\}:|C|=K\},\quad
    n=\binom{10}{K},\quad
    d(C,C')=\sum_{i=1}^{K}|c_i-c'_i|.
    \label{eq:citibike-metric}
\end{equation}
The MTS states are whole configurations in the induced matching metric.
The starting configurations for $K=3,4,5,6$ are respectively
$\{0,4,9\}$, $\{0,3,6,9\}$, $\{0,2,4,7,9\}$, and $\{0,2,4,5,7,9\}$;
the state counts are $120,210,252,210$.

Using the paper's one-based request convention, let $r_t$ be the request
at round $t$. The exact continuation values and the predicted-value rule are
\begin{align}
    w_T^d(C)&=0, &
    w_{t-1}^d(C)&=\min_{C':r_t^d\in C'}
            \{d(C,C')+w_t^d(C')\}, \label{eq:citibike-bellman}\\
    C_t&\in\argmin_{C':r_t\in C'}
            \{d(C_{t-1},C')+\widetilde P_t(C')\}. &&
    \label{eq:citibike-policy}
\end{align}
Here $d$ as a superscript denotes the date, and
$\mathrm{OPT}_d=w_0^d(C_0)$. For empty requests, the implementation uses
the identity Bellman operator and leaves the online configuration unchanged.
For nonempty requests, several servers may move and all movement is charged.
Scores within $10^{-12}$ of the minimum are tied; smaller movement is
preferred, followed by the fixed configuration index. Exact DP uses float64.
Oracle Full receives exact evaluation-day values at every state and must
attain OPT; fitted predictors never receive these values during decisions.

\paragraph{Finite service costs.}
The hard feasibility condition admits an equivalent finite-cost MTS
encoding for these value-greedy policies and the offline optimum. Let
$D_{\mathcal C}$ be the configuration diameter, choose any $H>2D_{\mathcal C}$,
and set $c_t(C)=0$ if $r_t\in C$ and $c_t(C)=H$ otherwise.
For any $1$-Lipschitz continuation $f$, any infeasible state $A$, and any
feasible state $B$,
\[
 d(C,B)+f(B)-[d(C,A)+H+f(A)]
 \le 2d(A,B)-H<0.
\]
An infeasible state therefore cannot minimize the finite-cost objective.
Backward induction from the terminal zero identifies its canonical values
with \eqref{eq:citibike-bellman}; all evaluated predictions and their
extensions are also $1$-Lipschitz. For an empty request, set $c_t\equiv0$:
the Bellman operator fixes a $1$-Lipschitz vector, and minimum-movement
tie-breaking selects the current state. The implementation directly enforces
feasibility, avoiding a large numerical penalty.

\paragraph{Classical references.}
WFA \citep{koutsoupias1995kserver} maintains the extended work function for the request history and the
fixed initial configuration, then minimizes its work value plus movement
over feasible next configurations. DC \citep{chrobak1991server} moves the closest server for a
request outside the occupied interval; between servers, the two neighbors
move at equal speed until one reaches the request. DC may use coincident
continuous positions, while DP and the prediction policies use discrete
$K$-element subsets. We retain this implementation difference explicitly
and use their measured costs, rather than importing a worst-case theorem
as an empirical guarantee. All reported dates have positive OPT. Mean
daily OPT for $K=3,4,5,6$ at 15 minutes is respectively
$10.1973,7.3589,4.4685,4.0712$; for $K=4$ at one minute it is $61.2164$.

\subsection{Predictors, landmark fitting, and prediction access}
\label{app:citibike-landmarks}

The mean predictor uses the same table on every evaluation date:
\[
    P_t^{\rm mean}(C)=\frac{1}{|\mathcal D_{\rm fit}|}
        \sum_{d\in\mathcal D_{\rm fit}}w_t^d(C),\qquad
    \widetilde P_t(C)=\min_{\ell\in L}\{P_t(\ell)+d(C,\ell)\}.
\]
The average of canonical values is $1$-Lipschitz. The extension interpolates
the retained predicted dual values, and $L=\mathcal C_K$ recovers Full. The terminal
prediction is explicitly zero for every method, including compressed ones.

Random uses nested prefixes of a seeded random permutation of all states.
Geometric starts at a distance medoid and adds the state farthest from its
nearest landmark. Distortion greedily minimizes
\begin{equation}
    \widehat J_{\rm recon}(L)=\frac1S\sum_{j=1}^S
        \bigl\|E_L(v^j|_L)-v^j\bigr\|_{\rm sp},\qquad S\le1024,
    \label{eq:citibike-selector}
\end{equation}
on sampled training day/slot exact-value vectors, reusing the same rows
throughout a greedy run. Each addition minimizes this empirical objective
among candidates; there is no global subset optimization or test-cost
minimization. The exact-value reconstruction error is nonnegative at every
state and zero at landmarks, so its span equals its maximum over states.
Distortion seeds 0--4 vary the sampled rows; Random seeds 0--9 vary the permutations.
Budgets use nested prefixes within each seed. These seeds do not resample
entire training datasets.

Equation~\eqref{eq:citibike-selector} selects landmarks using reconstruction
distortion, with predictors fitted separately; see \Cref{app:empirical-scope}
for its relation to the learning guarantees. Across the temporal predictors, the same original
landmark sets are reused within each split, with no predictor-specific
or evaluation-specific refitting of those sets.

Landmark identities and the known metric are outside the prediction budget.
The prediction budget is $m$ values per slot, reported as a fraction $m/n$.
The implementation computes full prediction vectors before restricting
them to landmarks; this budget measures values supplied to the policy,
not end-to-end computational, storage, or communication savings.

\subsection{Temporal predictors and structural Lipschitz outputs}
\label{app:temporal-predictors}

The temporal extension fixes $K=4$ and 15-minute requests. Static potential
averages the mean table over nonterminal times and holds that configuration
potential constant until the terminal zero. The calendar MLP receives six
features: sine/cosine pairs for time of day, weekday, and position in the
year, accounting for leap-year length. A 32-unit Tanh hidden layer is
concatenated with those calendar features before the $n$-value linear head.
The GRU \citep{cho2014learning} has one 32-unit recurrent layer, receiving
the same calendar features and an 11-class one-hot request vector (ten
bins and the empty label). Its hidden state and calendar features feed the
linear head. The MLP has 8,414 parameters and the GRU 13,086; this is not
a parameter-matched comparison.

The hidden state resets daily. In the one-based notation of
\eqref{eq:citibike-policy}, $P_t$ uses only the requests $r_1,\ldots,r_t$
already observed before that decision. Equivalently, the implementation's
zero-based $P_{j+1}$ follows the update on $r_j$. Predictions do not depend
on the policy's server state, so all compression variants receive the same
causally available predictor values. No feature scaler is fitted.

For arbitrary network scores $z_t$, the deterministic output map is
\begin{align}
 a_t(C)&=\min_{C'}\{z_t(C')+d(C,C')\}, &
 b_t(C)&=\max_{C'}\{z_t(C')-d(C,C')\},\nonumber\\
 q_t(C)&=\tfrac12[a_t(C)+b_t(C)], &
 P_t(C)&=q_t(C)-\frac1n\sum_{C'}q_t(C'),\qquad P_T=0.
 \label{eq:neural-envelope}
\end{align}
Every distance cone is $1$-Lipschitz; taking a finite minimum or maximum
preserves the common bound, as do averaging and subtracting a constant.
Consequently, every parameter setting and history yields spatially
$1$-Lipschitz values. If $z_t$ is already $1$-Lipschitz, both envelopes equal
$z_t$. This is a constraint map, not a claimed Euclidean projection or a
Lipschitz guarantee with respect to input histories.

The target is $Y_t^d(C)=w_t^d(C)-n^{-1}\sum_{C'}w_t^d(C')$. Training
minimizes mean squared error over dates, nonterminal times, and configurations.
AdamW uses learning rate $10^{-3}$, weight decay $10^{-4}$, 16 complete
days per batch, gradient clipping at 1.0, at most 100 epochs, and patience
12. Training is float32; at evaluation, raw scores are converted to
float64 before the output map, reconstruction, and rollout. Deterministic
algorithms are enabled and TF32 is disabled. Gradient clipping is unrelated
to the spatial guarantee. Each network uses training seeds 0, 1, and 2.
Validation MSE selects the epoch count, followed by a fresh refit on all
731 fitting days for that count. All selection runs reach the 100-epoch
cap; this fixed-budget protocol does not establish convergence to the best
possible model.

\subsection{Budgets and data sensitivity}
\label{app:citibike-protocol}

The mean-predictor suite uses nominal fractions
$\rho\in\{0.05,0.10,0.20,0.30,0.40,0.60,0.80,1.00\}$ and
$m=\max\{1,\operatorname{round}(\rho n)\}$. For $n=210$, the budgets are
$10,21,42,63,84,126,168,210$; the nominal 5\% budget is actually 4.76\%.
For $K=5$, the nominal 20\% budget is $50/252=19.84\%$.
The original five settings comprise four server counts at 15 minutes and
$K=4$ at one minute. The supplement adds $K=3,5,6$ at one minute, so all
eight settings have validation and annual evaluation runs.
The original predictor and landmark settings are frozen; validation
records diagnostics rather than supporting a broad hyperparameter search.
The temporal extension uses budgets $10,21,42,63,210$, crossing each of
three neural seeds with ten Random or five Distortion seeds.

Budgets count predicted values per slot, not per day. At a fixed $m$, the
one-minute horizon supplies 15 times as many values per episode as
the 15-minute horizon. Both traces come from the same raw trips, but
count-based modal aggregation changes the requests as well as the horizon
and OPT; it is not simple subsampling. Cross-resolution comparisons do
not hold daily communication volume or the request distribution fixed.

Seven further runs fix $K=4$, 15-minute requests, and $m=42$, using
$8,16,32,64,128,256,512$ fitting days. These are nested prefixes of one
seed-0 permutation of the 731-day fitting pool; the full-pool run supplies
the 731-day reference. The geographic bins and metric still use the initial
639-day calibration, which is not counted on the fitting-day axis. The
experiment measures sensitivity along one date-sampling path, not repeated
training-set sampling or empirical sample-complexity scaling.

Neural validation scores select the epoch count; final evaluation uses
separately refitted networks. Validation and annual evaluation have
different fitting pools and seasons.

The first ten evaluation dates (January 1--10, 2025) were used in preliminary
development, and additional analyses were developed after inspecting the
evaluation year. Excluding these dates is a sensitivity check rather than
a fresh holdout evaluation (\Cref{app:a100-sensitivity}).

\subsection{Metrics and statistical interpretation}
\label{app:citibike-statistics}

For each date, average all applicable training-seed and landmark-seed
combinations before averaging across dates:
\begin{equation}
 R_d=\frac{1}{|\mathcal S|}\sum_{s\in\mathcal S}
       \frac{\mathrm{ALG}_{d,s}}{\mathrm{OPT}_d},\qquad
 \bar R=\frac{1}{|\mathcal D_{\rm eval}|}\sum_d R_d,\qquad
 \operatorname{Ret}(m)=\frac{B-\bar R_m}{B-\bar R_{\rm Full}},
 \label{eq:citibike-metrics}
\end{equation}
where $B=\min\{\bar R_{\rm WFA},\bar R_{\rm DC}\}$ and performance retention
is defined only when Full improves on $B$. This is the mean of daily ratios, not a
ratio of annual total costs or an estimated worst-case competitive ratio.
Each date has equal weight regardless of OPT. Retention above 100\% means
lower observed cost than Full; negative retention means worse cost than
the classical reference. Its denominator depends on the predictor, so
cross-predictor comparisons must also use the actual costs.

With $B_t$ denoting the Bellman operator for the observed request, report
\begin{equation}
 \eta_d(P)=\sum_{t=1}^T\|B_tP_t-P_{t-1}\|_{\rm sp},\qquad
 E_{\rm recon}=\frac1T\sum_{t=0}^{T-1}
                   \|\widetilde P_t-P_t\|_{\rm sp}.
 \label{eq:citibike-errors}
\end{equation}
Full has zero reconstruction error relative to itself, but generally
nonzero residual. This reconstruction diagnostic is not the exact-value
distortion $\kappa_t(L;X)$ in the theory: it compares against predicted
Full values rather than $w_t$. Centered value RMSE and value-error span
are additional temporal diagnostics. For a fixed mean table, reconstruction
span is constant across evaluation dates, so a conditional date interval
can collapse to a point. These diagnostics need not rank predictors in the
same order as online cost.

Original-suite marginal intervals use 2,000 ordinary date-bootstrap
replicates. Its paired comparisons additionally use circular seven-day
blocks (2,000 replicates, seed 20260919). Temporal headline and paired
intervals use seven-day blocks (2,000 replicates, seed 20260921); ordinary
date intervals are also preserved in the experiment archive. Negative
paired differences favor the left-hand method. Supplementary mean,
cross-resolution and oracle intervals use 2,000 circular seven-day block
replicates with seed 20260922; ordinary date intervals are also saved.
Recomputed control intervals can differ slightly from the original
report because their bootstrap seed changes. Tables identify their source
protocol, and original observations are not pooled again as new evidence.
All intervals condition on the fixed training data and seed set;
they do not include uncertainty from
independently resampled training datasets. Blocks retain some short-term
dependence but do not eliminate seasonal or cross-year shifts. Comparisons
are unadjusted for multiplicity; crossing zero is not an equivalence test.

\subsection{Frozen-landmark oracle protocol and certificate indexing}
\label{app:oracle-protocol}

The exact-value diagnostic covers all eight server-count/resolution
settings, using the training-fitted landmark sets, budgets, numerical
precision and tie rules from \Cref{app:citibike-protocol,app:citibike-model}.
Random has ten selection seeds, Distortion five, and Geometric one.

\paragraph{Oracle inputs and interpretation.}
The oracle replaces predicted dual values at frozen landmarks by their
exact evaluation-day values. It uses the same McShane reconstruction,
terminal zero, empty-request convention, and value-greedy action rule.
Oracle Full must achieve OPT. At fixed $L$, predicted-compressed minus
oracle-compressed cost is the net effect of replacing predicted dual values;
it can have either sign. Oracle excess is the decision loss with exact
dual values and the chosen reconstruction. These comparisons do not
identify independent additive causal sources of prediction and compression
loss. In particular, fitted landmark sets, request distributions, horizons,
and OPT all differ between resolutions.

\paragraph{Distortion and certificate indexing.}
The report's zero-based $V_t^*$, defined before request $r_t$, is the paper's
$w_t$, defined after the first $t$ requests. For a fixed date and landmark
set, put $\widetilde w_t=E_L(w_t|_L)$ for $t<T$ and override
$\widetilde w_T=0$. Define
\begin{equation}
 e_t=\|\widetilde w_t-w_t\|_{\rm sp},\qquad
 e_T=0,\qquad S_0=\sum_{t=0}^{T-1}e_t,\qquad \bar e=S_0/T.
 \label{eq:oracle-distortion-indexing}
\end{equation}
For the decision times $1\le t<T$, $e_t$ is exactly the paper's
$\kappa_t(L;X)$ in \eqref{eq:distortion-definition}. The report's
$\kappa_\Sigma$ is $S_0$, including $e_0$; it is not the decision-time
sum $\sum_{t=1}^{T-1}\kappa_t=S_0-e_0$. Its reported ``mean distortion''
is $\bar e$, distinct from the predictor-relative reconstruction error
in \eqref{eq:citibike-errors}. Across resolutions, $S_0$ additionally
depends on the number of slots.

Let $\mathcal A=\{t\in\{1,\ldots,T\}:r_t\ne-1\}$ use the paper's
one-based request convention. The archived bounds become
\begin{align}
 0\le \mathrm{ALG}_{\rm or}-\mathrm{OPT}
    &\le \eta(\widetilde w)
     \le \underbrace{e_0+2\sum_{t=1}^{T-1}e_t}_{B_{\rm adj}=2S_0-e_0},
       \label{eq:oracle-residual-certificate}\\
 \mathrm{ALG}_{\rm or}-\mathrm{OPT}
    &\le \underbrace{\sum_{t\in\mathcal A}e_t}_{B_{\rm act}}
     \le S_0-e_0.
       \label{eq:oracle-active-certificate}
\end{align}
Here $\eta$ is defined in \eqref{eq:citibike-errors}.
The residual bound follows from \Cref{app:residual-comparison}:
$\|B_t\widetilde w_t-\widetilde w_{t-1}\|_{\rm sp}\le e_t+e_{t-1}$,
whose sum is $2S_0-e_0$ because $e_T=0$.
The active-request bound instead sums the exact Bellman gaps in
\eqref{eq:bellman_gap_span}. Empty requests leave the state unchanged
and contribute zero, while the terminal error is zero.
Thus \eqref{eq:oracle-active-certificate} is the value-distortion bound
for exact landmark values, with the known empty rounds removed.

\paragraph{Per-date checks and same-state action comparison.}
Bounds are checked for each date and selection seed before aggregation;
the reported entries average each date's bound divided by its OPT.
Oracle Full equality and zero residual and distortion at $m=n$ are
checked as well.

For $K=5$, an additional diagnostic compares predicted Full and Distortion
20\% actions at each state actually visited by predicted Full. If $a_t$
and $b_t$ are their two actions from that same state $s_{t-1}^{F}$, compute
\[
 \Delta Q_t=d(s_{t-1}^{F},b_t)+w_t(b_t)
             -d(s_{t-1}^{F},a_t)-w_t(a_t).
\]
Negative values favor the compressed action under exact continuation.
The reported statistic averages $\sum_t\Delta Q_t/\mathrm{OPT}$ across
dates and seeds. Because states are drawn from Full's trajectory, this
sum is not the difference in the two policies' realized trajectory costs.

\subsection{Synthetic pilot data and split independence}
\label{app:data-generation}

The metric is $M=\{x_j=j/8:j=0,\ldots,8\}$ with
$d(x_i,x_j)=|x_i-x_j|$, diameter $D=1$, and $s_0=x_4=0.5$.
Every episode has $T=12$ tasks. For each of the seeds $7,19,41$, a
fresh NumPy \texttt{default\_rng(seed)} generator creates the training,
validation, and test splits sequentially, with sizes $32$, $64$, and
$256$. Each scenario restarts the generator with the same seed values.
No external dataset or preprocessing is involved.

The service costs are
\begin{equation}
 c_{it}(x_j)=A_{it}(x_j-\mu_{it})^2+U_{itj},\qquad
 \mu_{it}=\operatorname{clip}_{[0,1]}(\bar\mu_t+\xi_i+\zeta_{it}),
 \label{eq:experiment_costs}
\end{equation}
where $\xi_i\sim\mathcal N(0,0.045^2)$,
$\zeta_{it}\sim\mathcal N(0,0.035^2)$, and
$U_{itj}\sim\mathrm{Unif}[0,0.025]$.
All underlying draws
are independent, except that the same episode shift is deliberately
shared across its tasks. The preferred center is clipped to $[0,1]$;
the resulting costs are already nonnegative and receive no further
normalization. Generator phase $t=0,\ldots,11$ corresponds to paper
task $t+1$.

\begin{table}[htbp]
\centering
\small
\caption{Complete pilot configuration. Gaussian scales are standard
deviations. Both scenarios use the same split sizes and landmark budget.}
\label{tab:appendix-configuration}
\begin{tabular}{@{}lll@{}}
\toprule
Quantity & Localized & Switching \\
\midrule
States / horizon / pair budget & $9\,/\,12\,/\,2$ & $9\,/\,12\,/\,2$ \\
Train / validation / test per seed & $32\,/\,64\,/\,256$ & $32\,/\,64\,/\,256$ \\
Preferred-center template $\bar\mu_t$
    & $0.68+0.11\sin(2\pi t/6)$ & $0.50+0.30\sin(2\pi t/6)$ \\
Episode / task Gaussian scales & $0.045\,/\,0.035$ & $0.045\,/\,0.035$ \\
Amplitude distribution & $\mathrm{Unif}[1.5,2.5]$ & $\mathrm{Unif}[4,7]$ \\
Per-state additive noise & $\mathrm{Unif}[0,0.025]$ & $\mathrm{Unif}[0,0.025]$ \\
Enumerated pair candidates & $\binom92=36$ & $\binom92=36$ \\
\bottomrule
\end{tabular}
\end{table}

\subsection{Label computation, fitting, and tie rules}
\label{app:implementation-details}

The label routine \texttt{bellman} evaluates
\eqref{eq:bellman_recursion} in double precision. It stores a
$(T+1)\times n$ array with a zero terminal row. The fitting routine
anchors the first landmark in each sorted set and solves
\eqref{eq:span-erm-lp} using SciPy's \texttt{linprog} with
\texttt{method='highs'}. The implementation
uses sparse constraint matrices, the solver's default options, and checks solver success;
there is no tuning on validation or test data. The returned table has
shape $(T+1)\times m$, with only rows $1,\ldots,T-1$ optimized and
unused time-zero and terminal rows set to zero.
All-zero tables are returned directly for a singleton.

The pilot constructs pairs in lexicographic order. NumPy's first-index
minimum rule resolves exact ties in pair selection and online actions;
states are ordered by index. If the LP has multiple minimizers, the
solver-returned optimum is used without a secondary objective. The
locked environment records the implementation that produced the
archived outputs; different solver versions can choose different
optimal tables.

\paragraph{Selection methods.}
Each of the 36 pairs is fitted using only the training split.
The distortion method adds the empirical $\kappa$ average to the
LP's empirical span loss and chooses a minimizing pair, implementing
\Cref{thm:joint-learning} up to numerical optimization error.
The validation method freezes those fitted tables and chooses the
pair with the lowest mean validation excess, as in
\Cref{thm:validation-selection}. The geometric baseline minimizes,
in order, covering radius, total distance of all states to the pair,
and the pair's lexicographic index; it always selects
$\{x_1,x_6\}=\{1/8,3/4\}$.
The random baseline is the arithmetic average of the test means of
all 36 fitted pairs. It represents the exact expected performance
of a uniformly drawn pair and does not select using test outcomes.

The singleton baseline compares all nine anchored zero-table policies
on validation episodes. The full-state baseline fits $L=M$ using the
same training split and LP, with no additional input information.
All selection is completed before test performance is inspected.

\paragraph{Information available to each policy.}
A fitted rollout uses only its fixed table, the metric, its current
state, and the current service-cost vector. Exact test values are used
offline for $\mathrm{OPT}$ and the certificates below. The code's
explicit \texttt{v=None} mode instead supplies exact landmark values
at each time; only the rows labeled oracle use this mode.
Every rollout uses $\widehat w_T=0$ on all states.

\subsection{Metrics, certificates, and aggregation}
\label{app:evaluation-metrics}

For every test episode, the primary metric is total episode excess
$e(X)=\mathrm{ALG}(X)-w_0^X(s_0)$, not excess divided by $T$ or
by $\mathrm{OPT}$. For seed $s$, let
$\bar e_s=256^{-1}\sum_{i=1}^{256}e(X_{si})$. The entries in
\Cref{tab:pilot_results} are
\begin{equation}
    \bar e=\frac13\sum_s\bar e_s,
    \qquad
    \operatorname{sd}_{\rm seed}
       =\sqrt{\frac1{3-1}\sum_s(\bar e_s-\bar e)^2}.
    \label{eq:seed-aggregation}
\end{equation}
Thus the uncertainty shown is the sample standard deviation of three
seed means, not a confidence interval or the standard deviation of
the 768 pooled episode costs. No hypothesis test is reported.

The recorded offline certificates are
\begin{equation}
\begin{aligned}
    C_\kappa(X)&=\sum_{t=1}^{\tau}
           [\kappa_t(L;X)+2\delta_t],\\
    C_{\rm loc}(X)&=\sum_{t=1}^{\tau}
           \min\{2D,\,2\rho_t^\star(s_{t-1};L)+2\delta_t\}.
\end{aligned}
\label{eq:recorded-certificates}
\end{equation}
Their validity follows from
\Cref{thm:distortion-upper-bound,thm:local-upper-bound} and the
diameter bound. The implementation identifies numerical Bellman
minimizers with \texttt{isclose} using absolute and relative tolerances
$10^{-10}$. Online actions themselves use the unmodified first-index
argmin. Pilot certificate inequalities are checked with additive
tolerance $10^{-8}$; these checks accommodate floating-point arithmetic
rather than changing the mathematical definition of a successor.

\subsection{Software and reproduction}
\label{app:reproduction}
\label{app:citibike-reproduction}

The Citi Bike runs used Python 3.11.6, NumPy 1.26.4, pandas 2.3.3,
PyTorch 2.5.1 and CUDA 11.8. Mean-predictor runs used two A100 80GB GPUs,
neural training used one, and additional resolution/oracle runs used CPUs
with four threads per process. DP and evaluation use float64; neural
training and greedy landmark scoring use float32. Synthetic DP and LP runs used
Linux x86\_64 CPUs with Python 3.13.5, NumPy 2.3.5, SciPy 1.17.0 and
Matplotlib 3.10.8.

The accompanying code package's \path{README.md} and
\path{docs/REPRODUCING.md} describe setup, data preparation and execution.
Run \texttt{bash scripts/reproduce\_synthetic.sh} to rerun both synthetic
pilots and implementation checks, and compare with
\path{reference_results/synthetic/}. Raw Citi Bike data require a separate
download; trained weights are not included. In the paper repository,
\path{scripts/build_a100_assets.py} regenerates tables, plots and supplementary
CSV files from the frozen report tables in \path{data/}, without retraining
or recomputing date-level statistics. \path{supplementary/README.md} indexes
the detailed numerical results and implementation records.

\section{Additional Experimental Results}
\suppressfloats[t]
\label{app:additional-results}

We report budget and resolution effects, compression across predictors,
exact-value diagnostics, sensitivity checks, and synthetic fixed-table
learning under the protocols in \Cref{app:experiment-details}.
Complete numerical sweeps and implementation records accompany the paper
in the supplementary files described in \Cref{app:citibike-reproduction}.

\subsection{Prediction budgets and resolution}
\label{app:increment-resolution}

\Cref{fig:a100-complete-budgets} combines the predicted and exact-value
budget sweeps across all eight settings. Small budgets can fail: for $K=4$ at
15 minutes, retaining $10/210$ predicted values gives Distortion cost
ratio $1.8126$, above WFA's $1.7067$. At $63/210$, Distortion and
Geometric reach $1.1527$ and $1.1528$, compared with Full's $1.1524$.
The curves also retain nonmonotone behavior across budgets.

\begin{figure}[htbp]
    \centering
    \input{figures/a100_complete_budget_curves.tex}
    \caption{Predicted and exact-value budget--cost curves across all eight
    settings. Distortion uses the same frozen landmarks for predicted and
    oracle inputs; Oracle Full equals OPT. Random and Geometric have complete
    sweeps in five settings and only the reported 20\% points in the other
    three one-minute settings. Lines connect supplied observations; axis
    ranges differ. Exact-value results are discussed in \Cref{app:increment-oracle-results}.}
    \label{fig:a100-complete-budgets}
    \label{fig:increment-oracle}
\end{figure}

At approximately 20\%, the one-minute Distortion--Full mean differences
for $K=3,4,5,6$ are $-0.001747,-0.000142,-0.000293,0$.
\Cref{tab:increment-resolution} gives their paired intervals and
cross-resolution contrasts; each resolution is normalized by its own OPT.
For $K=5$ at one minute, the block interval
$[-0.000662,-0.000011]$ excludes zero, whereas ordinary date resampling
gives $[-0.000680,+0.000009]$. The apparent significance of this tiny
difference is sensitive to the resampling scheme. For $K=6$, the recorded
zero gap and $[0,0]$ interval apply to the evaluated dates and seeds.
\Cref{tab:a100-paired} reports comparisons against Random and Geometric
for the five settings with archived paired selector comparisons.

\begin{table}[htbp]
\centering\small
\setlength{\tabcolsep}{4pt}
\caption{Distortion minus Full with approximately 20\% of predicted values retained, and its paired cross-resolution contrast. Each resolution uses its own daily OPT. All intervals here use circular seven-day blocks, 2,000 replicates, seed 20260922.}\label{tab:increment-resolution}
\begin{tabular}{@{}llrr@{}}
\toprule
$K$ & Resolution / contrast & $\Delta$ or contrast & 95\% block CI \\
\midrule
3 & 1 min & -0.001747 & [-0.003456, -0.000187] \\
3 & 15 min & +0.000169 & [+0.000000, +0.000506] \\
3 & 1 min minus 15 min & -0.001916 & [-0.003676, -0.000297] \\
4 & 1 min & -0.000142 & [-0.000862, +0.000551] \\
4 & 15 min & +0.0518 & [+0.0306, +0.0728] \\
4 & 1 min minus 15 min & -0.0520 & [-0.0729, -0.0306] \\
5 & 1 min & -0.000293 & [-0.000662, -0.000011] \\
5 & 15 min & -0.1074 & [-0.1733, -0.0401] \\
5 & 1 min minus 15 min & +0.1071 & [+0.0401, +0.1730] \\
6 & 1 min & +0.000000 & [+0.000000, +0.000000] \\
6 & 15 min & +0.0097 & [-0.0075, +0.0265] \\
6 & 1 min minus 15 min & -0.0097 & [-0.0265, +0.0075] \\
\bottomrule
\end{tabular}
\end{table}

\begin{table}[htbp]
\centering\small
\setlength{\tabcolsep}{4pt}
\caption{Paired selector differences at approximately 20\%. Negative differences favor Distortion. Intervals use 2,000 seven-day block replicates (seed 20260919) and are unadjusted for multiple comparisons. Distortion--Full intervals are in \Cref{tab:increment-resolution}.}\label{tab:a100-paired}
\begin{tabular}{@{}llrr@{}}
\toprule
Setting & Comparison & Difference & 95\% block CI \\
\midrule
$K=3$, 15 min & Distortion - Random & -0.43294 & [-0.4845, -0.3891] \\
$K=3$, 15 min & Distortion - Geom. & -0.14221 & [-0.2242, -0.0664] \\
$K=4$, 15 min & Distortion - Random & -0.33533 & [-0.3843, -0.2904] \\
$K=4$, 15 min & Distortion - Geom. & -1.74355 & [-1.8554, -1.6376] \\
$K=5$, 15 min & Distortion - Random & -0.98201 & [-1.0892, -0.8776] \\
$K=5$, 15 min & Distortion - Geom. & -0.25517 & [-0.3431, -0.1614] \\
$K=6$, 15 min & Distortion - Random & -0.29947 & [-0.3421, -0.2657] \\
$K=6$, 15 min & Distortion - Geom. & -0.14265 & [-0.1835, -0.1017] \\
$K=4$, 1 min & Distortion - Random & -2.23429 & [-2.3398, -2.1324] \\
$K=4$, 1 min & Distortion - Geom. & -0.01089 & [-0.0201, -0.0015] \\
\bottomrule
\end{tabular}
\end{table}

\paragraph{Reconstruction and Bellman-residual diagnostics.}
\label{app:a100-paired-results}
\Cref{tab:a100-errors} compares Full and Distortion reconstruction span
and Bellman residual. Complete four-method diagnostics and marginal
intervals are retained in \path{supplementary/results/a100_errors.csv}.
For $K=5$, Distortion reduces ALG/OPT from $1.3810$ to $1.2737$ even
though normalized residual rises from $15.5573$ to $39.5873$.
These diagnostics need not rank policies by online cost. Reconstruction
span is measured against predicted Full values; exact-value distortion
and its cost certificates are evaluated in
\Cref{app:increment-oracle-results}.

\begin{table}[htbp]
\centering\small
\setlength{\tabcolsep}{4pt}
\caption{Mean-predictor diagnostics for Full and approximately 20\% Distortion. Recon. is the per-slot reconstruction span relative to predicted Full values. The supplementary CSV retains all four methods and their ordinary date-bootstrap marginal intervals.}\label{tab:a100-errors}
\begin{tabular}{@{}llrrr@{}}
\toprule
Setting & Method & ALG/OPT & $\eta$/OPT & Recon. \\
\midrule
$K=3$, 15 min & Full & 1.1657 & 7.6815 & 0.0000 \\
$K=3$, 15 min & Distortion & 1.1659 & 9.4318 & 0.8925 \\
$K=4$, 15 min & Full & 1.1524 & 9.7615 & 0.0000 \\
$K=4$, 15 min & Distortion & 1.2043 & 18.2262 & 1.4162 \\
$K=5$, 15 min & Full & 1.3810 & 15.5573 & 0.0000 \\
$K=5$, 15 min & Distortion & 1.2737 & 39.5873 & 1.6323 \\
$K=6$, 15 min & Full & 1.3252 & 14.2701 & 0.0000 \\
$K=6$, 15 min & Distortion & 1.3349 & 44.0859 & 1.9097 \\
$K=4$, 1 min & Full & 1.1788 & 8.8766 & 0.0000 \\
$K=4$, 1 min & Distortion & 1.1786 & 9.9711 & 0.1928 \\
\bottomrule
\end{tabular}
\end{table}

\FloatBarrier
\subsection{Compression across predictors}
\label{app:a100-temporal-results}

\begin{figure}[htbp]
    \centering
    \begin{tikzpicture}
\begin{groupplot}[group style={group size=4 by 1,horizontal sep=0.60cm},paper plot,ylabel={Mean daily ALG/OPT},group/ylabels at=edge left,width=0.28\linewidth,height=4.35cm,xmin=3,xmax=32,xtick={5,10,20,30},ymin=0.96,ymax=3.15,ytick={1,1.5,2,2.5,3},group/yticklabels at=edge left,legend columns=5,legend style={at={(2.42,1.34)},anchor=south}]
\nextgroupplot[title={(a) Static},]
\addplot[color=plotSlate,line width=0.85pt,dash pattern=on 4pt off 2.5pt,no marks,line cap=round,line join=round,forget plot] coordinates {(3.000000,1.1520) (32.000000,1.1520)};

\addplot[color=plotMuted,line width=0.8pt,dash pattern=on 1pt off 2pt,no marks,line cap=round,line join=round,forget plot] coordinates {(3.000000,1.7067) (32.000000,1.7067)};

\addplot[color=plotTeal,line width=0.95pt,mark=triangle*,mark size=2.0pt,mark options={solid,fill=white,line width=0.9pt},line cap=round,line join=round,forget plot] coordinates {(4.761905,1.6394) (10.000000,1.6394) (20.000000,2.6758) (30.000000,1.1533)};

\addplot[color=plotAmber,line width=0.95pt,mark=square*,mark size=1.7pt,mark options={solid,fill=white,line width=0.9pt},line cap=round,line join=round,forget plot] coordinates {(4.761905,1.8866) (10.000000,1.7290) (20.000000,1.7223) (30.000000,1.5667)};

\addplot[color=plotBlue,line width=1.2pt,mark=*,mark size=1.8pt,mark options={solid,fill=plotBlue,draw=white,line width=0.35pt},line cap=round,line join=round,forget plot] coordinates {(4.761905,1.5371) (10.000000,1.6760) (20.000000,1.1877) (30.000000,1.1520)};

\addlegendimage{color=plotBlue,line width=1.2pt,mark=*,mark size=1.8pt,mark options={solid,fill=plotBlue,draw=white,line width=0.35pt}}
\addlegendentry{Distortion}
\addlegendimage{color=plotAmber,line width=0.95pt,mark=square*,mark size=1.7pt,mark options={solid,fill=white,line width=0.9pt}}
\addlegendentry{Random}
\addlegendimage{color=plotTeal,line width=0.95pt,mark=triangle*,mark size=2.0pt,mark options={solid,fill=white,line width=0.9pt}}
\addlegendentry{Geometric}
\addlegendimage{color=plotSlate,line width=0.85pt,dash pattern=on 4pt off 2.5pt,no marks}
\addlegendentry{Full}
\addlegendimage{color=plotMuted,line width=0.8pt,dash pattern=on 1pt off 2pt,no marks}
\addlegendentry{WFA}

\nextgroupplot[title={(b) Mean},]
\addplot[color=plotSlate,line width=0.85pt,dash pattern=on 4pt off 2.5pt,no marks,line cap=round,line join=round,forget plot] coordinates {(3.000000,1.1524) (32.000000,1.1524)};

\addplot[color=plotMuted,line width=0.8pt,dash pattern=on 1pt off 2pt,no marks,line cap=round,line join=round,forget plot] coordinates {(3.000000,1.7067) (32.000000,1.7067)};

\addplot[color=plotTeal,line width=0.95pt,mark=triangle*,mark size=2.0pt,mark options={solid,fill=white,line width=0.9pt},line cap=round,line join=round,forget plot] coordinates {(4.761905,1.7373) (10.000000,1.7373) (20.000000,2.9478) (30.000000,1.1528)};

\addplot[color=plotAmber,line width=0.95pt,mark=square*,mark size=1.7pt,mark options={solid,fill=white,line width=0.9pt},line cap=round,line join=round,forget plot] coordinates {(4.761905,1.9350) (10.000000,1.6483) (20.000000,1.5396) (30.000000,1.4614)};

\addplot[color=plotBlue,line width=1.2pt,mark=*,mark size=1.8pt,mark options={solid,fill=plotBlue,draw=white,line width=0.35pt},line cap=round,line join=round,forget plot] coordinates {(4.761905,1.8126) (10.000000,1.3311) (20.000000,1.2043) (30.000000,1.1527)};

\nextgroupplot[title={(c) MLP},]
\addplot[color=plotSlate,line width=0.85pt,dash pattern=on 4pt off 2.5pt,no marks,line cap=round,line join=round,forget plot] coordinates {(3.000000,1.1464) (32.000000,1.1464)};

\addplot[color=plotMuted,line width=0.8pt,dash pattern=on 1pt off 2pt,no marks,line cap=round,line join=round,forget plot] coordinates {(3.000000,1.7067) (32.000000,1.7067)};

\addplot[color=plotTeal,line width=0.95pt,mark=triangle*,mark size=2.0pt,mark options={solid,fill=white,line width=0.9pt},line cap=round,line join=round,forget plot] coordinates {(4.761905,1.6099) (10.000000,1.8156) (20.000000,2.4448) (30.000000,1.1472)};

\addplot[color=plotAmber,line width=0.95pt,mark=square*,mark size=1.7pt,mark options={solid,fill=white,line width=0.9pt},line cap=round,line join=round,forget plot] coordinates {(4.761905,1.8831) (10.000000,1.6784) (20.000000,1.4528) (30.000000,1.3423)};

\addplot[color=plotBlue,line width=1.2pt,mark=*,mark size=1.8pt,mark options={solid,fill=plotBlue,draw=white,line width=0.35pt},line cap=round,line join=round,forget plot] coordinates {(4.761905,1.5819) (10.000000,1.3078) (20.000000,1.1950) (30.000000,1.1569)};

\nextgroupplot[title={(d) GRU},]
\addplot[color=plotSlate,line width=0.85pt,dash pattern=on 4pt off 2.5pt,no marks,line cap=round,line join=round,forget plot] coordinates {(3.000000,1.1378) (32.000000,1.1378)};

\addplot[color=plotMuted,line width=0.8pt,dash pattern=on 1pt off 2pt,no marks,line cap=round,line join=round,forget plot] coordinates {(3.000000,1.7067) (32.000000,1.7067)};

\addplot[color=plotTeal,line width=0.95pt,mark=triangle*,mark size=2.0pt,mark options={solid,fill=white,line width=0.9pt},line cap=round,line join=round,forget plot] coordinates {(4.761905,1.7197) (10.000000,1.7260) (20.000000,2.8774) (30.000000,1.1510)};

\addplot[color=plotAmber,line width=0.95pt,mark=square*,mark size=1.7pt,mark options={solid,fill=white,line width=0.9pt},line cap=round,line join=round,forget plot] coordinates {(4.761905,1.8932) (10.000000,1.6117) (20.000000,1.4633) (30.000000,1.3489)};

\addplot[color=plotBlue,line width=1.2pt,mark=*,mark size=1.8pt,mark options={solid,fill=plotBlue,draw=white,line width=0.35pt},line cap=round,line join=round,forget plot] coordinates {(4.761905,1.7995) (10.000000,1.3441) (20.000000,1.1630) (30.000000,1.1460)};

\end{groupplot}
\node[anchor=north,text=plotInk,font=\fontfamily{phv}\selectfont\footnotesize] at ($(group c1r1.south west)!0.5!(group c4r1.south east)+(0,-19pt)$) {Predicted values retained (\%)};
\end{tikzpicture}
    \caption{Compression across predictors for $K=4$ at 15 minutes.
    Panels share a vertical scale and the styles in \Cref{fig:a100-budget}.
    Points show all four tested compressed budgets; every method recovers
    Full at $m=n$, omitted here.}
    \label{fig:a100-temporal}
\end{figure}

\Cref{tab:a100-temporal-full,tab:a100-temporal-20,tab:a100-temporal-paired}
separate Full-predictor quality from landmark selection.
The GRU has the lowest Full cost point estimate, but its paired intervals
against the mean table, MLP and static potential all cross zero.
At the 20\% budget, its Distortion--Random and Distortion--Geometric
intervals are wholly negative. Thus the evidence for landmark selection
is clearer than the evidence for improving the predictor itself.
All training and landmark seeds are averaged; none is selected on test cost.

\begin{table}[htbp]
\centering\small
\setlength{\tabcolsep}{4pt}
\caption{Full-predictor costs and value errors on the 365 evaluation dates. RMSE uses centered exact values; value-error span and Bellman residual are distinct diagnostics.}\label{tab:a100-temporal-full}
\begin{tabular}{@{}lrrrrr@{}}
\toprule
Predictor & Full & 95\% block CI & RMSE & Error span & $\eta$/OPT \\
\midrule
Static & 1.1520 & [1.1202, 1.1863] & 1.0403 & 4.3440 & 12.4987 \\
Mean & 1.1524 & [1.1214, 1.1865] & 0.7726 & 2.9219 & 9.7615 \\
MLP & 1.1464 & [1.1207, 1.1736] & 0.7856 & 3.2413 & 15.5222 \\
GRU & 1.1378 & [1.1139, 1.1632] & 0.7649 & 3.0593 & 14.5484 \\
\bottomrule
\end{tabular}
\end{table}

\begin{table}[htbp]
\centering\small
\setlength{\tabcolsep}{4pt}
\caption{Temporal-family compression retaining $m=42$ of 210 predicted values. Neural results average the crossed training and landmark seeds within each date. Bold marks the lowest ALG/OPT within each predictor, without implying statistical significance.}\label{tab:a100-temporal-20}
\begin{tabular}{@{}llrrrr@{}}
\toprule
Predictor & Method & ALG/OPT & 95\% block CI & Ret. & Recon. \\
\midrule
Static & Distortion & \textbf{1.1877} & [1.1576, 1.2201] & 93.57\% & 1.5840 \\
Static & Random & 1.7223 & [1.6565, 1.7925] & -2.81\% & 3.9430 \\
Static & Geom. & 2.6758 & [2.5404, 2.8136] & -174.69\% & 3.2623 \\
Mean & Distortion & \textbf{1.2043} & [1.1826, 1.2259] & 90.65\% & 1.4162 \\
Mean & Random & 1.5396 & [1.4922, 1.5905] & 30.15\% & 3.9819 \\
Mean & Geom. & 2.9478 & [2.8428, 3.0563] & -223.90\% & 3.0379 \\
MLP & Distortion & \textbf{1.1950} & [1.1752, 1.2156] & 91.33\% & 1.5246 \\
MLP & Random & 1.4528 & [1.4087, 1.5016] & 45.31\% & 3.9612 \\
MLP & Geom. & 2.4448 & [2.3552, 2.5323] & -131.72\% & 2.9447 \\
GRU & Distortion & \textbf{1.1630} & [1.1386, 1.1884] & 95.57\% & 1.5136 \\
GRU & Random & 1.4633 & [1.4208, 1.5119] & 42.78\% & 3.9983 \\
GRU & Geom. & 2.8774 & [2.7675, 2.9905] & -205.76\% & 2.9441 \\
\bottomrule
\end{tabular}
\end{table}

\begin{table}[htbp]
\centering\small
\setlength{\tabcolsep}{4pt}
\caption{Paired temporal-family comparisons. Intervals use seven-day block resampling and condition on the fitted data and seeds.}\label{tab:a100-temporal-paired}
\begin{tabular}{@{}lrr@{}}
\toprule
Comparison & Difference & 95\% block CI \\
\midrule
GRU Full - Mean Full & -0.01463 & [-0.0446, +0.0127] \\
GRU Full - MLP Full & -0.00860 & [-0.0210, +0.0034] \\
GRU Full - Static Full & -0.01417 & [-0.0442, +0.0134] \\
MLP Full - Mean Full & -0.00604 & [-0.0328, +0.0182] \\
Static Full - Mean Full & -0.00046 & [-0.0019, +0.0009] \\
Static: Distortion 20\% - Full & +0.03569 & [+0.0281, +0.0436] \\
Mean: Distortion 20\% - Full & +0.05182 & [+0.0302, +0.0730] \\
MLP: Distortion 20\% - Full & +0.04859 & [+0.0273, +0.0695] \\
GRU: Distortion 20\% - Full & +0.02522 & [+0.0100, +0.0420] \\
GRU: Distortion 20\% - Random 20\% & -0.30032 & [-0.3495, -0.2571] \\
GRU: Distortion 20\% - Geom. 20\% & -1.71436 & [-1.8271, -1.6035] \\
\bottomrule
\end{tabular}
\end{table}

Static potential achieves a mean daily ALG/OPT close to the mean table's
in this setting, despite removing its time variation and having larger
centered RMSE and value-error span.
The GRU reduces centered MSE by 1.97\% relative to the mean table,
but its value-error span and Bellman residual increase. Training minimizes
MSE, so improvement in all error measures is neither enforced nor observed.
The mean-predictor rows are shared with the preceding comparisons.

\subsection{Exact future-value compression and theoretical certificates}
\label{app:increment-oracle-results}

\Cref{tab:increment-oracle} holds the landmark sets fixed while replacing
their predicted dual values by exact evaluation-day values.
Oracle Full equals OPT; using exact values only at landmarks can still incur excess.
For $K=6$ at 15 minutes, Oracle Distortion at approximately 20\% has cost
ratio $1.0854$, with 95\% block interval $[1.0738,1.0970]$.
The corresponding one-minute ratio is $1.0014$. For $K=4$, the per-slot
exact distortion is $0.1423$ at one minute versus $1.3849$ at 15 minutes,
and oracle ratios are $1.0015$ and $1.0077$. The resolution contrast thus
persists under exact dual-value inputs, although this comparison does not
separate the effects of demand, horizon, or the fitted landmark sets.

\begin{table}[htbp]
\centering\small
\setlength{\tabcolsep}{4pt}
\caption{Frozen-landmark exact-value diagnostic at approximately 20\%. Oracle Full equals OPT in every setting. Intervals use seven-day blocks. The last column replaces inputs at fixed landmarks; it is not an independent causal component of total loss.}\label{tab:increment-oracle}
\begin{tabular}{@{}lrr@{}}
\toprule
Setting & Oracle Distortion [95\% CI] & Predicted minus oracle [95\% CI] \\
\midrule
$K=3$, 1 min & 1.0017 [1.0013, 1.0022] & +0.2678 [+0.2524, +0.2844] \\
$K=3$, 15 min & 1.0016 [1.0007, 1.0026] & +0.1643 [+0.1404, +0.1889] \\
$K=4$, 1 min & 1.0015 [1.0009, 1.0022] & +0.1771 [+0.1627, +0.1939] \\
$K=4$, 15 min & 1.0077 [1.0050, 1.0108] & +0.1966 [+0.1760, +0.2191] \\
$K=5$, 1 min & 1.0005 [1.0001, 1.0011] & +0.0494 [+0.0420, +0.0572] \\
$K=5$, 15 min & 1.0030 [1.0014, 1.0050] & +0.2707 [+0.2307, +0.3187] \\
$K=6$, 1 min & 1.0014 [1.0001, 1.0033] & +0.0534 [+0.0441, +0.0634] \\
$K=6$, 15 min & 1.0854 [1.0738, 1.0970] & +0.2495 [+0.2077, +0.2897] \\
\bottomrule
\end{tabular}
\end{table}

The $K=5$ same-state diagnostic in \Cref{tab:increment-actions} evaluates
action changes on the states visited by predicted Full. At 15 minutes,
the net exact-continuation difference is $-0.0832$, with interval
$[-0.1311,-0.0360]$, supporting the explanation that compression can alter
decisions induced by an imperfect Full predictor beneficially. Improved
actions are not more frequent than worsened actions; their magnitudes
also matter. At one minute, actions differ only about 0.01 times per day
and the net difference interval crosses zero. This diagnostic is
trajectory-conditioned, not a causal or additive decomposition of total cost.

\begin{table}[htbp]
\centering\small
\setlength{\tabcolsep}{4pt}
\caption{$K=5$ actions compared at the same states visited by predicted Full. Counts are daily averages; the final column averages the daily normalized sum of exact-continuation action differences. This is not the cost difference between the two policies\textquotesingle\ actual trajectories.}\label{tab:increment-actions}
\begin{tabular}{@{}lrrrr@{}}
\toprule
Minutes & Changed & Improved & Worsened & $\sum\Delta Q/\mathrm{OPT}$ [95\% CI] \\
\midrule
1 min & 0.010 & 0.004 & 0.002 & -0.000147 [-0.000493, +0.000110] \\
15 min & 0.992 & 0.336 & 0.348 & -0.0832 [-0.1311, -0.0360] \\
\bottomrule
\end{tabular}
\end{table}

\begin{table}[htbp]
\centering\small
\setlength{\tabcolsep}{4pt}
\caption{Exact-value distortion and certificates for Distortion at approximately 20\%. Normalize within each date before averaging. The final three columns are cost-ratio upper bounds, not attained costs; \Cref{app:oracle-protocol} defines $S_0,B_{\rm adj},B_{\rm act}$.}\label{tab:increment-bounds}
\begin{tabular}{@{}lrrrrrr@{}}
\toprule
Setting & $\bar e$ & $S_0/\mathrm{OPT}$ & $R_{\rm or}$ & $1+\eta/\mathrm{OPT}$ & $1+B_{\rm adj}/\mathrm{OPT}$ & $1+B_{\rm act}/\mathrm{OPT}$ \\
\midrule
$K=3$, 1 min & 0.1848 & 1.450 & 1.0017 & 2.043 & 3.900 & 2.403 \\
$K=3$, 15 min & 0.7715 & 8.138 & 1.0016 & 5.207 & 17.238 & 9.099 \\
$K=4$, 1 min & 0.1423 & 3.506 & 1.0015 & 3.230 & 8.010 & 4.408 \\
$K=4$, 15 min & 1.3849 & 19.542 & 1.0077 & 16.962 & 40.004 & 20.458 \\
$K=5$, 1 min & 0.1725 & 9.965 & 1.0005 & 6.581 & 20.929 & 10.815 \\
$K=5$, 15 min & 1.6489 & 41.135 & 1.0030 & 35.985 & 82.980 & 41.838 \\
$K=6$, 1 min & 0.2214 & 20.694 & 1.0014 & 10.512 & 42.384 & 21.406 \\
$K=6$, 15 min & 1.9219 & 50.474 & 1.0854 & 40.255 & 101.540 & 51.058 \\
\bottomrule
\end{tabular}
\end{table}

\Cref{app:oracle-protocol} defines the distortion indexing and recorded
certificates. The bounds in
\Cref{tab:increment-bounds} hold separately for every recorded seed/date,
but are generally much larger than realized excess. For example,
$K=4$ at one minute has actual oracle ratio $1.0015$, versus residual-based
bound $3.230$ and active-round distortion bound $4.408$.
These checks connect the implementation to the upper bounds, without
establishing empirical tightness.

\Cref{fig:a100-complete-budgets} retains all eight Distortion budgets in all
eight settings. Oracle compression cost need not decrease monotonically:
for $K=5$ at 15 minutes, its ratio is $1.0007$ at $76/252$ retained
values and $1.0044$ at $151/252$, before reaching OPT at Full.

\subsection{Training-data and date sensitivity}
\label{app:a100-sensitivity}

For $K=4$, 15-minute requests and $m=42$, varying fitting days along
one nested sampling path gives nonmonotone Distortion costs; the
731-day result is worse than several smaller samples. Calibration still
uses 639 days in every condition, so this is a sensitivity check rather
than empirical sample-complexity scaling. The protocol is in
\Cref{app:citibike-protocol}, and the complete results are retained in
\path{supplementary/results/a100_data_sensitivity.csv}.

\Cref{tab:date-sensitivity} combines the eight mean-predictor settings.
Excluding the first ten evaluation dates leaves the observed costs close
to the full-year values.

\begin{table}[htbp]
\centering\small
\setlength{\tabcolsep}{4pt}
\caption{Mean daily ALG/OPT at approximately 20\% after excluding the first ten 2025 dates. All eight settings use the same fitted models and landmarks as their full-year evaluations.}\label{tab:date-sensitivity}
\begin{tabular}{@{}lrr@{}}
\toprule
Setting & 365-day Distortion & 355-day Distortion \\
\midrule
$K=3$, 1 min & 1.2695 & 1.2707 \\
$K=3$, 15 min & 1.1659 & 1.1688 \\
$K=4$, 1 min & 1.1786 & 1.1797 \\
$K=4$, 15 min & 1.2043 & 1.2054 \\
$K=5$, 1 min & 1.0499 & 1.0494 \\
$K=5$, 15 min & 1.2737 & 1.2743 \\
$K=6$, 1 min & 1.0548 & 1.0539 \\
$K=6$, 15 min & 1.3349 & 1.3359 \\
\bottomrule
\end{tabular}
\end{table}

\FloatBarrier

\subsection{Synthetic fixed-table learning and cost certificates}
\label{app:synthetic-summary}
\label{app:certificate-results}

\Cref{tab:pilot_results} reports the two fixed-table pilots under the
data and fitting protocol in \Cref{app:data-generation,app:implementation-details}.

\begin{table}[htbp]
    \centering
    \caption{Test total episode excess $\mathrm{ALG}-\mathrm{OPT}$:
    mean $\pm$ sample standard deviation of the three seed-level means
    (not confidence intervals). Oracle full-state excess is below
    $10^{-15}$ in magnitude and displayed as zero. Bold marks the lowest
    non-oracle mean in each column, including ties, without implying
    statistical significance.}
    \label{tab:pilot_results}
    \small
    \begin{tabular}{@{}lcc@{}}
        \toprule
        Method & Localized & Switching \\
        \midrule
        Uniform random pairs (exact average)
            & $0.40154\pm0.01142$ & $0.43569\pm0.00668$ \\
        Geometric pair
            & $0.17838\pm0.01720$ & $0.39766\pm0.01932$ \\
        Learned distortion pair
            & $\textbf{0.04810}\pm0.00666$ & $0.13927\pm0.01661$ \\
        Learned validation pair
            & $\textbf{0.04810}\pm0.00666$ & $0.13927\pm0.01661$ \\
        Learned singleton
            & $\textbf{0.04810}\pm0.00666$ & $0.41436\pm0.00850$ \\
        Full-state fixed table
            & $\textbf{0.04810}\pm0.00666$ & $\textbf{0.04031}\pm0.00882$ \\
        \midrule
        Oracle geometric pair
            & $0.17838\pm0.01720$ & $0.43890\pm0.02160$ \\
        Oracle learned-validation pair
            & $0.04810\pm0.00666$ & $0.15341\pm0.01725$ \\
        Oracle full state & $0$ & $0$ \\
        \bottomrule
    \end{tabular}
\end{table}

The localized singleton matches the learned pair and fitted Full table;
this workload does not establish a benefit from additional landmarks.
Switching demand exhibits a nontrivial tradeoff between pair and Full prediction.

\Cref{tab:appendix-certificates} gives the two offline certificates in
\eqref{eq:recorded-certificates} for comparison with these observed costs.
The bounds can be substantially larger than realized excess. For
localized learned-validation pairs, mean excess is $0.04810$,
whereas the mean distortion and local certificates are $1.51201$
and $0.49766$, respectively. The local certificate uses only optimal
successors from the actual online states, which can explain a sharper
bound than the all-state distortion in this setting. Neither bound
universally dominates the other.

\begin{table}[htbp]
\centering
\small
\setlength{\tabcolsep}{5pt}
\caption{Mean offline certificates over the three seed-level test means. Observed excess and seed standard deviations are in \Cref{tab:pilot_results}; $C_{\kappa}$ and $C_{\rm loc}$ are defined in \eqref{eq:recorded-certificates}.}
\label{tab:appendix-certificates}
\begin{tabular}{@{}lrrrr@{}}
\toprule
& \multicolumn{2}{c}{Localized} & \multicolumn{2}{c}{Switching} \\
\cmidrule(lr){2-3}\cmidrule(l){4-5}
Method & $C_{\kappa}$ & $C_{\rm loc}$ & $C_{\kappa}$ & $C_{\rm loc}$ \\
\midrule
Geometric pair & 3.05480 & 1.72344 & 3.94197 & 4.21647 \\
Learned distortion pair & 1.15931 & 0.60406 & 2.13124 & 1.79679 \\
Learned validation pair & 1.51201 & 0.49766 & 2.13124 & 1.79679 \\
Learned singleton & 1.30334 & 0.28874 & 4.05037 & 3.42253 \\
Full-state fixed table & 0.94929 & 0.94929 & 0.98307 & 0.98307 \\
\midrule
Oracle geometric pair & 2.16176 & 0.83040 & 3.00155 & 3.27604 \\
Oracle learned-validation pair & 1.30308 & 0.28874 & 1.67918 & 1.34505 \\
Oracle full state & 0 & 0 & 0 & 0 \\
\bottomrule
\end{tabular}
\end{table}

For a full-state fitted table, exact reconstruction distortion is
zero and every optimal successor is a landmark. Its two recorded
certificates therefore agree and measure prediction error. For the
oracle full-state policy both vanish, and the recorded excess is
zero up to roundoff. In contrast, oracle compressed values can yield
larger excess than a fitted compressed table: switching mean excess
is $0.15341$ for the oracle learned-validation pair and $0.13927$
for its fitted counterpart. Exact dual values do not remove
the envelope's reconstruction bias, and the upper bounds do not
assert monotonicity of realized cost in landmark prediction error.

\FloatBarrier
\subsection{Implementation checks and relation to learning guarantees}
\label{app:empirical-scope}

Archived checks cover DP against exhaustive enumeration, causal predictor
inputs, Lipschitz outputs, terminal zero, landmark interpolation, Full
recovery, LP loss, and per-episode cost certificates. The theoretical
constructions were also checked numerically. These implementation checks
passed in the archived runs; detailed configurations and outcomes are
retained with the supplementary records.

The synthetic pilots implement the fixed-table LP and joint selection
directly. In Citi Bike, the practical Distortion
selector omits the fitted-table error term, the mean predictor does not
minimize span loss, and contextual neural predictors minimize centered MSE.
Those runs therefore do not instantiate the joint-ERM PAC guarantee.

\end{document}